\documentclass{article}
\usepackage{latexml}
\iflatexml
  \usepackage{times,natbib}
\else
  \usepackage{iclr2027_conference,times}
\fi

\usepackage{amsmath,amsfonts,bm}

\def\eqref#1{equation~\ref{#1}}

\def\1{\bm{1}}
\newcommand{\train}{\mathcal{D}}

\newcommand{\test}{\mathcal{D_{\mathrm{test}}}}

\DeclareMathAlphabet{\mathsfit}{\encodingdefault}{\sfdefault}{m}{sl}
\SetMathAlphabet{\mathsfit}{bold}{\encodingdefault}{\sfdefault}{bx}{n}

\usepackage{hyperref}
\usepackage{url}
\usepackage{booktabs}
\usepackage{amsfonts}
\usepackage{nicefrac}
\usepackage{microtype}
\usepackage{xcolor}
\usepackage{graphicx}
\usepackage{subcaption}
\usepackage{multirow}
\usepackage{xspace}
\usepackage{enumitem}
\usepackage{amsmath}
\usepackage{tabularx}
\usepackage{makecell}
\usepackage{amsthm}
\usepackage{amssymb}
\usepackage{commands}
\usepackage{thmtools}
\usepackage[labelfont=bf,textfont=normal]{caption}
\usepackage{array}
\usepackage{wrapfig}
\usepackage{float}

\title{Right Direction, Wrong Step: Geometric Analysis of Finite-Step Failure in Looped Transformers}

\iflatexml\else\iclrfinalcopy\fi
\usepackage{authblk}
\author[1]{Zhihao Guo}
\author[2]{Zonghan Wu}
\author[3]{Haizhou Du}
\author[1]{Huan Huo\thanks{Corresponding author: \href{mailto:huan.huo@uts.edu.au}{\nolinkurl{huan.huo@uts.edu.au}}.}}
\author[2]{Yilei Shao\thanks{Corresponding author: \href{mailto:yileishao@sem.ecnu.edu.cn}{\nolinkurl{yileishao@sem.ecnu.edu.cn}}.}}
\author[4]{Athanasios V. Vasilakos}
\author[2,5]{Qingsong Wen}
\affil[1]{University of Technology Sydney, Australia}
\affil[2]{East China Normal University, China}
\affil[3]{Shanghai University of Electric Power, China}
\affil[4]{University of Agder, Norway}
\affil[5]{Squirrel AI Learning, USA}
\iflatexml
  \affil[2,5]{East China Normal University, China; Squirrel AI Learning, USA}
\fi
\date{}
\makeatletter
\iflatexml
\else
  \iclrfinalcopy
  \gdef\@author{%
    {\fontsize{10.5}{14}\selectfont
      \textbf{Zhihao Guo}\textsuperscript{1}\qquad
      \textbf{Zonghan Wu}\textsuperscript{2}\qquad
      \textbf{Haizhou Du}\textsuperscript{3}\qquad
      \textbf{Huan Huo}\textsuperscript{1, *}\qquad
      \textbf{Yilei Shao}\textsuperscript{2, *}\par
      \textbf{Athanasios V. Vasilakos}\textsuperscript{4}\qquad
      \textbf{Qingsong Wen}\textsuperscript{2, 5}\par}
    \vspace{0.7em}
    {\fontsize{9}{11.5}\selectfont
      \textsuperscript{1}University of Technology Sydney, Australia\quad
      \textsuperscript{2}East China Normal University, China\par
      \textsuperscript{3}Shanghai University of Electric Power, China\par
      \textsuperscript{4}University of Agder, Norway\quad
      \textsuperscript{5}Squirrel AI Learning, USA\par}
  }
  \hypersetup{
    pdftitle={\@title},
    pdfauthor={Zhihao Guo, Zonghan Wu, Haizhou Du, Huan Huo, Yilei Shao, Athanasios V. Vasilakos, Qingsong Wen}
  }
  \renewcommand{\maketitle}{%
    \begingroup
      \centering
      \setlength{\parskip}{0pt}%
      \setlength{\parindent}{0pt}%
      {\normalfont\fontsize{17}{21}\selectfont\bfseries
        \hyphenpenalty=10000\exhyphenpenalty=10000
        \@title\par}
      \vspace{1.1em}
      {\normalfont\@author\par}
    \endgroup
    \begingroup
      \renewcommand{\thefootnote}{\fnsymbol{footnote}}%
      \footnotetext[1]{Correspondence: huan.huo@uts.edu.au
        \& yileishao@sem.ecnu.edu.cn}%
    \endgroup
    \vspace{1.0em}
    \fancyhead{}%
    \renewcommand{\headrulewidth}{0pt}%
  }

\fi
\makeatother
\begin{document}

\maketitle

\begin{abstract}
Looped Transformers offer a parameter-efficient route to test-time
scaling by reusing shared layers for iterative latent reasoning.
However, additional iterations can reduce support for a reference answer, leaving unclear
whether an update's direction is locally unhelpful or its full
displacement moves too far.
We study this distinction by analysing reference utility, which measures
this support, along the model's own update direction, varying the fraction of the proposed
displacement supplied to the readout.
This reveals finite-step failures in which a locally improving direction
produces a harmful full update.
A pathwise curvature decomposition characterises how initial progress
is lost, while a local quadratic model predicts full-step gains
and useful step scales.
Bounds based on accumulated curvature variation characterise the
approximation error of these predictions.
Experiments across two model families reveal this separation on
mathematical and commonsense tasks. A fixed quarter step recovers
positive gains in reference utility for 72.2--83.2\% of selected failures across
four settings.
These findings identify a mismatch between update direction and step
scale as a mechanism of lost progress, explaining how some harmful updates
retain useful computation.
\end{abstract}

\section{Introduction}
\label{sec:intro}

Looped Transformers models repeatedly apply shared Transformer layers,
making computational depth an inference-time resource~\citep{DBLP:conf/icml/GiannouRS0LP23}.  Following
Universal Transformers~\citep{DBLP:conf/iclr/DehghaniGVUK19}, Huginn and
Ouro bring recurrent computation to language-model
pretraining~\citep{NeurIPS2025_70913,zhu_scaling_2026}, while
pretrained models acquire recurrence through parameter-sharing
conversions~\citep{DBLP:conf/iclr/BaeFHJ0S25} or retrofitting with
recurrence curricula~\citep{mcleish2025retrofitting}.
Expressivity analyses~\citep{DBLP:conf/icml/GiannouRS0LP23,ICML2025_46034}
and studies of in-context algorithm
learning~\citep{DBLP:conf/iclr/Yang0NP24,DBLP:conf/icml/GatmirySRJK24}
explain how repeated blocks support iterative computation.  Empirical
work connects recurrent depth to reasoning and generalisation across
problem difficulty or
length~\citep{DBLP:conf/nips/BansalSBEHGG22,DBLP:conf/iclr/FanDRL25,DBLP:conf/iclr/SaunshiDLKR25}.
Connecting this extra computation to task progress requires understanding
its learned updates.

\newcommand{\introConceptFigure}{%
\begin{wrapfigure}[13]{r}{0.43\textwidth}
  \centering
  \vspace{-5mm}
  \includegraphics[width=\linewidth]{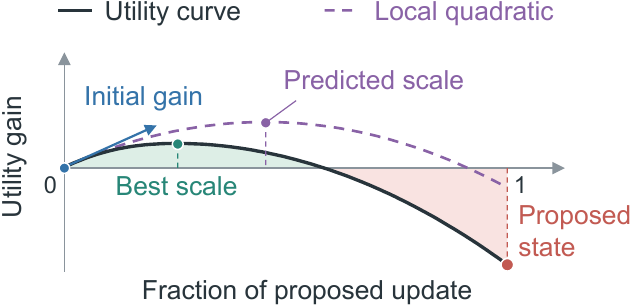}
  \captionsetup{font=footnotesize,skip=1pt}
  \caption{\textbf{A shorter step can recover progress.} Utility gain measures the change in support for the reference answer relative to the current state (0). The proposed state is at (1).}
  \label{fig:intro-concept}
\end{wrapfigure}%
}
\ifdefined\arxivVersion\else\introConceptFigure\fi

Recent work~\citep{yang_stabilizing_2026} brings the fixed-point stability
perspective developed for deep equilibrium
models~\citep{DBLP:conf/nips/BaiKK19,DBLP:conf/icml/BaiKK21} to looped
language models, combining Jacobian regularisation and random loop
sampling to improve test-time scaling.  Input-dependent depth
selection~\citep{DBLP:conf/iclr/ElbayadGGA20} and time- and
step-conditioned updates~\citep{ICLR2026_96305} adapt computation to
different inputs and budgets, while learned correctness signals guide
latent optimisation~\citep{du2026latent}. The effect of a recurrent
update on support for a reference answer depends on both its direction
and magnitude, leaving unclear whether scaling down the same update
could strengthen that support.

\ifdefined\arxivVersion\introConceptFigure\fi
\vspace{4mm}
Along the segment between the current and proposed states, support for
the reference answer can have a positive initial slope yet end below
its starting level, producing the \emph{finite-step failure}
illustrated in Figure~\ref{fig:intro-concept}.  A sufficiently negative
accumulated curvature correction can outweigh the first-order gain.
A shorter displacement along the same direction can then reach an
intermediate state that supports the reference answer more strongly
than both the current and proposed states.
The initial slope and curvature suggest how far to move,
while variation in curvature over the segment determines how faithfully
this local prediction describes the change in support along the update.

We investigate this separation through pathwise analysis and
fixed-direction interventions, using a reference utility that measures
support for the reference answer. Finite-step
failures occur in all nine model--task conditions, including those with
positive mean gain.  Local quadratic geometry predicts full-step gain
and useful update scales, connecting the initial direction to
the progress available within the displacement.  A fixed quarter
step restores positive utility in 72.2--83.2\% of selected failures
across four settings, and quadratic selection improves recovery and
mean gain on an additional full test set.  To characterise the error
of the local predictions, we use bounds based on integrated
curvature variation along the same path.  Interventions measure reference
utility after proposal computation and before further recurrence.
\emph{Our contributions are}:
\begin{enumerate}[nosep,leftmargin=1.5em]
  \item \textbf{Finite-step failure mechanism.}
  Across nine model--task conditions, we identify harmful recurrent updates
  whose directions remain locally improving, and characterise their loss
  of progress through an exact pathwise curvature decomposition.

  \item \textbf{Local geometry of useful update scales.}
  We use a local quadratic model of reference utility to predict useful
  scales along observed recurrent updates, and use integrated curvature
  variation to bound scale error and regret relative to the continuous optimum.

  \item \textbf{Progress recovery by step contraction.}
  We demonstrate recovery of finite-step failures through fixed
  and curvature-selected step contractions, and use endpoint-oracle comparisons
  to quantify the additional value of intermediate scales.
\end{enumerate}

\section{Related Work}
\label{sec:relate}

\textbf{Recurrent Computation and Overthinking.}
Shared recurrent computation supports depth and length
generalisation~\citep{DBLP:conf/nips/SchwarzschildBGHVGG21,DBLP:conf/nips/BansalSBEHGG22,DBLP:conf/iclr/FanDRL25,DBLP:conf/iclr/SaunshiDLKR25},
while adaptive computation methods learn how much processing to allocate
to each input~\citep{graves2016act,banino2021pondernet,DBLP:conf/iclr/ElbayadGGA20}.
The benefits of additional computation, however, need not persist
with increasing depth. Shallow-Deep Networks identify destructive
overthinking when deeper processing turns correct predictions into
errors~\citep{DBLP:conf/icml/KayaHD19}, and Think-at-Hard observes
analogous reversals during latent
iterations~\citep{fu2026thinkatharddynamicloopedtransformers}.
To understand how additional recurrence loses task progress, we
examine whether a harmful update follows an unhelpful direction or
moves too far along a locally improving one.

\textbf{Stability and Step Control.}
Building on deep equilibrium models~\citep{DBLP:conf/nips/BaiKK19,DBLP:conf/icml/BaiKK21},
STARS~\citep{yang_stabilizing_2026} stabilises recurrence through
Jacobian regularisation and random loop sampling.
This dynamical perspective also informs when to stop recurrence:
remaining path length and decoder margin yield conditions for answer
stability~\citep{viakhirev2026thinkshallowsolvedeep}, while differences
between successive updates provide an early-exit
signal~\citep{pappone2025twoscalelatentdynamicsrecurrentdepth}.
Alongside stopping decisions, recurrent models regulate how individual
updates are applied through residual-dependent
damping~\citep{movahedi2026fixedpointreasonersstableadaptive},
input-dependent channel-wise gating~\citep{park2026loopusrecastingpretrainedllms},
and time- and step-conditioning~\citep{ICLR2026_96305}.
These approaches highlight the role of update control in recurrent computation. To understand why a smaller step can recover task progress, we analyse how reference utility changes along a proposed update while holding its direction and readout fixed.

\textbf{Task Feedback and Update Geometry.}
Task feedback~\citep{du2026latent,ICLR2026_31260,DBLP:conf/aaai/WangWYBGDLF26}
and representation geometry~\citep{NeurIPS2025_52493} have been used
to guide internal-state refinement.
Recent analysis of Tiny Recursive Models derives reference-token
loss derivatives along an interpolation between pre- and post-update
distributions~\citep{asadulaev2026latentreasoningtrmssecretly}.
Although our state interpolation induces the same distribution path
under affine logit readouts, we extend the analysis to how accumulated
curvature can reverse local utility improvement over a full update,
and test whether shortening that update increases reference utility
above its pre-update value.

\section{Method}
\label{sec:method}

Each application of a looped Transformer's shared block determines
an update direction and a displacement magnitude. For a frozen model
and a fixed readout, we define reference utility along the proposed
update in \S~\ref{sec:task-utility}. We then use pathwise curvature
to explain how local improvement can become full-step harm
(\S~\ref{sec:finite-progress}), and analyse useful step scales
through a local quadratic approximation and its error bounds
(\S~\ref{sec:optimal-step}). Finally, fixed-direction interventions
test whether shorter displacements recover immediate reference utility
(\S~\ref{sec:contraction}).

\subsection{Recurrent Updates and Task Utility}
\label{sec:task-utility}

Looped language models repeatedly apply a shared
transformation~\citep{NeurIPS2025_70913,zhu_scaling_2026}.
For a frozen model and a fixed input, write the recurrence as $H_{t+1}=F(H_t)$,
where $H_t\in\mathcal H$ is the hidden state after $t=0,1,\ldots$
iterations, $H_0$ is the initial state, and $\mathcal H$ is a
finite-dimensional real inner product space.  The map
$F:\mathcal H\to\mathcal H$ is fixed.  We analyse a consecutive pair
$H=H_t$, $H^+=H_{t+1}$ and its displacement
\begin{equation}
  \label{eq:recurrent-update}
  H^+:=F(H),
  \qquad
  D:=H^+-H.
\end{equation}
The fixed output layers, or readout, map hidden states to output
scores.  A scalar task utility $U:\Omega\to\mathbb R$ evaluates
these scores against a fixed reference, where
$\Omega\subseteq\mathcal H$ is an open set containing the segment from
$H$ to $H^+$.  An increase in $U$ defines task progress.  The state
representation, readout, and reference remain fixed throughout each
comparison, and Appendix~\ref{app:utilities} defines the utilities.

To isolate the effect of displacement magnitude, we preserve the
direction proposed by the recurrent model. For a nonzero update,
every displacement in the same direction is a positive scalar
multiple of that update. We define the interpolated state
and its utility gain at a dimensionless step scale $\alpha$:
\begin{equation}
  \label{eq:utility-path}
  \gamma(\alpha):=H+\alpha D,
  \qquad
  \phi(\alpha):=U(\gamma(\alpha))-U(H),
  \qquad \alpha\in[0,1].
\end{equation}
Thus $\phi(0)=0$ and $\phi(1)=\Delta U:=U(H^+)-U(H)$.
An additional recurrent iteration takes the full displacement.  For $D\neq0$,
intermediate scales retain its direction while varying the state supplied to the
readout, after both endpoints have been computed.
We relate this path to its local
derivatives under the following assumption.

\begin{assumption}[Readout Regularity]
  \label{ass:readout-regularity}
  The utility $U$ is twice continuously differentiable on the open set
  $\Omega$ containing $\{H+\alpha D:\alpha\in[0,1]\}$.
\end{assumption}

The evaluated readouts consist of smooth operations under fixed
attention masks, so their real-valued utilities satisfy this
regularity (Appendix~\ref{app:state-boundaries}).
Since the proposed displacement is held fixed, no smoothness
assumption on the recurrent map $F$ is required.
Assumption~\ref{ass:readout-regularity} gives $\phi\in C^2(\mathcal O)$
on an open neighbourhood $\mathcal O$ of $[0,1]$.
The initial slope $A$ and directional curvature $2Q$ are
\begin{equation}
  \label{eq:directional-coefficients}
  A:=\phi'(0)=\bigl\langle\nabla U(H),D\bigr\rangle_{\mathcal{H}},
  \qquad
  Q:=\frac{1}{2}\phi''(0)
    =\frac{1}{2}\bigl\langle D,\nabla^2 U(H)[D]\bigr\rangle_{\mathcal{H}}.
\end{equation}
These coefficients measure the utility response to the shared block's
own displacement (derivation in Appendix~\ref{app:directional-derivatives}).
The direction is supplied by learned recurrence, and the utility
derivatives assess its alignment with the task.

The sign of $A$ describes the initial effect of following the update,
whereas $\Delta U$ measures the result of the full displacement.
We use their joint signs to distinguish two kinds of harmful update.

\begin{definition}[Finite-Step Failure]
  \label{def:finite-failure}
  An update is progressing if $\phi(1)>0$ and neutral if
  $\phi(1)=0$.  A harmful update ($\phi(1)<0$) is a
  directional failure if $A\leq0$ and a finite-step failure
  if $A>0$.
\end{definition}

In a finite-step failure, the shared block proposes a direction that
initially improves utility, but taking its full displacement lowers
utility at the next recurrent state.  The curvature accumulated within
this update determines how the initial improvement is lost.

\subsection{Direction--Step Compatibility}
\label{sec:finite-progress}

A scale is compatible with task progress when $\phi(\alpha)\geq0$.
The integral form of Taylor's theorem expresses this gain as the initial
linear gain plus a pathwise curvature correction.

\begin{proposition}[Pathwise Progress Decomposition]
  \label{prop:pathwise-progress}
  Let $\mathcal{O}\subseteq\mathbb{R}$ be open with
  $[0,1]\subseteq\mathcal{O}$.  If $\phi\in C^2(\mathcal{O})$ and
  $\phi(0)=0$, then for every $\alpha\in[0,1]$,
  \begin{equation}
    \label{eq:pathwise-progress}
    \phi(\alpha)
      =\alpha A+\int_0^\alpha(\alpha-s)\phi''(s)\,ds.
  \end{equation}
  In particular, a finite-step failure satisfies
  $\int_0^1(1-s)\phi''(s)\,ds<-A<0$.
\end{proposition}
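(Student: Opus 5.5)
The identity \eqref{eq:pathwise-progress} is the integral-remainder form of Taylor's theorem at $0$, applied to $\phi$ on $[0,\alpha]$. I would derive it with two applications of the fundamental theorem of calculus followed by an exchange of integration order; this avoids quoting a Taylor statement with unspecified hypotheses.

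Fix $\alpha\in[0,1]$. Because $\phi\in C^1(\mathcal O)$ and $[0,\alpha]\subseteq\mathcal O$, and $\phi(0)=0$, we have $\phi(\alpha)=\int_0^\alpha \phi'(u)\,du$. Because $\phi'\in C^1(\mathcal O)$, for each $u\in[0,\alpha]$ we also have $\phi'(u)=\phi'(0)+\int_0^u\phi''(s)\,ds = A+\int_0^u\phi''(s)\,ds$, where $A=\phi'(0)$ by \eqref{eq:directional-coefficients}. Substituting gives
\[
  \phi(\alpha)=\alpha A+\int_0^\alpha\!\!\int_0^u \phi''(s)\,ds\,du .
\]
Since $\phi''$ is continuous on the compact triangle $\{0\le s\le u\le\alpha\}$, Fubini's theorem applies. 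Swapping the order turns the inner integral over $u\in[s,\alpha]$ into the weight $\alpha-s$, which yields \eqref{eq:pathwise-progress}. An equivalent route is to integrate $\int_0^\alpha(\alpha-s)\phi''(s)\,ds$ by parts: the boundary term is $-\alpha\phi'(0)$ and the remaining integral is $\phi(\alpha)-\phi(0)$.

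For the consequence, set $\alpha=1$, so that $\phi(1)=A+\int_0^1(1-s)\phi''(s)\,ds$. By Definition~\ref{def:finite-failure}, a finite-step failure has $\phi(1)<0$ and $A>0$. The first condition gives $\int_0^1(1-s)\phi''(s)\,ds=\phi(1)-A<-A$, and the second gives $-A<0$, which is exactly the stated chain.

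No step here is a real obstacle. The only point that needs care is regularity: the argument needs $\phi$ to be $C^2$ on a neighbourhood of $[0,1]$, so that the endpoint derivatives and the Fubini exchange are justified. This is precisely the hypothesis $\phi\in C^2(\mathcal O)$, which the paper obtains from Assumption~\ref{ass:readout-regularity} via the chain rule along the affine path $\gamma$.
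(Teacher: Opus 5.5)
Your proof is correct, but your main route differs from the paper's. You write $\phi(\alpha)=\int_0^\alpha\phi'(u)\,du$, expand $\phi'(u)=A+\int_0^u\phi''(s)\,ds$, and then exchange the order of integration over the triangle $\{0\le s\le u\le\alpha\}$ to obtain the weight $\alpha-s$.

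The paper instead fixes $\alpha$ and exhibits the antiderivative $g_\alpha(s):=\phi(s)+(\alpha-s)\phi'(s)$. By the product rule, $g_\alpha'(s)=(\alpha-s)\phi''(s)$. A single application of the fundamental theorem of calculus then gives $\int_0^\alpha(\alpha-s)\phi''(s)\,ds=g_\alpha(\alpha)-g_\alpha(0)=\phi(\alpha)-\alpha A$. This is exactly your integration-by-parts alternative, packaged as one step.

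The paper's version avoids double integrals and any appeal to Fubini. It needs only the product rule and continuity of $(\alpha-s)\phi''(s)$ on $[0,\alpha]$. Your iterated-integral version makes transparent where the kernel $\alpha-s$ comes from: it is the length of the $u$-interval $[s,\alpha]$ swept out in the order exchange. Both arguments use exactly the $C^2$ hypothesis. Your derivation of the finite-step-failure inequality from $\phi(1)<0$ and $A>0$ matches the paper's.
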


The proof is given in Appendix~\ref{app:progress-proof}.
At $\alpha=1$, the integral corrects the local gain $A$ for directional
curvature accumulated across the full update.  A finite-step failure
occurs when this correction outweighs the positive linear term.

For an affine logit path, the curvature correction admits an exact
interpretation in terms of the output distributions.  Fix a question $i$
with a finite nonempty set $M_i$ of reference-token indices.
Let $p_j(\alpha)$ denote the softmax distribution predicting reference
token $j\in M_i$ over a finite nonempty vocabulary.

\begin{proposition}[Exact Endpoint Decomposition]
  \label{prop:linear-endpoint}
  Suppose the logits for every $j\in M_i$ are affine in $\alpha$
  and $U$ is the mean reference-token log probability.  Then
  \begin{equation}
    \label{eq:linear-endpoint}
    \Delta U=A-\mathcal{C},
    \qquad
    \mathcal{C}:=\frac{1}{|M_i|}\sum_{j\in M_i}
      D_{\mathrm{KL}}(p_j(0)\|p_j(1))\geq0.
  \end{equation}
  A finite-step failure occurs exactly when $0<A<\mathcal{C}$.
  Moreover, $\mathcal{C}=-\int_0^1(1-s)\phi''(s)\,ds$.
\end{proposition}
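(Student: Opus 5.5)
Since $U$ is the average of the per-token log probabilities, we will prove the identity for a single reference token $j\in M_i$ and then average over $M_i$; the integral identity is linear in $\phi$ and the KL term is a sum over $j$. Fix $j$ and write the affine logits as $z(\alpha)=z_0+\alpha v$ with $v:=z(1)-z(0)$, and let $y$ be the reference token index. Then
\begin{equation*}
  \phi_j(\alpha)=\log p_j(\alpha)_y-\log p_j(0)_y
  =\alpha v_y-\bigl(\Lambda(\alpha)-\Lambda(0)\bigr),
\end{equation*}
where $\Lambda(\alpha):=\log\sum_k e^{z_k(\alpha)}$ is the log-partition function. The first derivative is $\phi_j'(\alpha)=v_y-\mathbb E_{p_j(\alpha)}[v]$, so the per-token slope is $A_j=v_y-\mathbb E_{p_j(0)}[v]$. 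The second derivative is $\phi_j''(\alpha)=-\mathrm{Var}_{p_j(\alpha)}(v)\le0$. These are the standard exponential-family cumulant identities.

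The key step computes the KL divergence directly. Since $\log p_j(0)_k-\log p_j(1)_k=-v_k+\Lambda(1)-\Lambda(0)$, we get
\begin{equation*}
  D_{\mathrm{KL}}\bigl(p_j(0)\,\|\,p_j(1)\bigr)
  =\Lambda(1)-\Lambda(0)-\mathbb E_{p_j(0)}[v].
\end{equation*}
Subtracting this from $A_j=v_y-\mathbb E_{p_j(0)}[v]$ gives $A_j-D_{\mathrm{KL}}=v_y-(\Lambda(1)-\Lambda(0))=\phi_j(1)$. Averaging over $j\in M_i$ yields $\Delta U=A-\mathcal C$, and $\mathcal C\ge0$ because each KL term is nonnegative (Gibbs' inequality).

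The integral identity then needs no new computation. Proposition~\ref{prop:pathwise-progress} at $\alpha=1$ gives $\Delta U=A+\int_0^1(1-s)\phi''(s)\,ds$; its hypotheses hold because $\phi$ is smooth in $\alpha$ on all of $\mathbb R$ for affine logits. Comparing this with $\Delta U=A-\mathcal C$ gives $\mathcal C=-\int_0^1(1-s)\phi''(s)\,ds$. This is consistent with $\phi''=-\mathrm{Var}\le0$, since the integral form of the Bregman divergence of $\Lambda$ is exactly this weighted integral of variances.

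The characterization of failure follows from Definition~\ref{def:finite-failure}. A finite-step failure means $A>0$ and $\phi(1)=A-\mathcal C<0$, which is equivalent to $0<A<\mathcal C$. The only delicate point is bookkeeping. Each $j$ has its own softmax and its own direction $v^{(j)}$, but every step above is per token and linear, so the averaging is routine. The main check is that $A$, defined as $\phi'(0)$ for the averaged utility, equals the average of the $A_j$; this holds by linearity of differentiation.
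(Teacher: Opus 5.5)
Your proposal is correct and follows essentially the same route as the paper. The paper also uses the log-partition identity $D_{\mathrm{KL}}(p\|\widetilde p)=\Lambda(w)-\Lambda(z)-\sum_a p_a(w_a-z_a)$ per token, averages over $M_i$, gets nonnegativity from $\log x\le x-1$ (the standard proof of Gibbs' inequality), and obtains $\mathcal C=-\int_0^1(1-s)\phi''(s)\,ds$ by comparing with Proposition~\ref{prop:pathwise-progress} at $\alpha=1$.
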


Here $D_{\mathrm{KL}}$ denotes Kullback--Leibler divergence.
The proof is in Appendix~\ref{app:endpoint-proof}.
The full update loses utility when the distributional correction exceeds
the first-order gain.  Since this correction uses only the
endpoint distributions, it also provides a separate numerical check of
the pathwise decomposition.
For a general readout, we obtain an explicit sufficient range of improving
scales by bounding negative directional curvature uniformly along the
segment.

\begin{corollary}[Beneficial Step Scales]
  \label{cor:beneficial-step}
  Under Proposition~\ref{prop:pathwise-progress}, suppose $A>0$ and
  $\phi''(\alpha)\geq-L_D$ on $[0,1]$ for a constant $L_D\geq0$.  Then
  for every $\alpha\in[0,1]$,
  \begin{equation}
    \label{eq:beneficial-step}
    \phi(\alpha)\geq\alpha A-\frac{L_D}{2}\alpha^2.
  \end{equation}
  For $L_D>0$, every $0<\alpha\leq1$ with $\alpha<2A/L_D$ has positive
  utility gain.  For $L_D=0$, every $\alpha\in(0,1]$ has positive gain.
\end{corollary}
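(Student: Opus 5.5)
The plan is to start from the pathwise decomposition of Proposition~\ref{prop:pathwise-progress}. For fixed $\alpha\in[0,1]$ it gives $\phi(\alpha)=\alpha A+\int_0^\alpha(\alpha-s)\phi''(s)\,ds$.

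The weight $(\alpha-s)$ is nonnegative on $[0,\alpha]$, and the hypothesis $\phi''(s)\geq-L_D$ holds for every $s\in[0,\alpha]\subseteq[0,1]$. So the integrand is bounded below pointwise by $-L_D(\alpha-s)$. Integrating, and using $\int_0^\alpha(\alpha-s)\,ds=\alpha^2/2$, gives
\begin{equation*}
\int_0^\alpha(\alpha-s)\phi''(s)\,ds\;\geq\;-\frac{L_D}{2}\alpha^2 .
\end{equation*}
This yields \eqref{eq:beneficial-step} directly. The integral is well defined because $\phi''$ is continuous on $\mathcal O\supseteq[0,1]$.

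For the positivity claims, I would factor the lower bound as $\alpha A-\tfrac{L_D}{2}\alpha^2=\alpha\bigl(A-\tfrac{L_D}{2}\alpha\bigr)$.
\begin{itemize}
\item \textbf{Case $L_D>0$.} For $0<\alpha\leq1$ with $\alpha<2A/L_D$, both factors are strictly positive: the first because $\alpha>0$, the second because $\tfrac{L_D}{2}\alpha<A$. Hence $\phi(\alpha)>0$.
\item \textbf{Case $L_D=0$.} The bound reduces to $\phi(\alpha)\geq\alpha A$, which is strictly positive for every $\alpha\in(0,1]$ since $A>0$.
\end{itemize}
In both cases the assumption $A>0$ is what makes the constraint set nonempty; when $L_D>0$ it is the interval $(0,\min\{1,2A/L_D\})$ intersected with the constraint $\alpha<2A/L_D$.

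There is no real obstacle here. The only point needing care is the sign of the kernel $(\alpha-s)$, which is what lets the pointwise curvature bound pass through the integral without reversing the inequality.
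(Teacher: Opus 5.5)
Your proof is correct, and it takes a slightly different route from the paper's. You pass the curvature bound through the integral remainder of Proposition~\ref{prop:pathwise-progress} in a single step, using that the kernel $(\alpha-s)$ is nonnegative on $[0,\alpha]$. The paper never invokes the remainder formula. It applies the increment-comparison Lemma~\ref{lem:increment-comparison} twice. The first application, to $\phi'$ and $s\mapsto -L_Ds$, gives the slope bound $\phi'(s)\geq A-L_Ds$. The second, to $\phi$ and $b(s)=As-L_Ds^2/2$, gives the value bound.

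The two arguments are equivalent, since $\int_0^\alpha\int_0^t\phi''(s)\,ds\,dt=\int_0^\alpha(\alpha-s)\phi''(s)\,ds$. Yours is the more economical choice, because the corollary is stated under Proposition~\ref{prop:pathwise-progress} and it makes explicit exactly where the sign of the kernel matters. The paper's two-stage comparison produces the intermediate slope bound as a by-product. It also reuses the same template with reversed inequalities ($\phi''\leq-\mu$) for the upper bound on $r_1$ in Proposition~\ref{prop:root-sandwich}.

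One small slip, outside the argument proper: your closing parenthetical describes the admissible set as $(0,\min\{1,2A/L_D\})$. When $2A/L_D>1$ the set is actually $(0,1]$, which includes $\alpha=1$. Your case analysis handles this correctly, so only that descriptive remark needs fixing.
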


The proof also appears in Appendix~\ref{app:progress-proof}.
The sufficient range turns the directional diagnosis into an intervention:
it specifies how far the proposed update can move while improving utility.
A valid $L_D$ requires a lower bound on directional curvature along the
whole segment.  The experiments obtain this bound from an FP64 derivative
bound for Ouro's affine readout and have no such bound for Huginn
(Appendix~\ref{app:exp-fp64}).
The path defines the progress boundary, the endpoint of its
largest initial interval of nonnegative gain:
\begin{equation}
  \label{eq:progress-boundary}
  r_1:=\sup\Bigl\{r\in[0,1]\,:\,\phi(\beta)\geq0
                       \quad\text{for every }\beta\in[0,r]\Bigr\}.
\end{equation}
For a finite-step failure, the positive initial derivative and negative
endpoint give $0<r_1<1$.  The argument is in
Appendix~\ref{app:boundary-proof}.
This boundary describes how far progress persists.  Selecting a scale
further requires locating the utility maximum.

\subsection{Predicting the Utility-Maximising Scale}
\label{sec:optimal-step}

\begingroup
\interlinepenalty=10000
\postdisplaypenalty=10000
The quadratic model predicts $\widehat\alpha$ for the path's continuous
optimum $\alpha^\star$:
\begin{equation}
  \label{eq:quadratic-model}
  q(\alpha):=A\alpha+Q\alpha^2,
  \qquad
  \alpha^\star:=\min\operatorname*{arg\,max}_{\alpha\in[0,1]}\phi(\alpha),
  \qquad
  \widehat\alpha:=\min\operatorname*{arg\,max}_{\alpha\in[0,1]}q(\alpha).
\end{equation}
\endgroup
Both optimisation problems use the smallest maximiser to resolve ties.
The same local model gives two predictions:
$q(1)=A+Q$ for the gain of the full update and $\widehat\alpha$ for
the scale at which the gain is largest.
For $Q<0$, the predicted scale is
\begin{equation}
  \label{eq:predicted-step}
  \widehat\alpha=
  \operatorname{clip}_{[0,1]}\!\left(-\frac{A}{2Q}\right).
\end{equation}
Here $\operatorname{clip}_{[0,1]}(x):=\min\{1,\max\{0,x\}\}$.
For $Q\geq0$, $\widehat\alpha$ is $1$ if $A+Q>0$ and $0$ otherwise.

For $A>0,Q<0$, the ratio $-A/(2Q)$ balances the initial slope against
its decrease under the quadratic model.  Clipping gives the best scale
within the original displacement.  The polynomial's positive root,
$r^{(2)}=-A/Q$, predicts where that gain returns to zero.
It is twice the unclipped maximising scale and may lie beyond the search
interval.  The scale bounds below do not apply to this root.  Under a
two-sided curvature bound, Appendix~\ref{app:boundary-proof} places
$r^{(2)}$ and $r_1$ in a common interval, and the experiments compare
$r^{(2)}$ with the sampled path.  Existence of the smallest maximisers and the derivation of
Eq.~(\ref{eq:predicted-step}) are given in
Appendix~\ref{app:quadratic-max}.

The accuracy of this prediction depends on how the directional slope
changes along the segment.  We measure the accumulated departure from
the initial curvature by
\begin{equation}
  \label{eq:curvature-deviation}
  C(a):=\int_0^a |\phi''(s)-\phi''(0)|\,ds,
  \qquad a\in[0,1].
\end{equation}
This quantity measures variation around the curvature already represented
by $q$.  A curved path can still have small approximation error: constant
curvature gives $C(1)=0$ and makes $q$ exact, as shown in
Appendix~\ref{app:integrated-edge-cases}.
Under Assumption~\ref{ass:readout-regularity}, the full-update prediction
error satisfies
\begin{equation}
  \label{eq:endpoint-prediction-bound}
  |\Delta U-(A+Q)|\leq C(1).
\end{equation}
In particular, $A+Q$ has the same sign as $\Delta U$ whenever
$|A+Q|>C(1)$ (proof in Appendix~\ref{app:endpoint-prediction-proof}).
For $Q<0$, this deviation also bounds scale error and regret relative
to the initial curvature.

\begin{theorem}[Integrated Curvature Approximation]
  \label{thm:integrated-step}
  Let $\phi\in C^2(\mathcal O)$ on an open neighbourhood of $[0,1]$,
  and set $A=\phi'(0)$ and $Q=\phi''(0)/2<0$.  Set $\kappa:=-2Q>0$ and use
  the maximisers in Eq.~(\ref{eq:quadratic-model}).  Then
  \begin{equation}
    \label{eq:integrated-step-bounds}
    |\widehat\alpha-\alpha^\star|\leq\min\{1,C(1)/\kappa\},
    \qquad
    0\leq\phi(\alpha^\star)-\phi(\widehat\alpha)
      \leq\frac{C(1)^2}{2\kappa}.
  \end{equation}
  The bounds hold for every global maximiser of $\phi$, including
  endpoints, without assuming concavity.
\end{theorem}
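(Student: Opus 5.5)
The plan is to treat $\phi$ as a perturbation of the concave quadratic $q$ and to control the perturbation only through its first derivative. Set $e(\alpha):=\phi(\alpha)-q(\alpha)$ on $[0,1]$. By the definitions of $A$ and $Q$, $e(0)=0$, $e'(0)=\phi'(0)-A=0$ and $e''(s)=\phi''(s)-\phi''(0)$. Hence
\begin{equation*}
  e'(\alpha)=\int_0^\alpha\bigl(\phi''(s)-\phi''(0)\bigr)\,ds,
  \qquad
  |e'(\alpha)|\leq C(\alpha)\leq C(1)
  \quad\text{for all }\alpha\in[0,1],
\end{equation*}
since $C$ is nondecreasing. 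This uniform derivative bound is the only input used about $\phi$ beyond $C^2$ regularity. On the quadratic side, $q'(\alpha)=A-\kappa\alpha$, so $q$ is $\kappa$-strongly concave, with
\begin{equation*}
  q'(\beta)-q'(\alpha)=-\kappa(\beta-\alpha).
\end{equation*}

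\textbf{Step 1: first-order conditions at both maximisers.} For the scale bound I would write first-order optimality conditions on the interval $[0,1]$ in variational-inequality form, so that endpoints are handled without assuming concavity of $\phi$. Any global maximiser $\alpha^\star$ of the $C^1$ function $\phi$ on $[0,1]$ satisfies $\phi'(\alpha^\star)(\beta-\alpha^\star)\leq0$ for all $\beta\in[0,1]$. At an interior maximiser this holds because $\phi'(\alpha^\star)=0$. At $\alpha^\star=0$ it holds because $\phi'(0)\leq0$, and at $\alpha^\star=1$ because $\phi'(1)\geq0$. The same condition holds for $\widehat\alpha$ with $q'$ in place of $\phi'$.

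\textbf{Step 2: the scale bound.} Take $\beta=\widehat\alpha$ in the condition for $\alpha^\star$ and $\beta=\alpha^\star$ in the condition for $\widehat\alpha$, and add. Writing $d:=\alpha^\star-\widehat\alpha$, this gives $\bigl(q'(\widehat\alpha)-\phi'(\alpha^\star)\bigr)d\leq0$. Substituting $\phi'(\alpha^\star)=q'(\alpha^\star)+e'(\alpha^\star)$ and $q'(\widehat\alpha)-q'(\alpha^\star)=\kappa d$ yields
\begin{equation*}
  \kappa d^2\leq e'(\alpha^\star)\,d\leq C(1)\,|d|.
\end{equation*}
Therefore $|d|\leq C(1)/\kappa$. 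The bound $|d|\leq1$ is trivial since both scales lie in $[0,1]$.

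\textbf{Step 3: the regret bound.} Nonnegativity of the regret is immediate because $\alpha^\star$ is a global maximiser. For the upper bound, integrate $\phi'=q'+e'$ from $\widehat\alpha$ to $\alpha^\star$:
\begin{equation*}
  \phi(\alpha^\star)-\phi(\widehat\alpha)
  =\bigl[q(\alpha^\star)-q(\widehat\alpha)\bigr]
   +\int_{\widehat\alpha}^{\alpha^\star}e'(s)\,ds.
\end{equation*}
Since $q$ is an exact quadratic, $q(\alpha^\star)-q(\widehat\alpha)=q'(\widehat\alpha)d-\tfrac{\kappa}{2}d^2$. The optimality condition for $\widehat\alpha$ gives $q'(\widehat\alpha)d\leq0$, so this bracket is at most $-\tfrac{\kappa}{2}d^2$. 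The integral is at most $C(1)|d|$ by the derivative bound. Hence the regret is at most $C(1)|d|-\tfrac{\kappa}{2}d^2$. Maximising this expression over $|d|\geq0$ gives the value $C(1)^2/(2\kappa)$.

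\textbf{Main obstacle.} The main point needing care is the endpoint case. It is handled by using the one-sided first-order conditions of Step 1 rather than stationarity, together with the exact quadratic expansion of $q$ around $\widehat\alpha$. Every other step uses only the uniform bound $|e'|\leq C(1)$, which is why no concavity of $\phi$ is needed and why the argument applies to every global maximiser $\alpha^\star$.
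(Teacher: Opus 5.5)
Your proof is correct and follows essentially the same route as the paper. It combines the feasible one-sided first-order conditions at both maximisers with the residual bound $|\phi'-q'|\le C(1)$ to get $\kappa d^2\le C(1)|d|$, then uses the exact quadratic expansion of $q$ at $\widehat\alpha$ plus the residual increment to bound the regret by $C(1)|d|-\tfrac{\kappa}{2}d^2\le C(1)^2/(2\kappa)$. The only cosmetic differences are that the paper records the slightly sharper intermediate location bound $C(\alpha^\star)/\kappa$, and that your remark $e(0)=0$ tacitly uses $\phi(0)=0$, which is harmless because nothing downstream relies on it.
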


The proof is given in Appendix~\ref{app:integrated-bounds-proof}.
The same accumulated deviation governs scale error linearly and regret
quadratically, relative to the initial negative curvature $\kappa$.
The bound characterises when local utility geometry remains informative
across the shared block's full displacement.  The predictor uses
derivatives at the current recurrent state.  Measurements along the
resulting path quantify its approximation error.
The pointwise location bound in Appendix~\ref{app:curvature-scale-proof}
additionally retains signed cancellation up to the true maximiser.
Appendix~\ref{app:integrated-edge-cases} treats zero deviation and
the sensitivity of the maximising scale when $Q\geq0$.

Finite measurements can bound the accumulated deviation when variation
between samples is controlled.  Proposition~\ref{prop:curvature-partition}
constructs lower and upper sums for $C(1)$ from endpoint curvatures
and valid Lipschitz constants on each partition cell.  Its upper sum
$\overline C$ can replace $C(1)$ in the endpoint, scale, and regret bounds.
A bound on the signed slope residual retains cancellation of curvature
deviations (Appendix~\ref{app:cloud-bounds}).
Cellwise third-derivative bounds supply these constants, and a global
bound $M$ recovers $C(a)\leq Ma^2/2$ under $C^3$ regularity
(Appendix~\ref{app:third-derivative-specialisation}).
The global bound applies worst-case derivative control throughout the
segment.  The integrated quantity accumulates the curvature deviation
along the actual path.  This distinction determines how tightly the
path measurements can bound the error of a local prediction.
These bounds concern real-valued functions.  Their numerical evaluation
requires separate floating-point error accounting.

\subsection{Prediction and Fixed-Direction Intervention}
\label{sec:contraction}

\begingroup
\interlinepenalty=10000
\postdisplaypenalty=10000
After computing the next recurrent state, we evaluate the local model
on the resulting update.
The endpoint comparison measures how closely $A+Q$ predicts
$\Delta U$.  To evaluate the scale prediction, we additionally measure
utility along the segment.
These evaluations are made on a finite grid
$\mathcal A\subseteq[0,1]$ containing $0$ and $1$, with the grid optimum
as the comparison target:
\begin{equation}
  \label{eq:grid-optimum}
  \alpha^\star_{\mathcal{A}}
  :=\min\operatorname*{arg\,max}_{\alpha\in\mathcal{A}}\phi(\alpha).
\end{equation}
\endgroup
The grid target uses sampled utilities, while $\widehat\alpha$ uses
the derivatives at the current recurrent state $H$.
Appendix~\ref{app:grid} gives location and
utility bounds relating this grid target to the continuous optimum
$\alpha^\star$, under their respective path-shape conditions.
We additionally refine the numerical optimum along the same segment,
using the readout's concavity when available and evaluating further
candidate scales for a general readout.
For $A>0,Q<0$, we also evaluate the root prediction $r^{(2)}$ on the
same grid against the interval ending at the first observed negative gain.
This compares the predicted extent of progress with the sampled path.
Locating the continuous boundary $r_1$ additionally requires controlling
the behaviour between grid points.

The curve comparison evaluates a scale predicted separately for each
update.  We additionally measure recovery of finite-step failures under
a common reduction in update magnitude.
The local positivity argument in Appendix~\ref{app:boundary-proof}
guarantees a beneficial scale for each such failure.
We assess recovery at a shared scale by choosing a prespecified fraction
$\alpha_c\in(0,1)$ and
evaluate the readout at $H+\alpha_cD$, holding $H$, $D$, and the
readout fixed.  This is the step-contraction intervention.
A failure is recovered when $\phi(\alpha_c)>0$.
The recovery fraction is the proportion of selected failures satisfying
this condition, and the mean gain averages $\phi(\alpha_c)$ over that
same subset.
When a uniform negative-curvature bound is available, we also select a
scale separately for each update from the sufficient range in
Corollary~\ref{cor:beneficial-step} and evaluate its measured gain.

Recovery quantifies improvement within the subset of updates whose local
direction is favourable and whose full effect is harmful.
To measure the value of intermediate scales over the whole collection of
updates, we compare them with choosing the better of the current and next
states.  This endpoint choice already captures the gain from discarding a
harmful full update.
Let $\mathcal{B}\subseteq[0,1]$ be a finite action set containing
$0$ and $1$, and define
\begin{equation}
  \label{eq:oracle-utilities}
  U_{\mathrm{halt}}:=\max\{U(H),U(H^+)\},
  \qquad
  U_{\mathrm{step}}:=\max_{\alpha\in\mathcal{B}}U(H+\alpha D).
\end{equation}
Both oracles select an action using the measured reference utilities.
The endpoint oracle chooses between $H$ and $H^+$, while the step oracle
can also select an intermediate state.
Since $\mathcal B$ includes both endpoints,
$U_{\mathrm{step}}-U_{\mathrm{halt}}\geq0$, and the endpoint comparison is
proved in Appendix~\ref{app:grid}.
This difference measures the additional utility available from allowing
intermediate scales, beyond the gain already available to endpoint
selection.  We evaluate it over all analysed updates alongside the
conditional recovery measurements.
All comparisons use the same reference utility before further recurrence.

\section{Experiments}
\label{sec:experiments}

We first trace utility across recurrent depth and examine the effect of
one additional loop at fixed transitions (\S~\ref{sec:exp-finite}).
Path measurements then test whether local geometry predicts useful scales
within those updates (\S~\ref{sec:exp-scale}).
Fixed-direction interventions evaluate whether changing the displacement
recovers progress and adds value beyond endpoint selection
(\S~\ref{sec:exp-contraction}).

\subsection{Setup}
\label{sec:exp-setup}

\textbf{Models, tasks, and transitions.}
We study Ouro-1.4B and 2.6B~\citep{zhu_scaling_2026}
alongside Huginn-0125~\citep{NeurIPS2025_70913} to examine the
mechanism across model families and scales, with different recurrent
and readout designs.
Following mathematical evaluations of looped language
models~\citep{yang_stabilizing_2026}, we use the complete
MATH-500~\citep{DBLP:conf/iclr/LightmanKBEBLLS24} test set
($N=500$) and GSM8K~\citep{cobbe2021gsm8k} test set ($N=1319$).
We extend the comparison to commonsense completion using the full
HellaSwag~\citep{ACL2019_53145} validation set ($N=10042$).
All three models are evaluated on each task, giving nine conditions.
For this matrix, transitions are fixed across tasks:
$4\to5$ for both Ouro models and $19\to20$ for Huginn.
A broader depth sweep covers seven labelled splits, including
additional question-answering tasks, giving 21 conditions
(Appendix~\ref{app:exp-coverage}).
Path and quarter-step analyses use MATH-500 test, 256 GSM8K training
questions, and 1000 HellaSwag validation questions, with a
CommonsenseQA~\citep{NAACL2019_35311} training extension of 1024 questions.

\textbf{Evaluation protocol.}
We follow the fixed-direction analysis in \S~\ref{sec:task-utility}
with frozen models. Under teacher forcing, utility is the mean
reference-token log probability for mathematics and the log-softmax
of length-normalised option scores for multiple choice.
Endpoint prediction compares $A$ and $A+Q$ with $\Delta U$ using
sign accuracy, Spearman correlation, and MAE. Scale predictions are
compared with optima on the 21-point grid
$\mathcal A=\{0,0.05,\ldots,1\}$, with regret evaluated at the
nearest grid action to $\widehat\alpha$.
The main matrix uses BF16 recurrent states, FP32 readouts, and FP64
option-score log-softmax. Additional FP64 path analyses retain the
recurrent endpoints and evaluate derivatives and utility with an FP64 readout.
We report 95\% confidence intervals from 2000 paired question-bootstrap
resamples within each analysis population. Utility definitions and
numerical protocols are detailed in Appendices~\ref{app:utilities}
and~\ref{app:exp-protocol}.

\subsection{RQ1: Local Direction versus Finite Progress}
\label{sec:exp-finite}

\textbf{Useful directions within harmful updates.}
Depth sweeps identify utility declines in 19 of 21 model--task conditions
(Appendix~\ref{app:exp-coverage}). To examine how progress is lost within
an additional loop, we compare the initial utility slope with the
full-step gain at the fixed transitions in Table~\ref{tab:mechanism-matrix}.
Finite-step failures as defined in Definition~\ref{def:finite-failure}
occur in all nine conditions, accounting for
1.4--49.2\% of harmful updates: the proposed direction initially improves
reference utility, but its full displacement lowers it.
These failures also occur in all three conditions with positive mean
gain, so the mechanism appears within individual updates even when
the population average improves. A harmful full-step outcome can
therefore conceal a direction along which a shorter move would improve
reference utility. Additional effect-margin checks are reported in
Appendix~\ref{app:exp-protocol}.

\begin{table}[t]
\centering
\caption{\textbf{Finite-step failures and endpoint prediction.} All nine conditions contain finite-step failures. For predicting $\Delta U$, adding $Q$ improves sign accuracy and reduces MAE (bold). Counts are finite-step/harmful updates, and mean gain averages $\Delta U$ over all questions. Rank correlation is Spearman between $A+Q$ and $\Delta U$.}
\label{tab:mechanism-matrix}
\small
\setlength{\tabcolsep}{2pt}
\begin{tabular*}{\linewidth}{@{\extracolsep{\fill}}cccccccccc@{}}
\toprule
\multirow{2}{*}{Model} & \multirow{2}{*}{Task} & Mean gain & \multicolumn{2}{c}{Finite-step failures} & \multicolumn{2}{c}{Sign accuracy (\%) $\uparrow$} & \multirow{2}{*}{\makecell{Rank\\corr. $\uparrow$}} & \multicolumn{2}{c}{MAE ($10^{-4}$) $\downarrow$} \\
\cmidrule(lr){4-5}\cmidrule(lr){6-7}\cmidrule(l){9-10}
 & & $(10^{-3})$ & Count & Share (\%) & $A$ & $A+Q$ & & $A$ & $A+Q$ \\
\midrule
\multirow{3}{*}{\makecell{Ouro\\1.4B}} & MATH-500 & $-8.73$ & 179/364 & 49.18 & 64.20 & \textbf{98.20} & 0.9929 & 139.41 & \textbf{11.99} \\
 & GSM8K & $-14.47$ & 429/1066 & 40.24 & 67.48 & \textbf{98.41} & 0.9985 & 147.51 & \textbf{6.20} \\
 & HellaSwag & $-7.75$ & 261/5771 & 4.52 & 92.49 & \textbf{98.87} & 0.9983 & 73.11 & \textbf{12.45} \\
\addlinespace[4pt]
\multirow{3}{*}{\makecell{Ouro\\2.6B}} & MATH-500 & $-4.40$ & 95/329 & 28.88 & 81.00 & \textbf{98.60} & 0.9973 & 51.12 & \textbf{4.10} \\
 & GSM8K & $-7.77$ & 325/874 & 37.19 & 75.36 & \textbf{97.19} & 0.9952 & 114.61 & \textbf{13.13} \\
 & HellaSwag & $+1.49$ & 88/4719 & 1.86 & 94.24 & \textbf{99.49} & 0.9995 & 29.56 & \textbf{3.05} \\
\addlinespace[4pt]
\multirow{3}{*}{\makecell{Huginn\\3.5B}} & MATH-500 & $+0.75$ & 45/203 & 22.17 & 91.00 & \textbf{99.60} & 0.9998 & 6.81 & \textbf{0.23} \\
 & GSM8K & $-0.48$ & 102/702 & 14.53 & 92.27 & \textbf{99.77} & 0.9998 & 10.39 & \textbf{0.42} \\
 & HellaSwag & $+0.34$ & 65/4695 & 1.38 & 98.58 & \textbf{99.92} & 1.0000 & 2.06 & \textbf{0.17} \\
\bottomrule
\end{tabular*}
\vspace{-4mm}
\end{table}

\textbf{Accumulated curvature and endpoint prediction.}
Proposition~\ref{prop:pathwise-progress} decomposes full-step gain into
the initial slope and an accumulated curvature correction. We test how well local
curvature approximates this correction by comparing $A$ and $A+Q$
with the observed $\Delta U$. Adding $Q$ improves sign accuracy in all
nine matched conditions, reaching 97.2--99.9\%, and reduces MAE by
factors of 5.9--29.8 (Table~\ref{tab:mechanism-matrix}). The largest
sign-accuracy gains occur on Ouro-1.4B's mathematical tasks, where
harmful full updates often have positive initial slopes. These results
support the role of curvature in the mismatch between local improvement
and full-step harm. Additional checks of the affine-readout identity in
Proposition~\ref{prop:linear-endpoint} are reported in
Appendix~\ref{app:exp-kl}.

We also examine whether changes in hidden states and outputs track
task progress. Across the four original intervention sets, state- and
output-change diagnostics have absolute Spearman correlation at most
0.285 with $\Delta U$, while the reference-based prediction $A+Q$
exceeds 0.99 on the same teacher-forced states. These results indicate
that generic change measures provide limited information about the
ranking of utility gains, motivating an analysis of utility along the
update direction. Diagnostic definitions and detailed
comparisons are provided in Appendix~\ref{app:exp-ranking}.

\subsection{RQ2: Predicting the Scale of Progress}
\label{sec:exp-scale}

\textbf{Prediction along the fixed direction.}
Accurate endpoint prediction motivates evaluating scales inside the
update.  On the primary subset $A>0,Q<0$, we compare the quadratic
maximiser with the measured grid optimum, keeping the initial
coefficients fixed.  The first-order rule takes the full step throughout
this subset.  On Ouro-1.4B/MATH-500 and Huginn/GSM8K, predicted and
empirical scales have correlation above 0.995, and the quadratic rule
reduces mean grid regret by more than two orders of magnitude relative to
the first-order rule (Table~\ref{tab:scale-prediction-main}).
Figure~\ref{fig:scale-prediction}(e) compares the quadratic rule with all
five fixed scales.  This grid-regret comparison evaluates
the utility at the predicted scale, complementing the
agreement in scale.  Within the primary subset, it tests whether adding
curvature to the first-order full-step choice improves utility.
The best fixed scale in hindsight differs between these evaluation sets:
$0.5$ for Ouro and $1$ for Huginn.  Even against these separately selected
choices, the quadratic rule reduces mean grid regret by more than two orders
of magnitude.  Its advantage therefore extends beyond replacing the full
step with a uniformly smaller displacement.
Appendix~\ref{app:exp-scale-details} gives the comparison protocol and
results on the additional tasks.

\begin{table}[t]
\centering
\caption{\textbf{Quadratic scale prediction against grid optima.}
Results on Ouro/MATH-500 test and Huginn/GSM8K train256,
restricted to $A>0,Q<0$.
Correlation and MAE compare predicted scales with 21-point grid
optima. Regret is evaluated at the nearest grid action.}
\label{tab:scale-prediction-main}
\small
\setlength{\tabcolsep}{6pt}
\begin{tabular}{@{}lrrrr@{}}
\toprule
Model / task & $n$ & Spearman & Scale MAE & Grid regret \\
\midrule
Ouro-1.4B / MATH-500 & 315 & \textbf{0.9951} & 0.0211 & $\boldsymbol{1.747\times10^{-5}}$ \\
Huginn / GSM8K & 129 & \textbf{0.9998} & 0.0072 & $\boldsymbol{3.482\times10^{-7}}$ \\
\bottomrule
\end{tabular}

\end{table}

\textbf{Intermediate optima and prediction error.}
To test whether local quadratic geometry locates intermediate utility
maxima, we examine paths whose numerical optima lie strictly inside
$[0,1]$. Mean scale error is below $0.019$ across 269 Ouro-1.4B/MATH-500
cases and $0.018$ across 36 Huginn/GSM8K cases
(Table~\ref{tab:fp64-path-prediction}).
These results support the prediction in Eq.~(\ref{eq:predicted-step}):
the initial slope and curvature contain information about how far
to move along the proposed direction
(Figure~\ref{fig:scale-prediction}(a--d)).
Theorem~\ref{thm:integrated-step} relates scale-prediction error to
accumulated curvature variation. We evaluate this bound for both Ouro
models on the complete MATH-500 and GSM8K test sets. Across these four
conditions, the integrated bound $\overline C/\kappa$ has median
$0.058$--$0.114$, compared with $2.84$--$4.61$ for the global bound
$M/(2\kappa)$. It covers the numerical scale-error intervals of all
2034 primary updates and is below the full search range on 99.8\%
of them (Table~\ref{tab:cloud-bound-comparison}).
Thus, accounting for curvature variation along the path gives
informative error bounds for the same local prediction.
Appendix~\ref{app:cloud-bounds} gives the bound distributions and
numerical optimum checks for all four conditions.

\begin{figure}[t]
\centering
\includegraphics[width=\linewidth]{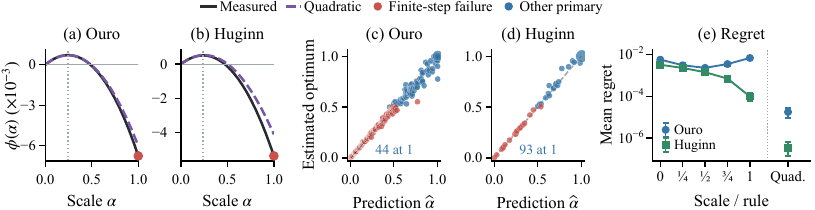}
\caption{\textbf{Scale prediction on Ouro/MATH-500 and Huginn/GSM8K.}
(a,b) Example utility paths and local quadratic approximations.
(c,d) Predicted scales versus numerical optima.
(e) Grid regret for fixed scales and quadratic selection.
Labels in (c,d) count overlapping points at $(1,1)$.}
\label{fig:scale-prediction}
\vspace{-4mm}
\end{figure}

\subsection{RQ3: Recovering Progress by Step Contraction}
\label{sec:exp-contraction}
The preceding analyses in RQ2 show that local geometry predicts both
full-step utility change and useful intermediate scales. Here, we use
fixed-direction interventions to test whether shorter updates recover
the progress suggested by this geometry. Additional comparisons of the
predicted zero crossing with sampled utility paths are reported in
Appendix~\ref{app:exp-boundary}.

\textbf{A common step contraction across questions.}
Finite-step failure guarantees an improving neighbourhood along the
proposed direction, but its extent varies across questions. The improving range in Corollary~\ref{cor:beneficial-step} motivates
step contraction, although its extent depends on the update.
Following the fixed-direction intervention in \S~\ref{sec:contraction},
we first test how often a common scale $\alpha_c=0.25$ recovers
reference utility.
On the four original evaluation sets, this quarter step recovers
72.2--83.2\% of finite-step failures (Figure~\ref{fig:contraction}).
Each recovery gives an intermediate state with higher reference utility
than both endpoints of the original harmful update. These results show
that a single fraction reaches useful intermediate states across most
selected failures in each evaluation set.
Effect margins, task extensions, and transition
selection are detailed in Appendix~\ref{app:exp-protocol}.

\begin{table}[t]
\centering
\caption{\textbf{Scale selection on Ouro-1.4B.} Utility units: $10^{-3}$. Recovery counts show quarter $\to$ selected (failure denominator); utility populations are listed separately. Brackets give paired 95\% CIs. Bound selection uses FP64.}
\label{tab:interventions-main}
\small
\setlength{\tabcolsep}{2.5pt}
\begin{tabular*}{\linewidth}{@{\extracolsep{\fill}}lclrr@{}}
\toprule
Task / rule & \makecell{Recovered\\quarter $\to$ selected} & \makecell[l]{Utility\\population} & \makecell{Mean\\gain} & \makecell{Advantage over quarter\\{}[95\% CI]} \\
\midrule
GSM8K / Quadratic & $306\to \boldsymbol{406}$ (431) & All questions (1319) & \textbf{2.770} & $\boldsymbol{3.610}\;[3.366,3.881]$ \\
MATH-500 / Bound & $132\to \boldsymbol{179}$ (179) & Failures (179) & \textbf{1.074} & $\boldsymbol{0.144}\;[0.061,0.232]$ \\
\bottomrule
\end{tabular*}
\vspace{-4mm}
\end{table}

\textbf{Selecting a scale from local geometry.}
To test whether adapting the scale to each update improves recovery,
we apply the quadratic rule in Eq.~(\ref{eq:quadratic-model}) using
local derivatives of reference utility.
On Ouro-1.4B/GSM8K test at $4\to5$, it recovers 94.2\% of the
431 finite-step failures, compared with 71.0\% for the quarter step.
Across all 1319 questions, the rule also yields positive mean
gain in reference utility and outperforms the quarter step
(Table~\ref{tab:interventions-main}).
The recovery phenomenon also appears on Ouro-2.6B/HellaSwag,
where the quadratic rule recovers 88.0--100\% of finite-step
failures across four separately evaluated transitions from
$1\to2$ through $4\to5$
(Appendix~\ref{app:hellaswag-transitions}).
These results connect the scale prediction examined in RQ2 to
improved recovery along the model's own update directions.
The intervention protocol is detailed in
Appendix~\ref{app:quadratic-intervention}.

\textbf{Selecting a scale from a curvature bound.}
The quadratic rule in Eq.~(\ref{eq:quadratic-model}) estimates where
utility is largest. We next test
whether the sufficient improving range in
Corollary~\ref{cor:beneficial-step} can also guide step selection.
On Ouro-1.4B/MATH-500, we choose
$\alpha_{\mathrm{safe}}=\min\{1,0.9(2A/L_D)\}$ from this range using
the initial slope and the pathwise curvature bound, with the factor
$0.9$ fixed across questions. In the FP64 readout, the selected scales
recover all 179 finite-step failures, compared with 132 for
the quarter step. The mean paired advantage is $1.44\times10^{-4}$
(Table~\ref{tab:interventions-main}). The recovered states have higher
reference utility than both endpoints, linking the sufficient improving
range to progress retained within the original update.

\textbf{Additional value beyond endpoint selection.}
To determine whether contraction offers more than avoiding a harmful
full step, we compare the endpoint and five-scale oracles in
Eq.~(\ref{eq:oracle-utilities}) over all questions in each evaluation set.
Most aggregate gain is available through endpoint selection, while
intermediate scales provide additional reference utility
(Appendix~\ref{app:exp-oracles}). On a finite-step failure, the endpoint
oracle retains the current state, so an interior gain identifies useful
progress that stopping would forgo. Supplementary experiments extend
recovery to CommonsenseQA (Appendix~\ref{app:exp-protocol}) and to
token-level updates within Ouro's four-step inference budget
(Appendix~\ref{app:native-continue}). These observations motivate
using reference-defined geometry as offline supervision for learned
exit and step-scale decisions, a future direction discussed in
Appendix~\ref{app:geometry-adaptive}. The reported gains concern
immediate reference utility after proposal computation and before
further recurrence. Extensions to generated trajectories and
computation costs are discussed in Appendix~\ref{app:limitations}.

\section{Conclusion}
\label{sec:conclusion}

This work characterises finite-step failure in looped Transformers:
an additional recurrent update can lower reference utility despite a
locally improving direction.  For a frozen model and a fixed readout,
accumulated curvature explains this mismatch, while local quadratic
geometry predicts useful step scales and curvature variation bounds
approximation error. Under teacher forcing, we observe recovery of
finite-step failures in both model families when evaluating shorter displacements
along the same direction before further recurrence. A fixed quarter
step recovers 72.2--83.2\% of selected finite-step failures across four
settings. On Ouro-1.4B/GSM8K test, quadratic scale selection raises
recovery from 71.0\% to 94.2\% relative to the quarter step. An observed
decline after another loop can therefore reflect excessive displacement
along a useful direction.  For looped Transformers, this distinction
motivates evaluating how much of each proposed update to apply alongside
how many loops to run, connecting step scale to the task progress
obtained from additional shared computation.

\bibliographystyle{plainnat}
\bibliography{iclr2027_conference}

\appendix

\section{Proofs of the Path Analysis}
\label{app:path-proofs}

We use the state space, utility, displacement, and path defined in
\S~\ref{sec:task-utility}.  In particular, $D=F(H)-H$ is fixed,
$\phi(\alpha)=U(H+\alpha D)-U(H)$, $A=\phi'(0)$, and
$Q=\phi''(0)/2$.  Smoothness is imposed on an open neighbourhood of the
tested segment, and endpoint derivatives use the resulting real-valued
continuation.

\subsection{Directional Derivatives}
\label{app:directional-derivatives}

\begin{proof}[Derivation of Eq.~(\ref{eq:directional-coefficients})]
  The affine path has derivative $\gamma'(\alpha)=D$.  Composing the
  differential of $U$ with this path and subtracting the constant $U(H)$
  gives $\phi'(\alpha)=dU_{\gamma(\alpha)}[D]$.
  In the inner product space, the differential is represented by the
  gradient.  Differentiating its inner product with the fixed vector $D$
  then gives
  \begin{equation}
    \phi'(\alpha)
      =\bigl\langle\nabla U(\gamma(\alpha)),D\bigr\rangle_{\mathcal{H}},
    \qquad
    \phi''(\alpha)
      =\bigl\langle D,\nabla^2 U(\gamma(\alpha))[D]\bigr\rangle_{\mathcal{H}}.
  \end{equation}
  Evaluation at zero gives $A$ and $Q$.  The identities involve the fixed
  vector $D$ and the derivatives of $U$ at points of the segment.
\end{proof}

\subsection{Finite Progress and Beneficial Scales}
\label{app:progress-proof}

\begin{lemma}[Increment Comparison]
  \label{lem:increment-comparison}
  Let $f,g$ be continuously differentiable on an open neighbourhood of
  $[0,1]$.  If $f'(s)\leq g'(s)$ for every $s\in[0,1]$, then
  $f(\alpha)-f(0)\leq g(\alpha)-g(0)$ for every $\alpha\in[0,1]$.
\end{lemma}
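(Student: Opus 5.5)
The plan is to reduce the claim to the fundamental theorem of calculus applied to the difference $h:=g-f$. Since $f$ and $g$ are continuously differentiable on an open neighbourhood of $[0,1]$, so is $h$, and $h'(s)=g'(s)-f'(s)\geq0$ for every $s\in[0,1]$. The desired inequality $f(\alpha)-f(0)\leq g(\alpha)-g(0)$ is exactly $h(\alpha)-h(0)\geq0$. So it suffices to show that a $C^1$ function with nonnegative derivative on $[0,1]$ does not decrease from $0$ to any $\alpha\in[0,1]$.

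For $\alpha=0$ the claim is trivial. For $\alpha\in(0,1]$ I will write
\[
h(\alpha)-h(0)=\int_0^\alpha h'(s)\,ds .
\]
This is valid because $h'$ is continuous on the compact interval $[0,\alpha]$ contained in the open domain. The integrand is pointwise nonnegative there, so the integral is nonnegative. Rearranging $h(\alpha)-h(0)\geq0$ gives the stated inequality.

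As an alternative that avoids integration, the mean value theorem gives some $\xi\in(0,\alpha)$ with $h(\alpha)-h(0)=\alpha h'(\xi)\geq0$. This route needs only differentiability on $(0,\alpha)$ and continuity on $[0,\alpha]$, both of which are implied by the hypotheses.

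There is no substantive obstacle. The only point requiring care is that the hypothesis $f'\leq g'$ is stated on the closed interval $[0,1]$, while differentiability is posited on an open neighbourhood. This ensures that endpoint derivatives are ordinary two-sided derivatives and that the integration or mean-value step covers the full range $[0,\alpha]$, including $\alpha=1$.

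I expect the lemma to be used downstream with $f=\phi$ and $g(\alpha)=\alpha A-\tfrac{L_D}{2}\alpha^2$, after first applying the same comparison one derivative level up. From $\phi''\geq-L_D$ one gets $\phi'(\alpha)\geq A-L_D\alpha$, and the lemma then yields the quadratic lower bound in Corollary~\ref{cor:beneficial-step}.
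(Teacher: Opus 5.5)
Your proof is correct and takes essentially the paper's approach: the paper applies the fundamental theorem of calculus to $f$ and $g$ separately on $[0,\alpha]$ and uses monotonicity of the integral, which is equivalent to your integrating the nonnegative derivative of $h=g-f$. Your mean-value-theorem alternative is also valid and needs slightly weaker hypotheses, but the paper does not need it.
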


\begin{proof}
  Continuity makes both derivatives integrable on $[0,\alpha]$.
  The fundamental theorem of calculus and monotonicity of the integral give
  \begin{equation}
    f(\alpha)-f(0)=\int_0^\alpha f'(s)\,ds
      \leq\int_0^\alpha g'(s)\,ds=g(\alpha)-g(0).
  \end{equation}
\end{proof}

\begin{proof}[Proof of Proposition~\ref{prop:pathwise-progress}]
  Fix $\alpha\in[0,1]$ and define
  $g_\alpha(s):=\phi(s)+(\alpha-s)\phi'(s)$.
  The product rule gives $g_\alpha'(s)=(\alpha-s)\phi''(s)$.
  This derivative is continuous on $[0,\alpha]$, so the fundamental theorem
  of calculus yields
  \begin{equation}
    \int_0^\alpha(\alpha-s)\phi''(s)\,ds
      =g_\alpha(\alpha)-g_\alpha(0)
      =\phi(\alpha)-\alpha A.
  \end{equation}
  The last equality uses $\phi(0)=0$ and $\phi'(0)=A$.
  At a finite-step failure, $\phi(1)<0$ and $A>0$, so
  $\int_0^1(1-s)\phi''(s)\,ds=\phi(1)-A<-A<0$.
\end{proof}

\begin{proof}[Proof of Corollary~\ref{cor:beneficial-step}]
  First compare the functions $s\mapsto-L_Ds$ and $s\mapsto\phi'(s)$.
  Their derivatives satisfy $-L_D\leq\phi''(s)$, so
  Lemma~\ref{lem:increment-comparison} gives
  $\phi'(s)\geq A-L_Ds$ on $[0,1]$.
  Next define $b(s):=As-L_Ds^2/2$, whose derivative is $A-L_Ds$.
  Applying the same comparison to $b$ and $\phi$, with
  $b(0)=\phi(0)=0$, gives
  \begin{equation}
    \phi(\alpha)
      \geq\alpha A-\frac{L_D}{2}\alpha^2
      =\alpha\left(A-\frac{L_D}{2}\alpha\right).
  \end{equation}
  For $L_D>0$, this expression is positive whenever
  $0<\alpha<2A/L_D$.  For $L_D=0$, it equals $\alpha A>0$ at every
  positive scale in the interval.
\end{proof}

\subsection{The Continuous Progress Boundary}
\label{app:boundary-proof}

\begin{proof}[Justification of $0<r_1<1$ for a finite-step failure]
  Differentiability at zero, $\phi(0)=0$, and $A>0$ imply
  $\phi(\alpha)/\alpha\to A$ as $\alpha\downarrow0$.
  Hence $\phi(\alpha)>0$ on some interval $(0,\varepsilon)$.
  Set $r_0=\min\{\varepsilon/2,1/2\}$.  Then $r_0$ belongs to the
  prefix set in Eq.~(\ref{eq:progress-boundary}).  This set is nonempty
  and bounded above by one, so its supremum satisfies $r_1\geq r_0>0$.
  Since $\phi(1)<0$, continuity at one gives $\eta>0$ such that
  $|\alpha-1|<\eta$ implies $\phi(\alpha)<0$.
  Put $b=1-\min\{\eta/2,1/2\}$.  Then $0\leq b<1$ and
  $\phi(b)<0$.  Every member of the prefix set is strictly less than
  $b$, making $b$ an upper bound.  Hence $r_1\leq b<1$.
\end{proof}

The local positive interval requires only the derivative at zero.  The
curvature bound in Corollary~\ref{cor:beneficial-step} additionally gives an
explicit sufficient range of scales.  A two-sided curvature bound also
places the boundary and the quadratic root prediction in a common
interval.

\begin{proposition}[Root Prediction under Two-Sided Curvature Bounds]
  \label{prop:root-sandwich}
  Under Proposition~\ref{prop:pathwise-progress}, suppose $A>0$ and
  $-L_D\leq\phi''(s)\leq-\mu$ on $[0,1]$ for constants
  $0<\mu\leq L_D$.  Then $\kappa=-\phi''(0)\in[\mu,L_D]$, so
  $r^{(2)}=-A/Q=2A/\kappa\in[2A/L_D,\,2A/\mu]$, and
  $r_1\geq\min\{1,2A/L_D\}$.  If moreover $2A/\mu<1$, then
  $r_1\leq2A/\mu$, and consequently
  \begin{equation}
    \label{eq:root-sandwich}
    |r^{(2)}-r_1|\leq2A\left(\frac{1}{\mu}-\frac{1}{L_D}\right).
  \end{equation}
\end{proposition}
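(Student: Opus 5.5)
The plan is to derive every claim from two quadratic comparison bounds on $\phi$, each obtained from the increment comparison in Lemma~\ref{lem:increment-comparison}, exactly as in the proof of Corollary~\ref{cor:beneficial-step}.

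\textbf{Step 1: location of $\kappa$ and $r^{(2)}$.} Evaluating the hypothesis $-L_D\leq\phi''(s)\leq-\mu$ at $s=0$ gives $\mu\leq\kappa=-\phi''(0)\leq L_D$. Since $Q=-\kappa/2<0$, we have $r^{(2)}=-A/Q=2A/\kappa$. Because $A>0$, the map $\kappa\mapsto 2A/\kappa$ is decreasing, so $r^{(2)}\in[2A/L_D,\,2A/\mu]$.

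\textbf{Step 2: lower bound on $r_1$.} The lower curvature bound is the hypothesis of Corollary~\ref{cor:beneficial-step}, which gives $\phi(\beta)\geq\beta(A-L_D\beta/2)$. This is nonnegative for every $\beta\in[0,\min\{1,2A/L_D\}]$, since $\beta\leq 2A/L_D$ forces $A-L_D\beta/2\geq0$. Hence $r=\min\{1,2A/L_D\}$ lies in the prefix set of Eq.~(\ref{eq:progress-boundary}), and therefore $r_1\geq\min\{1,2A/L_D\}$.

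\textbf{Step 3: upper bound on $r_1$ when $2A/\mu<1$.} This is the mirror image of Step 2.
\begin{itemize}
\item Apply Lemma~\ref{lem:increment-comparison} to $\phi'$ and $s\mapsto-\mu s$. This yields $\phi'(s)\leq A-\mu s$ on $[0,1]$.
\item Apply the lemma again to $\phi$ and $b(s)=As-\mu s^2/2$. Since $b(0)=\phi(0)=0$, this yields $\phi(\beta)\leq\beta(A-\mu\beta/2)$.
\item For any $\beta\in(2A/\mu,1]$, this bound makes $\phi(\beta)<0$.
\end{itemize}
So any $r$ in the prefix set must satisfy $r\leq 2A/\mu$: otherwise $[0,r]$ would contain some $\beta$ with $2A/\mu<\beta\leq r$ and $\phi(\beta)<0$. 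Taking the supremum gives $r_1\leq 2A/\mu$. This step needs the most care. The strict inequality $2A/\mu<1$ guarantees such $\beta$ exist inside $[0,1]$, and the supremum argument must be stated precisely.

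\textbf{Step 4: the sandwich.} Under $2A/\mu<1$ we also have $2A/L_D\leq 2A/\mu<1$, so $\min\{1,2A/L_D\}=2A/L_D$. Then Steps 2 and 3 give $r_1\in[2A/L_D,\,2A/\mu]$. Step 1 places $r^{(2)}$ in the same interval. Two points in a common interval differ by at most its length, $2A(1/\mu-1/L_D)$, which is Eq.~(\ref{eq:root-sandwich}).
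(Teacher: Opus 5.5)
Your proposal is correct and follows the paper's proof essentially step for step. You evaluate the curvature bounds at $s=0$ to place $\kappa$ and $r^{(2)}$, invoke Corollary~\ref{cor:beneficial-step} for the lower bound on $r_1$, apply Lemma~\ref{lem:increment-comparison} twice under $\phi''\leq-\mu$ to get $\phi(\beta)\leq\beta(A-\mu\beta/2)$ and hence $r_1\leq2A/\mu$, and conclude that both roots lie in $[2A/L_D,2A/\mu]$. The differences are cosmetic: you place $\min\{1,2A/L_D\}$ itself in the prefix set using nonnegativity on the closed interval, where the paper takes a supremum over the open one, and you state explicitly that $2A/\mu<1$ forces $\min\{1,2A/L_D\}=2A/L_D$.
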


\begin{proof}
  The membership of $\kappa$ and hence of $r^{(2)}$ follows from the
  curvature bounds at $s=0$.
  For the lower bound on $r_1$, Corollary~\ref{cor:beneficial-step}
  gives $\phi(\alpha)\geq\alpha(A-L_D\alpha/2)>0$ for
  $0<\alpha<2A/L_D$.  With $\phi(0)=0$, every
  $r<\min\{1,2A/L_D\}$ belongs to the prefix set in
  Eq.~(\ref{eq:progress-boundary}), so its supremum is at least
  $\min\{1,2A/L_D\}$.
  For the upper bound, apply Lemma~\ref{lem:increment-comparison} to
  $\phi'$ and $s\mapsto A-\mu s$, whose derivatives satisfy
  $\phi''\leq-\mu$.  This gives $\phi'(s)\leq A-\mu s$ on $[0,1]$.
  Applying the same comparison to $\phi$ and $b(s)=As-\mu s^2/2$ gives
  $\phi(\alpha)\leq\alpha(A-\mu\alpha/2)$, which is negative for
  $\alpha>2A/\mu$.  If $2A/\mu<1$, then $\phi<0$ on $(2A/\mu,1]$, so no
  $r>2A/\mu$ belongs to the prefix set and $r_1\leq2A/\mu$.
  Both $r_1$ and $r^{(2)}$ then lie in $[2A/L_D,2A/\mu]$, whose length
  is the right-hand side of Eq.~(\ref{eq:root-sandwich}).
\end{proof}

In the affine-logit case, $\phi''$ is the negative mean variance in
Eq.~(\ref{eq:linear-readout-curvature}), so $\mu$ and $L_D$ bound the
minimum and maximum of that variance along the segment.  The
experiments evaluate only the upper envelope $L_D$ and compare
$r^{(2)}$ with the sampled crossings
(Appendix~\ref{app:exp-boundary}).  No bound of the form
Eq.~(\ref{eq:root-sandwich}) is computed.

\subsection{Constrained Quadratic Maximisation}
\label{app:quadratic-max}

\begin{proof}[Existence and tie convention in Eq.~(\ref{eq:quadratic-model})]
  Let $f$ be either $\phi$ or $q$.  Compactness of $[0,1]$ and
  continuity of $f$ give a maximiser $b$.  The set
  $S=[0,1]\cap f^{-1}(\{f(b)\})$ is a closed subset of a compact
  interval and contains $b$.  Minimising the identity function on $S$
  gives a point $a\in S$ no larger than any other member.
  Since $f(a)=f(b)$, this point maximises $f$ on $[0,1]$ and is its
  smallest maximiser.
  If $a$ and $\widetilde a$ both have this property, maximality gives
  $f(a)=f(\widetilde a)$, and their respective minimality gives
  $a\leq\widetilde a$ and $\widetilde a\leq a$.  Thus they coincide.
\end{proof}

\begin{proof}[Derivation of Eq.~(\ref{eq:predicted-step})]
  Suppose $Q<0$.  For any candidate $c\in[0,1]$,
  \begin{equation}
    q(x)-q(c)=(A+2Qc)(x-c)+Q(x-c)^2.
  \end{equation}
  If $(A+2Qc)(x-c)\leq0$ for every $x\in[0,1]$, this identity makes
  $c$ a maximiser, and the strict inequality $Q(x-c)^2<0$ for $x\neq c$
  makes it unique.
  Write $u=-A/(2Q)$, so $A+2Qu=0$.
  If $u\leq0$, then $A\leq0$ and $c=0$ satisfies the required
  inequality.  If $u>0$ and $u\geq1$, then $A+2Q\geq0$ and $c=1$
  satisfies it.  In the remaining case $0<u<1$, take $c=u$, where
  the derivative vanishes.  These branches give the clipped formula.

  For $Q\geq0$, the inequality $x^2\leq x$ on $[0,1]$ gives
  $q(x)\leq x(A+Q)$.
  If $A+Q>0$, this is at most $A+Q=q(1)$, with strict inequality
  when $x<1$.  If $A+Q\leq0$, it is at most zero, and $c=0$ is the
  smallest maximiser.  This includes $Q=0$ and the constant-zero quadratic.
  For $A>0,Q<0$, the factorisation
  $q(\alpha)=\alpha(A+Q\alpha)$ also gives the positive root
  $r^{(2)}=-A/Q=2(-A/(2Q))$.
\end{proof}

\subsection{Quadratic Approximation Error}
\label{app:quadratic-proof}

\begin{lemma}[Power Increment Bound]
  \label{lem:power-increment}
  Let $f$ be continuously differentiable on an open neighbourhood of
  $[0,1]$, and suppose $|f'(s)|\leq Bs^n$ on $[0,1]$ for
  $B\in\mathbb R$ and an integer $n\geq0$.  Then
  \begin{equation}
    |f(\alpha)-f(0)|\leq\frac{B}{n+1}\alpha^{n+1},
    \qquad \alpha\in[0,1].
  \end{equation}
\end{lemma}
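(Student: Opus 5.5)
The plan is to reduce the bound to the fundamental theorem of calculus together with the triangle inequality for integrals, following the pattern of Lemma~\ref{lem:increment-comparison}. Fix $\alpha\in[0,1]$. Since $f$ is continuously differentiable on an open neighbourhood of $[0,1]$, its derivative $f'$ is continuous and hence integrable on $[0,\alpha]$. This gives
\begin{equation}
  f(\alpha)-f(0)=\int_0^\alpha f'(s)\,ds .
\end{equation}
Taking absolute values, I will bound the integral of $f'$ by the integral of $|f'|$, and then use the pointwise hypothesis $|f'(s)|\leq Bs^n$ on $[0,\alpha]\subseteq[0,1]$. Monotonicity of the integral then yields
\begin{equation}
  |f(\alpha)-f(0)|\leq\int_0^\alpha|f'(s)|\,ds\leq\int_0^\alpha Bs^n\,ds=\frac{B}{n+1}\alpha^{n+1},
\end{equation}
where the last step is the elementary antiderivative of $s^n$ for an integer $n\geq0$.

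Two edge cases need a remark. First, $B$ is allowed to be any real number. However, the hypothesis at any $s\in(0,1]$ gives $0\leq|f'(s)|\leq Bs^n$, which forces $B\geq0$. The hypothesis at $s=0$ with $n=0$ gives the same conclusion. So $B<0$ is vacuous, and the displayed chain is valid in all admissible cases. Second, when $\alpha=0$ both sides vanish.

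I do not expect a genuine obstacle. The only care needed is to invoke the absolute-value inequality for Riemann integrals of continuous functions, and to keep the integration variable confined to $[0,\alpha]$ so that the pointwise bound applies.
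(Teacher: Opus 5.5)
Your proof is correct and is essentially the paper's argument. The paper applies Lemma~\ref{lem:increment-comparison} twice, with the bounding polynomials $g_\pm(s)=\pm Bs^{n+1}/(n+1)$, to obtain the two one-sided bounds; you instead take absolute values directly via $\bigl|\int_0^\alpha f'(s)\,ds\bigr|\leq\int_0^\alpha|f'(s)|\,ds$, and both routes reduce to the fundamental theorem of calculus plus monotonicity of the integral.
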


\begin{proof}
  The derivative bounds are $-Bs^n\leq f'(s)\leq Bs^n$.
  The two bounding polynomials
  $g_\pm(s)=\pm B s^{n+1}/(n+1)$ have derivatives $\pm Bs^n$
  and vanish at zero.  Applying Lemma~\ref{lem:increment-comparison}
  first to $f,g_+$ and then to $g_-,f$ gives the two sides of the
  absolute-value inequality.
\end{proof}

\subsubsection{Endpoint Prediction Error}
\label{app:endpoint-prediction-proof}

\begin{proof}[Derivation of Eq.~(\ref{eq:endpoint-prediction-bound})]
  Under Assumption~\ref{ass:readout-regularity}, the fundamental theorem
  of calculus gives
  $\phi'(a)-q'(a)=\int_0^a[\phi''(s)-\phi''(0)]\,ds$,
  whose absolute value is at most $C(a)\leq C(1)$ on $[0,1]$.
  Applying the mean value inequality to $\phi-q$ between zero and one,
  with $\phi(0)=q(0)=0$, yields
  $|\Delta U-(A+Q)|\leq C(1)$.
  If $A+Q>C(1)$, then $\Delta U\geq A+Q-C(1)>0$, and
  if $A+Q<-C(1)$, then $\Delta U\leq A+Q+C(1)<0$.
  This proves the sign statement without a sign restriction on $A$ or $Q$.
\end{proof}

\subsubsection{Integrated Curvature Deviation}
\label{app:curvature-scale-proof}

\begin{proposition}[Scale Error from Integrated Curvature Deviation]
  \label{prop:curvature-scale-error}
  Let $\phi\in C^2(\mathcal O)$ on an open neighbourhood $\mathcal O$
  of $[0,1]$, and suppose $\phi''(0)<0$.  Set
  $\kappa:=-\phi''(0)=-2Q>0$, and let $\alpha^\star$ and
  $\widehat\alpha$ be as in Eq.~(\ref{eq:quadratic-model}).  Then
  \begin{equation}
    \label{eq:curvature-scale-error}
    |\widehat\alpha-\alpha^\star|
    \leq\frac{\left|\int_0^{\alpha^\star}
                    [\phi''(s)-\phi''(0)]\,ds\right|}{\kappa}
    \leq\frac{C(\alpha^\star)}{\kappa}
    \leq\frac{C(1)}{\kappa}.
  \end{equation}
  The result includes endpoint maximisers and does not require concavity
  of $\phi$.
\end{proposition}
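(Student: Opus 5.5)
The approach is to compare the first-order behaviour of $\phi$ and $q$ at the true maximiser $\alpha^\star$. The linear growth of $q'$ converts a slope discrepancy into a scale discrepancy. Write $\kappa=-\phi''(0)$, so that $q'(\alpha)=A-\kappa\alpha$. Define the slope residual
\begin{equation*}
  \rho(a):=\phi'(a)-q'(a)=\int_0^a[\phi''(s)-\phi''(0)]\,ds ,
\end{equation*}
which follows from the fundamental theorem of calculus exactly as in the derivation of Eq.~(\ref{eq:endpoint-prediction-bound}).

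From the proof of Eq.~(\ref{eq:predicted-step}) in Appendix~\ref{app:quadratic-max}, $Q<0$ gives $\widehat\alpha=\operatorname{clip}_{[0,1]}(u)$ with $u:=A/\kappa$. The identity $q'(\alpha^\star)=\kappa(u-\alpha^\star)$ then gives
\begin{equation*}
  u-\alpha^\star=\frac{\phi'(\alpha^\star)-\rho(\alpha^\star)}{\kappa}.
\end{equation*}
Everything reduces to controlling $\phi'(\alpha^\star)$ through first-order optimality conditions. These hold for a global maximiser on $[0,1]$ without any concavity assumption, because $\phi$ is $C^1$ on a neighbourhood of $[0,1]$.

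I split into three cases according to where $\alpha^\star$ lies.

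\emph{Interior maximiser} ($0<\alpha^\star<1$). Here $\phi'(\alpha^\star)=0$, so $|u-\alpha^\star|=|\rho(\alpha^\star)|/\kappa$. Since $\alpha^\star\in[0,1]$ and projection onto $[0,1]$ is $1$-Lipschitz and fixes $\alpha^\star$, we get $|\widehat\alpha-\alpha^\star|\leq|u-\alpha^\star|$.

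\emph{Left endpoint} ($\alpha^\star=0$). Maximality forces $\phi'(0)\leq0$, that is $A\leq0$. Then $u\leq0$, so $\widehat\alpha=0$ and the error vanishes.

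\emph{Right endpoint} ($\alpha^\star=1$). Maximality gives $\phi'(1)\geq0$, hence $u-1\geq-\rho(1)/\kappa$.
\begin{itemize}
  \item If $u\geq1$, then $\widehat\alpha=1$ and the error is zero.
  \item If $0\leq u<1$, the error is $1-u\leq\rho(1)/\kappa\leq|\rho(1)|/\kappa$.
  \item If $u<0$, then $\widehat\alpha=0$ and the error is $1<1-u\leq|\rho(1)|/\kappa$.
\end{itemize}

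This endpoint bookkeeping is where I expect the main care to be needed. The one-sided optimality condition must point in the direction that makes the clipping harmless, and the clipped case $u<0$ must be checked explicitly rather than absorbed by the Lipschitz argument.

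The remaining inequalities follow directly from the definition of $C$. First, $|\rho(\alpha^\star)|\leq\int_0^{\alpha^\star}|\phi''-\phi''(0)|\,ds=C(\alpha^\star)$. Second, $C$ is nondecreasing because its integrand is nonnegative, so $C(\alpha^\star)\leq C(1)$.

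The bound holds for the smallest maximiser, which is the tie convention used in Eq.~(\ref{eq:quadratic-model}). The case argument above used only the maximality of $\alpha^\star$, never its minimality, so the same bound holds verbatim for every global maximiser, including endpoints.
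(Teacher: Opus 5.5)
Your proof is correct, but it takes a different route from the paper. The paper never uses the closed form of $\widehat\alpha$. It writes the constrained first-order condition as the variational inequality $f'(a)(b-a)\leq0$ for a maximiser $a$ of $f$ on $[0,1]$, and applies it to $\phi$ at $\alpha^\star$ and to $q$ at $\widehat\alpha$. Combining these with $q'(\alpha^\star)-q'(\widehat\alpha)=\kappa\delta$, where $\delta=\widehat\alpha-\alpha^\star$, gives $\kappa\delta^2\leq\bigl(q'(\alpha^\star)-\phi'(\alpha^\star)\bigr)\delta$, and the bound follows without any case split.

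You instead substitute the clipping formula $\widehat\alpha=\operatorname{clip}_{[0,1]}(A/\kappa)$ and check the three positions of $\alpha^\star$ by hand. Your endpoint bookkeeping is right, including the $u<0$ subcase. It also shows explicitly that $\alpha^\star=0$ forces $\widehat\alpha=0$.

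The paper's argument is more general. It uses only the $\kappa$-strong concavity of $q$ and the first-order conditions, so it carries over to other convex feasible sets or to multidimensional scales where no clipping formula exists.

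Your three cases can also be merged into a single Lipschitz step, contrary to your remark about the $u<0$ case. The first-order conditions at $\alpha^\star$ say exactly that $\alpha^\star=\operatorname{clip}_{[0,1]}\bigl(\alpha^\star+\phi'(\alpha^\star)/\kappa\bigr)$. Meanwhile $\widehat\alpha=\operatorname{clip}_{[0,1]}\bigl(\alpha^\star+q'(\alpha^\star)/\kappa\bigr)$, because $\alpha^\star+q'(\alpha^\star)/\kappa=u$. Nonexpansiveness of the clip then gives $|\widehat\alpha-\alpha^\star|\leq|\rho(\alpha^\star)|/\kappa$ directly. This version is essentially the projected-gradient form of the paper's variational argument.
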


\begin{proof}[Proof of Proposition~\ref{prop:curvature-scale-error}]
  Continuity on the compact interval ensures existence of both maximisers.
  The argument applies to any global maximiser $\alpha^\star$ of $\phi$, and
  the smallest-maximiser convention merely fixes its name.
  Since $\kappa=-\phi''(0)>0$, the quadratic
  $q(\alpha)=\phi'(0)\alpha-\kappa\alpha^2/2$ is strictly concave,
  so its constrained maximiser $\widehat\alpha$ is unique.

  First, let $f$ be differentiable at a constrained maximiser
  $a\in[0,1]$.  For any $b\in[0,1]$, the curve
  $t\mapsto a+t(b-a)$ remains feasible for $t\in[0,1]$.
  Optimality implies
  $f(a+t(b-a))-f(a)\leq0$ for $t>0$.
  Dividing by $t$ and taking $t\downarrow0$ gives
  \begin{equation}
    \label{eq:app-constrained-direction}
    f'(a)(b-a)\leq0.
  \end{equation}
  This reasoning uses only feasible one-sided displacements, so it also
  applies when $a=0$ or $a=1$.  It does not assert that $f'(a)=0$ there.

  Put $\delta:=\widehat\alpha-\alpha^\star$.
  Applying Eq.~(\ref{eq:app-constrained-direction}) first to $\phi$
  and then to $q$ yields
  \begin{equation}
    \phi'(\alpha^\star)\delta\leq0,
    \qquad q'(\widehat\alpha)\delta\geq0.
  \end{equation}
  Since $q'(\alpha)=\phi'(0)-\kappa\alpha$, we have
  $q'(\alpha^\star)-q'(\widehat\alpha)=\kappa\delta$.
  Therefore
  \begin{equation}
    \begin{aligned}
      \kappa\delta^2
      &=\bigl(q'(\alpha^\star)-q'(\widehat\alpha)\bigr)\delta\\
      &\leq q'(\alpha^\star)\delta\\
      &\leq\bigl(q'(\alpha^\star)-\phi'(\alpha^\star)\bigr)\delta\\
      &\leq|q'(\alpha^\star)-\phi'(\alpha^\star)|\,|\delta|.
    \end{aligned}
  \end{equation}
  For $\delta\neq0$, division by $\kappa|\delta|>0$ gives
  \begin{equation}
    \label{eq:app-derivative-mismatch-distance}
    |\delta|\leq
      \frac{|\phi'(\alpha^\star)-q'(\alpha^\star)|}{\kappa}.
  \end{equation}
  For $\delta=0$, the same inequality holds because its right-hand side
  is nonnegative.

  Next, $q'(a)=\phi'(0)+\phi''(0)a$ and the fundamental theorem of
  calculus give, for every $a\in[0,1]$,
  \begin{equation}
    \label{eq:app-curvature-mismatch-integral}
    \phi'(a)-q'(a)
      =\phi'(a)-\phi'(0)-a\phi''(0)
      =\int_0^a[\phi''(s)-\phi''(0)]\,ds.
  \end{equation}
  All integrands are continuous on the compact segment, hence integrable.
  Taking absolute values gives
  \begin{equation}
    \left|\int_0^a[\phi''(s)-\phi''(0)]\,ds\right|
      \leq\int_0^a|\phi''(s)-\phi''(0)|\,ds=C(a).
  \end{equation}
  For $0\leq a\leq b\leq1$,
  $C(b)-C(a)=\int_a^b|\phi''(s)-\phi''(0)|\,ds\geq0$.
  Thus $C(\alpha^\star)\leq C(1)$.
  Substitution into Eq.~(\ref{eq:app-derivative-mismatch-distance})
  proves the complete chain in Eq.~(\ref{eq:curvature-scale-error}).
  Neither this proof nor the distance conclusion uses $\phi(0)=0$,
  $\phi'(0)>0$, or concavity of the actual path.
\end{proof}

\subsubsection{Integrated Location and Regret Bounds}
\label{app:integrated-bounds-proof}

\begin{proof}[Proof of Theorem~\ref{thm:integrated-step}]
  Continuity on $[0,1]$ gives a global
  maximiser, and strict concavity of $q$ gives its unique maximiser
  $\widehat\alpha$.  Fix any global maximiser $\alpha^\star$ of
  $\phi$.  Define the derivative residual and value residual by
  $e(a):=\phi'(a)-q'(a)$ and $\mathcal R(a):=\phi(a)-q(a)$.
  Eq.~(\ref{eq:app-curvature-mismatch-integral}) and monotonicity of
  $C$ give $|e(a)|\leq C(a)\leq C(1)$ on $[0,1]$.
  Put $\varepsilon:=C(1)\geq0$.

  Proposition~\ref{prop:curvature-scale-error} bounds the location
  error by $\varepsilon/\kappa$.  Both maximisers belong to $[0,1]$,
  so their distance is also at most one.  This proves the first
  conclusion of Eq.~(\ref{eq:integrated-step-bounds}).

  For the value bound, $\mathcal R'=e$ and the derivative bound on the
  convex interval imply, by the mean value inequality,
  \begin{equation}
    \label{eq:residual-increment}
    |\mathcal R(v)-\mathcal R(u)|\leq\varepsilon|v-u|,
    \qquad u,v\in[0,1].
  \end{equation}
  This step uses differentiability and the uniform derivative bound.
  It applies in either order of $u,v$.
  The exact quadratic expansion at its constrained maximiser gives
  \begin{equation}
    q(a)-q(\widehat\alpha)
    =q'(\widehat\alpha)(a-\widehat\alpha)
      -\frac{\kappa}{2}(a-\widehat\alpha)^2
    \leq-\frac{\kappa}{2}(a-\widehat\alpha)^2.
  \end{equation}
  The inequality uses the feasible first-order condition
  in Eq.~(\ref{eq:app-constrained-direction}).
  With $d:=|\alpha^\star-\widehat\alpha|$, decompose the utility
  difference into the residual increment and the quadratic gap:
  \begin{equation}
    \label{eq:residual-regret-distance}
    \begin{aligned}
      0\leq\phi(\alpha^\star)-\phi(\widehat\alpha)
      &\leq\varepsilon d-\frac{\kappa}{2}d^2\\
      &=\frac{\varepsilon^2}{2\kappa}
        -\frac{\kappa}{2}(d-\varepsilon/\kappa)^2
      \leq\frac{\varepsilon^2}{2\kappa}.
    \end{aligned}
  \end{equation}
  The lower bound follows from maximality of $\alpha^\star$.
  Substitution of $\varepsilon=C(1)$ proves the result.  No value of
  $\phi(0)$ enters the residual increment, so adding a constant to the
  utility leaves the argument unchanged.
\end{proof}

\begin{corollary}[Refinement by the Scale Domain]
  \label{cor:integrated-domain-regret}
  Under Theorem~\ref{thm:integrated-step},
  \begin{equation}
    \phi(\alpha^\star)-\phi(\widehat\alpha)\leq
    \begin{cases}
      C(1)^2/(2\kappa),& C(1)\leq\kappa,\\
      C(1)-\kappa/2,& C(1)>\kappa.
    \end{cases}
  \end{equation}
  The same conclusion holds with any upper bound $\overline C\geq C(1)$
  in place of $C(1)$.
\end{corollary}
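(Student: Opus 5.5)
We start from the decomposition already established in the proof of Theorem~\ref{thm:integrated-step}, Eq.~(\ref{eq:residual-regret-distance}). With $\varepsilon:=C(1)$ and $d:=|\alpha^\star-\widehat\alpha|$, it gives
\[
0\leq\phi(\alpha^\star)-\phi(\widehat\alpha)\leq g(d):=\varepsilon d-\tfrac{\kappa}{2}d^2 .
\]
The inequality holds for every global maximiser $\alpha^\star$ of $\phi$. It relies only on the residual increment bound Eq.~(\ref{eq:residual-increment}) and the feasible first-order condition for $q$. The refinement comes from using the constraint $d\in[0,1]$, which holds because both maximisers lie in $[0,1]$. We therefore bound the regret by $\max_{d\in[0,1]}g(d)$ instead of by the unconstrained maximum of $g$.

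The function $g$ is a strictly concave quadratic in $d$, since $\kappa>0$. Its unconstrained maximiser is $d_0=\varepsilon/\kappa$, where $g(d_0)=\varepsilon^2/(2\kappa)$. We split into two cases.
\begin{itemize}
\item If $\varepsilon\leq\kappa$, then $d_0\in[0,1]$. The constrained maximum equals $\varepsilon^2/(2\kappa)$, which recovers the theorem's bound.
\item If $\varepsilon>\kappa$, then $d_0>1$, so $g$ is increasing on $[0,1]$. Indeed $g'(d)=\varepsilon-\kappa d\geq\varepsilon-\kappa>0$ there. The constrained maximum is therefore $g(1)=\varepsilon-\kappa/2$.
\end{itemize}
In both cases the bound is $\max_{[0,1]}g$. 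This yields the stated piecewise bound.

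For the statement with $\overline C\geq C(1)$, we check that the argument survives the substitution. The residual bound $|e(a)|\leq C(1)\leq\overline C$ holds on $[0,1]$, so Eq.~(\ref{eq:residual-increment}) holds with $\varepsilon=\overline C$. The derivation of $g$ is unchanged, and the same case analysis applies with $\overline C$ in place of $C(1)$. An alternative route is to note that the right-hand side of the corollary is nondecreasing in $C(1)$. That function is continuous at $C=\kappa$, where both branches equal $\kappa/2$. Its derivative is $C/\kappa\geq0$ on the first branch and $1$ on the second.

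The only delicate point is making sure the constraint $d\leq1$ is used legitimately. The bound $\varepsilon d-\frac{\kappa}{2}d^2$ must hold at the actual distance $d$, rather than only after the unconstrained maximisation. This is true because Eq.~(\ref{eq:residual-regret-distance}) was derived pointwise in $d$ before completing the square. So we take the first line of that display as the starting inequality.
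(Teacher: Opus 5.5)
Your proof is correct and follows the paper's own argument: both start from the pointwise bound $\varepsilon d-\frac{\kappa}{2}d^2$ in Eq.~(\ref{eq:residual-regret-distance}), maximise it over $d\in[0,1]$ with the case split at $\varepsilon=\kappa$, and handle $\overline C$ by restarting from $|e(a)|\leq\overline C$. Your alternative argument for the $\overline C$ clause is also valid, because the piecewise bound is continuous and nondecreasing in $C(1)$.
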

\begin{proof}
  The distance $d$ in Eq.~(\ref{eq:residual-regret-distance}) belongs
  to $[0,1]$.  If $\varepsilon\leq\kappa$, the quadratic in $d$
  has its maximum at $\varepsilon/\kappa$.  If
  $\varepsilon>\kappa$, its maximum on $[0,1]$ is
  $\varepsilon-\kappa/2$ at one.  For an upper bound $\overline C$,
  the same proof starts with $|e(a)|\leq\overline C$, and
  nonnegativity follows from $0\leq C(1)\leq\overline C$.
\end{proof}

\subsubsection{Finite-Partition Enclosures}
\label{app:curvature-enclosure-proof}

To connect this integral to finite measurements, let
$g(s):=\phi''(s)-\phi''(0)$ and partition the interval as
$0=t_0<\cdots<t_N=1$, with $N\geq1$.
For each cell $\ell=0,\ldots,N-1$, suppose
$|g(u)-g(v)|\leq L_\ell|u-v|$ for all $u,v\in[t_\ell,t_{\ell+1}]$,
where $L_\ell\geq0$.  Define
\begin{equation}
  \label{eq:curvature-cell-terms}
  h_\ell:=t_{\ell+1}-t_\ell,
  \quad T_\ell:=\frac{h_\ell}{2}(|g(t_\ell)|+|g(t_{\ell+1})|),
  \quad P_\ell:=\frac{L_\ell h_\ell^2}{4}.
\end{equation}
\begin{proposition}[Finite-Partition Curvature Bounds]
  \label{prop:curvature-partition}
  Under the assumptions of Theorem~\ref{thm:integrated-step} and
  these cellwise conditions,
  \begin{equation}
    \label{eq:curvature-enclosure}
    \underline C:=\sum_{\ell=0}^{N-1}\max\{0,T_\ell-P_\ell\}
    \leq C(1)\leq
    \overline C:=\sum_{\ell=0}^{N-1}(T_\ell+P_\ell).
  \end{equation}
  Both bounds in Theorem~\ref{thm:integrated-step} remain valid when
  $C(1)$ is replaced by $\overline C$.
\end{proposition}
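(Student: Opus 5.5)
The plan is to write $C(1)=\sum_\ell\int_{t_\ell}^{t_{\ell+1}}|g(s)|\,ds$ and bound each cell integral above and below by $T_\ell\pm P_\ell$. The global claims then follow by summation. The lower sum additionally uses $\int|g|\geq0$ cellwise.

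Fix a cell and write $I_\ell:=\int_{t_\ell}^{t_{\ell+1}}|g(s)|\,ds$. The function $|g|$ is Lipschitz with constant $L_\ell$ on the cell, because $\bigl||g(u)|-|g(v)|\bigr|\leq|g(u)-g(v)|$. For any $s$ in the cell this gives the two pointwise bounds
\[
|g(s)|\leq|g(t_\ell)|+L_\ell(s-t_\ell),\qquad |g(s)|\leq|g(t_{\ell+1})|+L_\ell(t_{\ell+1}-s).
\]
Averaging the two with weights $(t_{\ell+1}-s)/h_\ell$ and $(s-t_\ell)/h_\ell$ bounds $|g|$ by the linear interpolant of the endpoint values plus $2L_\ell(s-t_\ell)(t_{\ell+1}-s)/h_\ell$. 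Integrating gives $I_\ell\leq T_\ell+L_\ell h_\ell^2/3$. This constant is weaker than the claimed $P_\ell=L_\ell h_\ell^2/4$, so the weighted average is not the right tool.

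The sharper route, which I will use, takes the pointwise minimum of the two bounds rather than an average. The resulting upper envelope of $|g|$ on the cell is a tent. Let $a=|g(t_\ell)|$ and $b=|g(t_{\ell+1})|$; the Lipschitz condition gives $|a-b|\leq L_\ell h_\ell$. The two lines cross at $s_0-t_\ell=(b-a+L_\ell h_\ell)/(2L_\ell)$, which lies in the cell because of that inequality. The area under the tent is
\[
\frac{(a+b)h_\ell}{2}+\frac{L_\ell h_\ell^2}{4}-\frac{(a-b)^2}{4L_\ell}\leq T_\ell+P_\ell .
\]
Computing this area cleanly, and dispatching the degenerate case $L_\ell=0$ (where $g$ is constant on the cell, so $I_\ell=T_\ell$), is the main computational step.

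The lower bound is the symmetric argument with the lower tent $\max\{a-L_\ell(s-t_\ell),\,b-L_\ell(t_{\ell+1}-s)\}$. Its integral is $T_\ell-P_\ell+(a-b)^2/(4L_\ell)\geq T_\ell-P_\ell$. Combined with $I_\ell\geq0$, this gives $I_\ell\geq\max\{0,T_\ell-P_\ell\}$.

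Summing over cells yields $\underline C\leq C(1)\leq\overline C$.

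For the final claim, go back to the proofs of Proposition~\ref{prop:curvature-scale-error} and Theorem~\ref{thm:integrated-step}. In both, $C(1)$ enters only through $|e(a)|\leq C(a)\leq C(1)$, so $|e(a)|\leq\overline C$ also holds. The location bound then holds directly, since $\min\{1,\cdot\}$ is monotone. The regret chain $\varepsilon d-\kappa d^2/2\leq\varepsilon^2/(2\kappa)$ is valid for any $\varepsilon\geq|e|$, as the proof of Corollary~\ref{cor:integrated-domain-regret} already notes. The step I expect to require the most care is verifying that the crossing point lies inside the cell, so that the tent-area formula applies without clipping. This follows from $|a-b|\leq L_\ell h_\ell$, which itself comes from the Lipschitz assumption applied at the two endpoints.
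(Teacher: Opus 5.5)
Your proof is correct. Its outer structure matches the paper's: Lipschitz continuity of $|g|$ via the reverse triangle inequality, a cellwise enclosure of $I_\ell$, summation by additivity of the integral, and passing $\overline C$ through the residual bound $|e(a)|\leq C(1)\leq\overline C$.

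The difference is the in-cell estimate. The paper splits each cell at its midpoint $m$ and compares $|g|$ on each half with the nearer endpoint value. It uses $\bigl||g(s)|-|g(t_\ell)|\bigr|\leq L_\ell(s-t_\ell)$ on $[t_\ell,m]$, and symmetrically on $[m,t_{\ell+1}]$. Integrating gives $|I_\ell-T_\ell|\leq 2\cdot\frac{L_\ell}{2}(h_\ell/2)^2=P_\ell$ in one step. That single estimate gives both sides, needs no crossing point, and needs no separate $L_\ell=0$ case.

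Your tent argument switches cones at the optimal point $s_0$ instead of the midpoint, so your envelope is pointwise no larger than the paper's. It yields the sharper enclosure $T_\ell-P_\ell+(a-b)^2/(4L_\ell)\leq I_\ell\leq T_\ell+P_\ell-(a-b)^2/(4L_\ell)$. The upper half is the best bound obtainable from the endpoint values and $L_\ell$ alone, since the tent itself is an admissible $|g|$. The price is the crossing-point check $|a-b|\leq L_\ell h_\ell$, the area computation, and the degenerate case. Your formulas for $s_0$ and for both tent areas are right.

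For the final claim you need not re-enter the proofs. Since $0\leq C(1)\leq\overline C$, and both $\min\{1,C/\kappa\}$ and $C^2/(2\kappa)$ are nondecreasing in $C\geq0$, the bounds transfer by monotonicity.
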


\begin{lemma}[Nearest-Endpoint Integral Bound]
  \label{lem:nearest-endpoint-enclosure}
  Let $l<r$ and let $g$ be $L$-Lipschitz on $[l,r]$, with $L\geq0$.
  For $h=r-l$, $T=h(|g(l)|+|g(r)|)/2$, and $P=Lh^2/4$,
  \begin{equation}
    \left|\int_l^r |g(s)|\,ds-T\right|\leq P,
    \qquad
    \max\{0,T-P\}\leq\int_l^r |g(s)|\,ds\leq T+P.
  \end{equation}
\end{lemma}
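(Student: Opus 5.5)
I will prove the two-sided estimate by comparing $|g|$ with the tent-shaped envelope that Lipschitz continuity imposes near each endpoint. Write $a:=|g(l)|$ and $b:=|g(r)|$. Since $\bigl||g(u)|-|g(v)|\bigr|\leq|g(u)-g(v)|$, the function $|g|$ is also $L$-Lipschitz on $[l,r]$. Hence, for every $s\in[l,r]$,
\begin{equation}
  \max\{a-L(s-l),\,b-L(r-s)\}\leq|g(s)|\leq\min\{a+L(s-l),\,b+L(r-s)\}.
\end{equation}
Rather than integrating these envelopes exactly, which would require case analysis on the crossing point, I will use a simpler symmetric device: average the one-sided bounds from the two endpoints, after weighting each by its nearest endpoint.

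Concretely, let $m:=(l+r)/2$. On $[l,m]$ the left-endpoint bound gives $\bigl||g(s)|-a\bigr|\leq L(s-l)$, and on $[m,r]$ the right-endpoint bound gives $\bigl||g(s)|-b\bigr|\leq L(r-s)$. Integrating, and using $\int_l^m(s-l)\,ds=\int_m^r(r-s)\,ds=h^2/8$, yields
\begin{equation}
  \Bigl|\int_l^r|g(s)|\,ds-\tfrac h2(a+b)\Bigr|
  \leq\int_l^m\bigl||g|-a\bigr|\,ds+\int_m^r\bigl||g|-b\bigr|\,ds
  \leq\frac{Lh^2}{8}+\frac{Lh^2}{8}=\frac{Lh^2}{4}=P.
\end{equation}
This holds because $\frac h2(a+b)=\int_l^m a\,ds+\int_m^r b\,ds$. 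The resulting inequality is exactly $|\int_l^r|g|\,ds-T|\leq P$, and it explains the name ``nearest-endpoint''. Integrability is immediate because $g$, being Lipschitz, is continuous.

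The second display follows from the first. The bound $T-P\leq\int_l^r|g|\,ds\leq T+P$ is just the first inequality unpacked. Combining the lower half with $\int_l^r|g|\,ds\geq0$, since the integrand is nonnegative, gives $\max\{0,T-P\}$ as a lower bound.

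The main obstacle is choosing the constant. A naive trapezoid-error argument gives a larger constant, such as $Lh^2/2$ from using a single endpoint, and the exact tent-envelope integral is messy. Splitting at the midpoint and using only the nearest endpoint on each half gives precisely $Lh^2/4$. This avoids needing differentiability of $|g|$, which may fail where $g$ changes sign.
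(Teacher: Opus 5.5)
Your proof is correct and is essentially the paper's argument. You split at the midpoint $m$, bound $\bigl||g(s)|-|g(l)|\bigr|\le L(s-l)$ on $[l,m]$ and $\bigl||g(s)|-|g(r)|\bigr|\le L(r-s)$ on $[m,r]$ via the reverse triangle inequality, integrate to get $Lh^2/8$ on each half, and then use nonnegativity of the integral for the $\max\{0,T-P\}$ lower bound.
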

\begin{proof}
  Set $m=(l+r)/2$.  Lipschitz continuity and the reverse triangle
  inequality give $\big||g(s)|-|g(l)|\big|\leq L(s-l)$ on $[l,m]$
  and $\big||g(s)|-|g(r)|\big|\leq L(r-s)$ on $[m,r]$.
  Integrating each half-interval and using the triangle inequality gives
  \begin{equation}
    \left|\int_l^r |g(s)|\,ds-T\right|
      \leq \frac{L}{2}(m-l)^2+\frac{L}{2}(r-m)^2
      =\frac{Lh^2}{4}.
  \end{equation}
  The constant endpoint contributions sum to $T$.
  The absolute-value bound and nonnegativity of the integral give the
  two-sided enclosure.  The proof requires no derivative of $|g|$.
\end{proof}

\begin{proof}[Proof of Proposition~\ref{prop:curvature-partition}]
  Apply Lemma~\ref{lem:nearest-endpoint-enclosure} on each cell, then
  sum the inequalities.  Strictly increasing endpoints and additivity
  of adjacent integrals give
  \begin{equation}
    C(1)=\sum_{\ell=0}^{N-1}
      \int_{t_\ell}^{t_{\ell+1}}|g(s)|\,ds.
  \end{equation}
  This proves Eq.~(\ref{eq:curvature-enclosure}).  The residual bound
  $|e(a)|\leq C(1)\leq\overline C$ feeds directly into the location
  and regret arguments of Theorem~\ref{thm:integrated-step}.

  For the third-derivative bound, suppose $\phi$ is $C^3$ on an
  open neighbourhood of $[0,1]$ and
  $|\phi'''(s)|\leq L_\ell$ at every point of each cell.
  Since $g'=\phi'''$, the mean value inequality makes $g$
  $L_\ell$-Lipschitz there.  Every cell is contained in $[0,1]$,
  so these bounds supply exactly the hypotheses just used.
\end{proof}

The finite sum specifies an upper bound when its cellwise constants
are valid throughout their intervals.  A maximum over finitely many
sampled third derivatives does not supply this condition.  These bounds
hold in exact arithmetic.  Floating-point derivative evaluation and
outward rounding require separate numerical error control.

\subsubsection{Zero Deviation and Nonnegative Initial Curvature}
\label{app:integrated-edge-cases}

\begin{corollary}[Zero Deviation and Multiple Maximisers]
  Under the $C^2$ setup, $C(1)=0$ implies
  $\phi(a)=q(a)+\phi(0)$ on $[0,1]$.
  If additionally $Q<0$, every true maximiser then equals $\widehat\alpha$.
  For any value of $C(1)$, under $Q<0$ and $\kappa=-2Q>0$, any two
  global maximisers $a,b\in[0,1]$ satisfy $|a-b|\leq2C(1)/\kappa$.
\end{corollary}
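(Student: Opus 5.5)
The corollary has three parts, and I take them in order, each reducing to an identity or bound already established above.

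\emph{Zero deviation.} By Eq.~(\ref{eq:app-curvature-mismatch-integral}), the derivative residual $e(a)=\phi'(a)-q'(a)$ satisfies $|e(a)|\leq C(a)\leq C(1)$ on $[0,1]$. If $C(1)=0$, then $e\equiv0$ on $[0,1]$. Hence the value residual $\mathcal R=\phi-q$ has zero derivative there, and the mean value inequality (Eq.~(\ref{eq:residual-increment}) with $\varepsilon=0$) makes $\mathcal R$ constant. Since $q(0)=0$, that constant is $\mathcal R(0)=\phi(0)$, giving $\phi=q+\phi(0)$ on $[0,1]$. I deliberately avoid assuming $\phi(0)=0$, in keeping with the remark that the earlier arguments do not use it. If moreover $Q<0$, then $q$ is strictly concave with unique constrained maximiser $\widehat\alpha$ (Appendix~\ref{app:quadratic-max}). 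Adding a constant does not change the set of maximisers, so every global maximiser of $\phi$ equals $\widehat\alpha$. Equivalently, Proposition~\ref{prop:curvature-scale-error} gives $|\widehat\alpha-\alpha^\star|\leq0$ for every global maximiser.

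\emph{Two maximisers.} Let $a,b$ be any two global maximisers of $\phi$ on $[0,1]$. The proof of Proposition~\ref{prop:curvature-scale-error} explicitly applies to any global maximiser, not only the smallest one. It therefore gives $|\widehat\alpha-a|\leq C(1)/\kappa$ and $|\widehat\alpha-b|\leq C(1)/\kappa$, and the triangle inequality yields $|a-b|\leq2C(1)/\kappa$.

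\emph{Possible sharper route.} One could aim for the sharper bound $C(1)/\kappa$. Apply the feasible first-order condition Eq.~(\ref{eq:app-constrained-direction}) to $\phi$ at both points: $\phi'(a)(b-a)\leq0$ and $\phi'(b)(a-b)\leq0$. Then write $\phi'(b)-\phi'(a)=-\kappa(b-a)+e(b)-e(a)$. This route, however, only controls $|e(b)-e(a)|$ by $2C(1)$, which does no better than $2C(1)/\kappa$, so the triangle-inequality argument suffices.

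\emph{Main obstacle.} No step is genuinely hard. The one point needing care is confirming that Proposition~\ref{prop:curvature-scale-error} holds for arbitrary global maximisers, including endpoints, without concavity. Its proof states this explicitly: the smallest-maximiser convention only fixes the name of $\alpha^\star$.
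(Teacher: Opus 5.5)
Your proof is correct and follows the paper's own argument. $C(1)=0$ forces $e\equiv0$, so $\phi-q$ is constant and equal to $\phi(0)$. The zero-distance claim follows from the location bound, or directly from uniqueness of the maximiser of the strictly concave $q$. The diameter bound comes from applying that location bound to each maximiser and using the triangle inequality through $\widehat\alpha$.

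One correction to your aside: for $a<b$, $e(b)-e(a)=\int_a^b[\phi''(s)-\phi''(0)]\,ds$ has absolute value at most $C(b)-C(a)\leq C(1)$, not $2C(1)$. So your first-order route (where $\phi'(a)\leq0\leq\phi'(b)$) actually gives the sharper bound $|a-b|\leq C(1)/\kappa$.
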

\begin{proof}
  If $C(1)=0$, the residual derivative satisfies $|e(a)|\leq0$.
  The mean value inequality for $\phi-q$, with derivative zero,
  makes the value residual constant, proving the path identity.  The zero-distance
  conclusion follows from Theorem~\ref{thm:integrated-step}.
  Applying its distance bound to each maximiser and using the triangle
  inequality through $\widehat\alpha$ gives the diameter bound.
\end{proof}

\begin{proposition}[Location Instability for $Q\geq0$]
  For every $Q\geq0$ and $\eta>0$, the path
  $\phi(a)=Qa(a-1)+\eta a^3$ has Taylor quadratic $q(a)=Qa(a-1)$.
  Its smallest quadratic maximiser is zero and its unique true maximiser
  is one, while $|\phi'(a)-q'(a)|\leq3\eta$ on $[0,1]$.
\end{proposition}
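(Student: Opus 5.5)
The statement is the last one above: Location Instability for $Q\geq0$. It makes four claims about the explicit path $\phi(a)=Qa(a-1)+\eta a^3$. The plan is to verify each claim by direct calculation, in order.

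\textbf{Taylor quadratic.} Expanding gives $\phi(a)=-Qa+Qa^2+\eta a^3$. Hence $\phi(0)=0$, $\phi'(0)=-Q$ and $\phi''(0)=2Q$. In the notation of Eq.~(\ref{eq:quadratic-model}), the coefficients are $A=-Q$ and the quadratic coefficient equals $Q$. The Taylor quadratic is therefore $q(a)=-Qa+Qa^2=Qa(a-1)$, as claimed.

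\textbf{Smallest quadratic maximiser.} On $[0,1]$ we have $a(a-1)\leq0$, so $q\leq0$ whenever $Q\geq0$. Also $q(0)=0$, so zero is a maximiser and hence the smallest one. This agrees with the rule after Eq.~(\ref{eq:predicted-step}): here $A+Q=0$, which is not positive, so $\widehat\alpha=0$. The case $Q=0$, where $q\equiv0$, is covered by the same tie convention.

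\textbf{Unique true maximiser.} Note that $\phi(1)=\eta>0=\phi(0)$. For $a\in[0,1)$ we have $Qa(a-1)\leq0$ and $\eta a^3<\eta$, so $\phi(a)<\eta=\phi(1)$. Thus $a=1$ is the unique maximiser of $\phi$ on $[0,1]$.

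\textbf{Derivative residual.} The residual is $\phi(a)-q(a)=\eta a^3$, so $\phi'(a)-q'(a)=3\eta a^2$. On $[0,1]$ this lies between $0$ and $3\eta$, giving the bound $|\phi'(a)-q'(a)|\leq3\eta$.

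Combined, these show that the location error stays at $1$ for every $\eta>0$, while the derivative mismatch tends to zero. For completeness one can also note $C(1)=\int_0^1 6\eta s\,ds=3\eta$.

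No step is a real obstacle. The only point needing care is the tie at $Q=0$: there $q\equiv0$, every point maximises $q$, and the smallest-maximiser convention is what selects $0$.
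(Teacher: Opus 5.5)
Your proposal is correct and follows essentially the same route as the paper. You read off $\phi'(0)=-Q$ and $\phi''(0)=2Q$, use $a(a-1)\leq0$ to get $q\leq0=q(0)$, show $\phi(a)<\eta=\phi(1)$ for $a<1$ (the paper phrases this as $\phi(a)\leq\eta a^3\leq\eta a\leq\eta$ with equality forcing $a=1$), and compute the residual $3\eta a^2\in[0,3\eta]$; your side remark $C(1)=3\eta$ is also correct.
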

\begin{proof}
  Differentiation gives $\phi'(0)=-Q$ and $\phi''(0)=2Q$,
  identifying the Taylor model.  Since $a^2\leq a$ on $[0,1]$,
  $q(a)\leq0=q(0)=q(1)$, so its smallest maximiser is zero.
  Moreover $\phi(a)\leq\eta a^3\leq\eta a\leq\eta=\phi(1)$, and
  equality at a maximum forces $a=1$ because $\eta>0$.
  Finally $\phi'(a)-q'(a)=3\eta a^2\in[0,3\eta]$.
\end{proof}
Thus an arbitrarily small derivative residual does not imply a small
location error when the initial quadratic curvature is nonnegative.

\begin{proposition}[Value Bound for an Arbitrary Quadratic]
  For any $A,Q$, suppose $\phi$ is differentiable on a neighbourhood
  of $[0,1]$ and $|\phi'(a)-(A+2Qa)|\leq\varepsilon$ for every
  $a\in[0,1]$, with $\varepsilon\geq0$.
  If $\alpha^\star$ maximises $\phi$ and $\widehat\alpha$ maximises
  $q(a)=Aa+Qa^2$, both on $[0,1]$, then
  $0\leq\phi(\alpha^\star)-\phi(\widehat\alpha)\leq\varepsilon$.
\end{proposition}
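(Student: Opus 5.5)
The plan is to compare $\phi$ with the quadratic $q$ through the value residual $\mathcal R(a):=\phi(a)-q(a)$. The derivative of this residual, $\mathcal R'(a)=\phi'(a)-(A+2Qa)$, satisfies $|\mathcal R'|\leq\varepsilon$ on $[0,1]$ by hypothesis. The lower bound $\phi(\alpha^\star)-\phi(\widehat\alpha)\geq0$ is immediate, since $\alpha^\star$ maximises $\phi$ over $[0,1]$ and $\widehat\alpha\in[0,1]$ is feasible. For the upper bound, I would decompose
\begin{equation*}
  \phi(\alpha^\star)-\phi(\widehat\alpha)
  =\bigl[\mathcal R(\alpha^\star)-\mathcal R(\widehat\alpha)\bigr]
   +\bigl[q(\alpha^\star)-q(\widehat\alpha)\bigr].
\end{equation*}
The second bracket is nonpositive because $\widehat\alpha$ maximises $q$ on $[0,1]$ and $\alpha^\star\in[0,1]$.

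It remains to bound the residual increment. The mean value inequality on the interval between $\widehat\alpha$ and $\alpha^\star$ applies, because $\phi$ (hence $\mathcal R$) is differentiable on a neighbourhood of $[0,1]$. This gives $|\mathcal R(\alpha^\star)-\mathcal R(\widehat\alpha)|\leq\varepsilon|\alpha^\star-\widehat\alpha|\leq\varepsilon$, since both points lie in $[0,1]$. Alternatively, I could apply Lemma~\ref{lem:increment-comparison} to $\mathcal R$ and $\pm\varepsilon s$; but that lemma assumes continuous differentiability, whereas here only differentiability is assumed. So the plain mean value theorem, as used in Eq.~(\ref{eq:residual-increment}), is the right tool. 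Combining the two brackets yields the bound $\varepsilon$.

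No real obstacle is expected. The only point to note is that, unlike Theorem~\ref{thm:integrated-step}, we cannot exploit strict concavity of $q$ to gain the term $-\tfrac{\kappa}{2}d^2$. This is because $Q$ has arbitrary sign, and the proof must not rely on uniqueness of either maximiser. Using only maximality of $\widehat\alpha$ for $q$ and the trivial distance bound $d\leq1$ avoids this. The argument also holds for any maximisers, not just the smallest ones, and does not use $\phi(0)=0$.
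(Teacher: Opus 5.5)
Your proposal is correct and follows essentially the same argument as the paper: you split $\phi(\alpha^\star)-\phi(\widehat\alpha)$ into a residual increment and a quadratic gap. The residual increment is bounded by $\varepsilon|\alpha^\star-\widehat\alpha|\leq\varepsilon$ via the mean value inequality, and the quadratic gap is nonpositive by maximality of $\widehat\alpha$, with nonnegativity coming from maximality of $\alpha^\star$. Your remark is also apt: under mere differentiability, the plain mean value theorem is the right tool, not the increment-comparison lemma, which assumes continuous differentiability.
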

\begin{proof}
  The residual increment is at most
  $\varepsilon|\alpha^\star-\widehat\alpha|$ by the mean value
  inequality, and the quadratic gap is nonpositive by maximality.
  Their sum is bounded by $\varepsilon$ because both scales lie in
  $[0,1]$.  Maximality of $\alpha^\star$ gives nonnegativity.
\end{proof}

\begin{corollary}[Combined Integrated and Taylor Regret Bounds]
  \label{cor:combined-regret}
  Under Theorem~\ref{thm:integrated-step} and
  Assumption~\ref{ass:curvature-variation},
  \begin{equation}
    \begin{aligned}
      \phi(\alpha^\star)-\phi(\widehat\alpha)
      &\leq\min\left\{\frac{M}{6}
          \bigl((\alpha^\star)^3+\widehat\alpha^3\bigr),
          \frac{C(1)^2}{2\kappa}\right\}\\
      &\leq\min\left\{\frac{M}{3},\frac{M^2}{8\kappa}\right\}.
    \end{aligned}
  \end{equation}
\end{corollary}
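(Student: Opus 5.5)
The plan is to bound the regret twice, by two separate arguments, and take the smaller bound. The second of these bounds is already Theorem~\ref{thm:integrated-step}. I read Assumption~\ref{ass:curvature-variation} as the third-derivative specialisation mentioned in the main text: $\phi$ is $C^3$ on an open neighbourhood of $[0,1]$, and $|\phi'''(s)|\leq M$ there. The first step is to convert this into control of $C$. Since $|\phi''(s)-\phi''(0)|\leq Ms$, integrating gives $C(a)\leq Ma^2/2$. In particular $C(1)\leq M/2$.

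For the Taylor-type bound I define the value residual as in the proof of Theorem~\ref{thm:integrated-step}, but centred at zero:
\begin{equation*}
  \mathcal R(a):=\phi(a)-\phi(0)-q(a),
  \qquad
  \mathcal R(0)=0.
\end{equation*}
Its derivative is $e(a)=\phi'(a)-q'(a)$. By Eq.~(\ref{eq:app-curvature-mismatch-integral}),
\begin{equation*}
  |e(a)|\leq C(a)\leq \frac{M}{2}a^2 .
\end{equation*}
Lemma~\ref{lem:power-increment} with $n=2$ and $B=M/2$ then gives $|\mathcal R(a)|\leq Ma^3/6$ on $[0,1]$.

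Next I split the regret as
\begin{equation*}
  \phi(\alpha^\star)-\phi(\widehat\alpha)
  =\bigl[\mathcal R(\alpha^\star)-\mathcal R(\widehat\alpha)\bigr]
  +\bigl[q(\alpha^\star)-q(\widehat\alpha)\bigr].
\end{equation*}
The second bracket is nonpositive because $\widehat\alpha$ maximises $q$ on $[0,1]$. The first bracket is at most $|\mathcal R(\alpha^\star)|+|\mathcal R(\widehat\alpha)|$, which is at most $\tfrac{M}{6}\bigl((\alpha^\star)^3+\widehat\alpha^3\bigr)$. Nonnegativity of the regret follows from maximality of $\alpha^\star$. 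Combining this with Eq.~(\ref{eq:integrated-step-bounds}) yields the first line of the claimed bound.

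For the second line I bound each entry of the minimum separately and use that $\min$ is monotone in each argument. Both scales lie in $[0,1]$, so the cubic term is at most $M/3$. Using $C(1)\leq M/2$, the integrated term satisfies $C(1)^2/(2\kappa)\leq M^2/(8\kappa)$.

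The only delicate point is the first step. I have to check that Assumption~\ref{ass:curvature-variation} indeed supplies a pointwise third-derivative bound on the whole segment, or at least the Lipschitz bound $|\phi''(s)-\phi''(0)|\leq Ms$. I also have to make sure the residual is centred so that $\mathcal R(0)=0$, which is what Lemma~\ref{lem:power-increment} needs. If the assumption is stated instead as $C(a)\leq Ma^2/2$ directly, the argument goes through unchanged from that point.
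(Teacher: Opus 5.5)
Your proof is correct and follows the paper's route. The paper applies Theorem~\ref{thm:quadratic-step} to the centred path $\phi-\phi(0)$ and combines it with Theorem~\ref{thm:integrated-step}; for the second line it uses $C(1)\leq M/2$ from Corollary~\ref{cor:curvature-third-derivative} together with $\alpha^\star,\widehat\alpha\in[0,1]$. The only difference is cosmetic: you inline that theorem's three-bracket argument and obtain $|e(a)|\leq Ma^2/2$ via $C(a)\leq Ma^2/2$, where the paper applies Lemma~\ref{lem:power-increment} a second time with $n=1$.
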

\begin{proof}
  Apply the Taylor result in Theorem~\ref{thm:quadratic-step} to the
  centred path $\phi-\phi(0)$.  It has the same derivatives and
  maximisers, and centring cancels from its regret.  Combining that bound
  with Theorem~\ref{thm:integrated-step} gives the first minimum.
  Corollary~\ref{cor:curvature-third-derivative} gives $C(1)\leq M/2$, and
  the two scales are in $[0,1]$, giving the second minimum.
\end{proof}

\begin{corollary}[Third-Derivative Specialisation]
  \label{cor:curvature-third-derivative}
  Under Proposition~\ref{prop:curvature-scale-error}, suppose additionally
  that $\phi\in C^3(\mathcal O)$ and $|\phi'''(s)|\leq M$ on $[0,1]$
  for $M\geq0$.  Then, for every $a\in[0,1]$,
  \begin{equation}
    C(a)\leq\frac{Ma^2}{2},
    \qquad
    |\widehat\alpha-\alpha^\star|
      \leq\frac{M(\alpha^\star)^2}{2\kappa}
      \leq\frac{M}{2\kappa}.
  \end{equation}
\end{corollary}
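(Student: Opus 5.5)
The plan is a direct pointwise estimate of the integrand defining $C(a)$, followed by substitution into Proposition~\ref{prop:curvature-scale-error}. Set $g(s):=\phi''(s)-\phi''(0)$, as in the finite-partition construction. Under $\phi\in C^3(\mathcal O)$ we have $g'=\phi'''$ on $[0,1]$, so $|g'(s)|\leq M=M s^0$ there. Lemma~\ref{lem:power-increment}, applied with $n=0$ and $B=M$, then gives $|g(s)|=|g(s)-g(0)|\leq Ms$ for every $s\in[0,1]$. An equivalent route is the mean value inequality.

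Integrating this pointwise bound gives the first claim. Since $|g|$ is continuous, hence integrable, monotonicity of the integral yields
\[
C(a)=\int_0^a|g(s)|\,ds\leq\int_0^a Ms\,ds=\frac{Ma^2}{2},\qquad a\in[0,1].
\]

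For the location bound, I would not pass to $C(1)$. Instead I would use the sharper middle link of Eq.~(\ref{eq:curvature-scale-error}), namely $|\widehat\alpha-\alpha^\star|\leq C(\alpha^\star)/\kappa$, which holds for any global maximiser $\alpha^\star\in[0,1]$. Evaluating the bound just proved at $a=\alpha^\star$ gives $C(\alpha^\star)\leq M(\alpha^\star)^2/2$, and hence $|\widehat\alpha-\alpha^\star|\leq M(\alpha^\star)^2/(2\kappa)$. Finally, $0\leq\alpha^\star\leq1$ gives $(\alpha^\star)^2\leq1$, and $M\geq0$, $\kappa>0$ then give the last inequality $M(\alpha^\star)^2/(2\kappa)\leq M/(2\kappa)$.

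No step is a genuine obstacle. The only care needed is to check that the hypotheses of Proposition~\ref{prop:curvature-scale-error} ($\phi\in C^2$, $\phi''(0)<0$) are inherited, which the statement assumes. One should also evaluate at the true maximiser $\alpha^\star$ rather than at $1$, so that the factor $(\alpha^\star)^2$ is retained.
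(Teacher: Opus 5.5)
Your proposal is correct and follows the paper's proof essentially step for step. You bound $|\phi''(s)-\phi''(0)|\leq Ms$, integrate it to get $C(a)\leq Ma^2/2$, and then substitute into the middle link $C(\alpha^\star)/\kappa$ of Proposition~\ref{prop:curvature-scale-error}, using $(\alpha^\star)^2\leq1$ for the last inequality. The only difference is that you obtain the pointwise bound from Lemma~\ref{lem:power-increment} with $n=0$, where the paper writes the fundamental theorem of calculus directly.
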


\begin{proof}
  A further application of the fundamental theorem of calculus gives
  \begin{equation}
    |\phi''(s)-\phi''(0)|
      =\left|\int_0^s\phi'''(u)\,du\right|
      \leq\int_0^s|\phi'''(u)|\,du\leq Ms.
  \end{equation}
  Integrating this inequality from zero to $a$ yields
  $C(a)\leq\int_0^a Ms\,ds=Ma^2/2$.
  Proposition~\ref{prop:curvature-scale-error}, $\kappa>0$, and
  $0\leq\alpha^\star\leq1$ give the asserted distance bound.
\end{proof}

\subsubsection{Utility Regret and the Third-Derivative Bound}
\label{app:third-derivative-specialisation}

\begin{assumption}[Directional Curvature Variation]
  \label{ass:curvature-variation}
  Let $\mathcal O\subseteq\mathbb R$ be open with
  $[0,1]\subseteq\mathcal O$.  The utility gain along the path satisfies
  $\phi\in C^3(\mathcal O)$, and there is a constant $M\geq0$ such that
  \begin{equation}
    \label{eq:third-derivative-bound}
    |\phi'''(\alpha)|\leq M,
    \qquad \alpha\in[0,1].
  \end{equation}
\end{assumption}

\begin{theorem}[Quadratic Step-Scale Approximation]
  \label{thm:quadratic-step}
  Under Assumption~\ref{ass:curvature-variation}, suppose $\phi(0)=0$.
  With the quantities in
  Eqs.~(\ref{eq:directional-coefficients}) and~(\ref{eq:quadratic-model}),
  \begin{equation}
    \label{eq:step-regret}
    0\leq\phi(\alpha^\star)-\phi(\widehat\alpha)
    \leq\frac{M}{6}\bigl((\alpha^\star)^3+\widehat\alpha^3\bigr)
    \leq\frac{M}{3}.
  \end{equation}
  If additionally $Q<0$, let $\kappa:=-2Q>0$.  Then
  \begin{equation}
    \label{eq:step-distance}
    |\widehat\alpha-\alpha^\star|
    \leq\frac{M}{2\kappa}(\alpha^\star)^2
    \leq\frac{M}{2\kappa}.
  \end{equation}
\end{theorem}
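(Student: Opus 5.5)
The plan is to work with the value residual $\mathcal R(a):=\phi(a)-q(a)$. Here $q(a)=Aa+Qa^2$ is the Taylor quadratic of $\phi$ at zero, so it uses $\phi(0)=0$, $\phi'(0)=A$ and $\phi''(0)=2Q$. Then $\mathcal R(0)=\mathcal R'(0)=\mathcal R''(0)=0$ and $\mathcal R'''=\phi'''$, so $|\mathcal R'''|\leq M$ on $[0,1]$ by Assumption~\ref{ass:curvature-variation}. I will raise this third-derivative bound to a value bound in three steps.

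\textbf{Step 1.} Lemma~\ref{lem:power-increment} applied to $\mathcal R''$ with $n=0$ gives $|\mathcal R''(s)|\leq Ms$.

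\textbf{Step 2.} Applying it again to $\mathcal R'$ with $n=1$ gives $|\mathcal R'(s)|\leq Ms^2/2$.

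\textbf{Step 3.} Applying it a third time to $\mathcal R$ with $n=2$ gives $|\mathcal R(a)|\leq Ma^3/6$.

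For the regret bound, I decompose
\[
\phi(\alpha^\star)-\phi(\widehat\alpha)=\bigl(q(\alpha^\star)-q(\widehat\alpha)\bigr)+\mathcal R(\alpha^\star)-\mathcal R(\widehat\alpha).
\]
\begin{itemize}
\item The first term is $\leq0$ because $\widehat\alpha$ maximises $q$ on $[0,1]$. This holds for every sign of $Q$, and Appendix~\ref{app:quadratic-max} guarantees the maximiser exists.
\item The residual terms are bounded by $M(\alpha^\star)^3/6+M\widehat\alpha^3/6$.
\item Since both scales lie in $[0,1]$, the total is at most $M/3$.
\item Nonnegativity follows from the maximality of $\alpha^\star$.
\end{itemize}
This proves Eq.~(\ref{eq:step-regret}).

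For the location bound with $Q<0$, I reuse Proposition~\ref{prop:curvature-scale-error}, specifically its intermediate inequality Eq.~(\ref{eq:app-derivative-mismatch-distance}):
\[
|\widehat\alpha-\alpha^\star|\leq\frac{|\phi'(\alpha^\star)-q'(\alpha^\star)|}{\kappa}=\frac{|\mathcal R'(\alpha^\star)|}{\kappa}.
\]
Step 2 bounds the numerator by $M(\alpha^\star)^2/2$, which gives the first inequality of Eq.~(\ref{eq:step-distance}). The inequality $\alpha^\star\leq1$ then gives the second. Corollary~\ref{cor:curvature-third-derivative} states the same thing, and I may cite it directly.

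I expect no real obstacle. The one place to be careful is the regret bound: it must not rely on concavity or on $Q<0$, and it must allow endpoint maximisers. The decomposition above needs only global optimality of each scale for its own function, so it meets these requirements. A second check is that Lemma~\ref{lem:power-increment} is applied to functions whose derivatives are continuous on a neighbourhood of $[0,1]$. The $C^3$ hypothesis makes $\mathcal R,\mathcal R',\mathcal R''$ continuously differentiable there, so each application is valid.
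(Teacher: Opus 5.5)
Your proposal is correct and matches the paper's proof essentially step for step. Both use three successive applications of Lemma~\ref{lem:power-increment} (with $n=0,1,2$) to get $|\phi'-q'|\le M\alpha^2/2$ and $|\phi-q|\le M\alpha^3/6$, the same three-term regret decomposition relying only on optimality of $\widehat\alpha$ for $q$, and Proposition~\ref{prop:curvature-scale-error} for the location bound. The one cosmetic difference is that you insert your Step~2 derivative-residual bound directly into Eq.~(\ref{eq:app-derivative-mismatch-distance}), whereas the paper passes through $C(\alpha^\star)\le M(\alpha^\star)^2/2$ from Corollary~\ref{cor:curvature-third-derivative}; both give the same estimate.
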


\begin{proof}[Proof of Theorem~\ref{thm:quadratic-step}]
  Under Assumption~\ref{ass:curvature-variation}, apply
  Lemma~\ref{lem:power-increment} with $n=0$ to $\phi''$,
  whose derivative is bounded by $M$.  This gives
  $|\phi''(\alpha)-\phi''(0)|\leq M\alpha$.
  The function $f_1(\alpha)=\phi'(\alpha)-\phi''(0)\alpha$ has
  derivative $\phi''(\alpha)-\phi''(0)$.  Applying the same lemma
  with $n=1$ yields
  \begin{equation}
    |\phi'(\alpha)-(A+\phi''(0)\alpha)|\leq\frac{M}{2}\alpha^2.
  \end{equation}
  Next take $f_2(\alpha)=\phi(\alpha)-q(\alpha)$.
  Its derivative is the preceding error, and $f_2(0)=0$.
  The lemma with $n=2$ and coefficient $M/2$ gives the function-value
  bound.  Using $\phi''(0)=2Q$, the two bounds are
  \begin{equation}
    \label{eq:app-taylor-remainders}
    |\phi(\alpha)-q(\alpha)|\leq\frac{M}{6}\alpha^3,
    \qquad
    |\phi'(\alpha)-q'(\alpha)|\leq\frac{M}{2}\alpha^2.
  \end{equation}
  Optimality of $\alpha^\star$ gives the nonnegative lower bound in
  Eq.~(\ref{eq:step-regret}).  Optimality of $\widehat\alpha$ for $q$
  gives
  \begin{equation}
    \begin{aligned}
      \phi(\alpha^\star)-\phi(\widehat\alpha)
      &=\bigl[\phi(\alpha^\star)-q(\alpha^\star)\bigr]
        +\bigl[q(\alpha^\star)-q(\widehat\alpha)\bigr]
        +\bigl[q(\widehat\alpha)-\phi(\widehat\alpha)\bigr]\\[2pt]
      &\leq\frac{M}{6}
         \bigl((\alpha^\star)^3+\widehat\alpha^3\bigr)
      \leq\frac{M}{3}.
    \end{aligned}
  \end{equation}

  For the location bound, $Q<0$ gives $\kappa=-2Q=-\phi''(0)>0$.
  Proposition~\ref{prop:curvature-scale-error} and
  Corollary~\ref{cor:curvature-third-derivative} give directly
  \begin{equation}
    |\widehat\alpha-\alpha^\star|
      \leq\frac{C(\alpha^\star)}{\kappa}
      \leq\frac{M(\alpha^\star)^2}{2\kappa}
      \leq\frac{M}{2\kappa}.
  \end{equation}
  This includes boundary maxima and requires no concavity of $\phi$.
\end{proof}

\section{Finite Action Sets}
\label{app:grid}

Let $\mathcal{A}=\{a_0,\ldots,a_m\}$, with $m\geq1$, be ordered with
$0=a_0<a_1<\cdots<a_m=1$, and let
$h_{\mathcal{A}}:=\max_{\ell=0,\ldots,m-1}(a_{\ell+1}-a_\ell)$.
The sampled optimum $\alpha^\star_{\mathcal{A}}$ is defined in
Eq.~(\ref{eq:grid-optimum}).

\begin{proposition}[Grid Resolution]
  \label{prop:grid-resolution}
  If $\phi$ is continuous and concave on $[0,1]$ with a unique maximiser
  $\alpha^\star$, then
  \begin{equation}
    |\alpha^\star_{\mathcal{A}}-\alpha^\star|
      \leq h_{\mathcal{A}}.
  \end{equation}
  Separately, if $\phi\in C^2(\mathcal{O})$ for an open neighbourhood
  $\mathcal{O}$ of $[0,1]$, $\alpha^\star\in(0,1)$ is a maximiser, and
  $|\phi''(\alpha)|\leq B_\phi$ on $[0,1]$ with $B_\phi\geq0$, then
  \begin{equation}
    0\leq\phi(\alpha^\star)-\phi(\alpha^\star_{\mathcal{A}})
      \leq\frac{B_\phi}{8}h_{\mathcal{A}}^2.
  \end{equation}
\end{proposition}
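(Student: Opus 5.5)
The first claim follows from strict monotonicity of $\phi$ on each side of $\alpha^\star$, which confines the grid optimum to the endpoints of the grid cell containing $\alpha^\star$. The second claim follows from a second-order Taylor bound at the interior maximiser, applied to the grid point nearest to it.

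\emph{Location bound.} First I show that a concave $\phi$ with unique maximiser $\alpha^\star$ is strictly increasing on $[0,\alpha^\star]$ and strictly decreasing on $[\alpha^\star,1]$. Take $x<y\leq\alpha^\star$ and write $y=\lambda x+(1-\lambda)\alpha^\star$ with $\lambda\in(0,1]$. Concavity gives
\begin{equation}
  \phi(y)\geq\lambda\phi(x)+(1-\lambda)\phi(\alpha^\star)\geq\phi(x).
\end{equation}
Suppose equality held throughout with $\lambda<1$. Then $\phi(x)=\phi(\alpha^\star)$, and uniqueness of the maximiser forces $x=\alpha^\star$, which is impossible since $x<y\leq\alpha^\star$. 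If $\lambda=1$, then $y=x$, again impossible. Hence $\phi(x)<\phi(y)$. The argument on $[\alpha^\star,1]$ is symmetric.

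Next choose the cell index $\ell$ with $a_\ell\leq\alpha^\star\leq a_{\ell+1}$; it exists because $a_0=0$ and $a_m=1$. Every grid point to the left of the cell has strictly smaller utility than $a_\ell$, and every grid point to the right has strictly smaller utility than $a_{\ell+1}$. So every grid maximiser, and in particular the smallest one $\alpha^\star_{\mathcal A}$, lies in $\{a_\ell,a_{\ell+1}\}$. Both points are within $a_{\ell+1}-a_\ell\leq h_{\mathcal A}$ of $\alpha^\star$.

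\emph{Value bound.} The lower bound $0\leq\phi(\alpha^\star)-\phi(\alpha^\star_{\mathcal A})$ holds because $\alpha^\star$ is a global maximiser. For the upper bound:
\begin{enumerate}[nosep,leftmargin=1.5em]
  \item Since $\alpha^\star\in(0,1)$ is an interior maximiser of a differentiable function, $\phi'(\alpha^\star)=0$.
  \item Let $g$ be the endpoint of the cell $[a_\ell,a_{\ell+1}]$ nearest to $\alpha^\star$, so that $|g-\alpha^\star|\leq h_{\mathcal A}/2$.
  \item Apply the integral-form Taylor expansion used in the proof of Proposition~\ref{prop:pathwise-progress}, now centred at $\alpha^\star$ and valid for either sign of $g-\alpha^\star$:
  \begin{equation}
    \phi(g)=\phi(\alpha^\star)+\int_{\alpha^\star}^{g}(g-s)\phi''(s)\,ds.
  \end{equation}
  \item Bound the integral using $|\phi''|\leq B_\phi$:
  \begin{equation}
    \phi(\alpha^\star)-\phi(g)\leq\frac{B_\phi}{2}(g-\alpha^\star)^2\leq\frac{B_\phi}{8}h_{\mathcal A}^2.
  \end{equation}
  \item Grid optimality gives $\phi(\alpha^\star_{\mathcal A})\geq\phi(g)$, which completes the bound.
\end{enumerate}

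The main point needing care is the strictness argument in the first part, where uniqueness is what excludes plateaus. In the Taylor step, the orientation of the integral when $g<\alpha^\star$ must be handled. Rewriting it as $\int_0^{1}(1-t)\,\phi''\bigl(\alpha^\star+t(g-\alpha^\star)\bigr)(g-\alpha^\star)^2\,dt$ treats both orientations uniformly.
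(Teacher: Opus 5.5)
Your proposal is correct and follows the paper's proof essentially step for step. Concavity plus uniqueness of $\alpha^\star$ makes every grid point outside the bracketing cell strictly worse than the adjacent cell endpoint. The value bound then comes from a second-order expansion about the interior critical point, evaluated at the nearest grid point. Your rescaled remainder $\int_0^1(1-t)\phi''(\alpha^\star+t(g-\alpha^\star))(g-\alpha^\star)^2\,dt$ is exactly the paper's reparametrised path $\psi$; the paper bounds it by increment comparisons rather than the integral form.

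One cosmetic slip: for $x<y\leq\alpha^\star$ the weight $\lambda=(\alpha^\star-y)/(\alpha^\star-x)$ lies in $[0,1)$, not $(0,1]$. Your $\lambda<1$ branch already handles every admissible case, including $y=\alpha^\star$.
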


\begin{proof}
  For the location bound, take grid points $a_{\mathrm L},a_{\mathrm R}$ bracketing
  $\alpha^\star$, and write $g=\alpha^\star_{\mathcal A}$.
  Uniqueness gives $\phi(x)<\phi(\alpha^\star)$ whenever
  $x\neq\alpha^\star$.  Suppose $g<a_{\mathrm L}$.
  If $a_{\mathrm L}=\alpha^\star$, this contradicts grid optimality.
  If $a_{\mathrm L}<\alpha^\star$, concavity and the strict value gap imply
  $\phi(g)<\phi(a_{\mathrm L})$, again contradicting grid optimality.
  The symmetric argument excludes $g>a_{\mathrm R}$.
  Thus $a_{\mathrm L}\leq g\leq a_{\mathrm R}$, and
  $|g-\alpha^\star|\leq a_{\mathrm R}-a_{\mathrm L}\leq h_{\mathcal A}$.
  When $\alpha^\star$ is a grid point, take
  $a_{\mathrm L}=a_{\mathrm R}=\alpha^\star$.

  For the value bound, interior constrained optimality is local
  optimality on the real line, so $\phi'(\alpha^\star)=0$.
  Choose a nearest grid point $b$ with
  $|b-\alpha^\star|\leq h_{\mathcal A}/2$ and reparametrise the
  segment between these points:
  \begin{equation}
    \psi(u)=\phi(\alpha^\star+u(b-\alpha^\star))-\phi(\alpha^\star),
    \qquad 0\leq u\leq1.
  \end{equation}
  Its derivatives are
  $\psi'(u)=\phi'(\alpha^\star+u(b-\alpha^\star))(b-\alpha^\star)$
  and
  $\psi''(u)=\phi''(\alpha^\star+u(b-\alpha^\star))(b-\alpha^\star)^2$.
  The segment lies in $[0,1]$.  Hence $\psi(0)=\psi'(0)=0$ and
  $\psi''(u)\geq-B_\phi(b-\alpha^\star)^2$.
  The two increment comparisons used above give the lower-bound formula
  also for zero initial derivative.  Applying it to $\psi$ at one yields
  $\phi(b)-\phi(\alpha^\star)\geq-B_\phi(b-\alpha^\star)^2/2$.
  Since $\phi(g)\geq\phi(b)$, the upper bound follows from the
  nearest-point distance.  Continuous optimality gives the lower bound.
\end{proof}

\paragraph{Sampled crossings.}
Suppose consecutive grid points $\alpha_+<\alpha_-$ satisfy
$\phi(\alpha_+)\geq0$ and $\phi(\alpha_-)<0$.
Continuity guarantees at least one zero in $[\alpha_+,\alpha_-)$.
Identifying that zero as the first positive zero requires additional
control of the path between samples.  Nonnegative values at every grid
point establish nonnegativity on the sampled set.

\begin{proof}
  The intermediate value theorem gives a zero in
  $[\alpha_+,\alpha_-]$.  The strict negative value at $\alpha_-$
  excludes that endpoint.
\end{proof}

\paragraph{Oracle endpoint comparison.}
The inclusion $\{0,1\}\subseteq\mathcal{B}$ implies
$U_{\mathrm{step}}\geq U_{\mathrm{halt}}$.

\begin{proof}
  Maximisation over $\mathcal{B}$ includes the utilities at both zero and
  one, so its value is at least their maximum.
\end{proof}

\section{Affine-Logit Utilities}
\label{sec:linear-readout}

This section specialises the path analysis to an average of log-softmax
utilities.  Fix an example $i$, a finite nonempty reference-token index set $M_i$, and a
finite nonempty vocabulary $\mathcal{V}$.  For each $j\in M_i$, let
$z_{0,j}$ and $v_j$ be real vectors indexed by $\mathcal V$, representing
the initial logits and their displacement, respectively.  Let
$z_j(\alpha)=z_{0,j}+\alpha v_j$ be the resulting affine logit path,
$p_j(\alpha):=\operatorname{softmax}(z_j(\alpha))$ its distribution, and
$y_{i,j}\in\mathcal{V}$ its reference label.  The component
$p_{j,a}(\alpha)$ is the probability of vocabulary entry $a$.
Suppose the utility along the segment satisfies
\begin{equation}
  \label{eq:affine-logit-utility}
  U(\gamma(\alpha))
    =\frac{1}{|M_i|}\sum_{j\in M_i}\log p_{j,y_{i,j}}(\alpha).
\end{equation}
All softmax probabilities are strictly positive.  For two such
distributions, use the finite KL divergence
$D_{\mathrm{KL}}(p\|\widetilde p)
 :=\sum_{a\in\mathcal{V}}p_a(\log p_a-\log\widetilde p_a)$.

\subsection{Directional Curvature}
\label{app:softmax-curvature}

Write the probability-weighted variance of a logit displacement as
\begin{equation}
  \label{eq:fisher-variance}
  V(p,v):=\sum_{a\in\mathcal V}p_a v_a^2
           -\left(\sum_{a\in\mathcal V}p_a v_a\right)^2.
\end{equation}
For the basis vector $e_{y_{i,j}}$ of the reference label, the
derivatives of each token utility are
\begin{equation}
  \label{eq:linear-readout-curvature}
  \begin{aligned}
    \frac{d}{d\alpha}\log p_{j,y_{i,j}}(\alpha)
      &=(e_{y_{i,j}}-p_j(\alpha))^\top v_j,\\
    \frac{d^2}{d\alpha^2}\log p_{j,y_{i,j}}(\alpha)
      &=-V(p_j(\alpha),v_j).
  \end{aligned}
\end{equation}
The variance is nonnegative and is strictly positive when the
displacement is nonconstant and all probabilities are positive.

\begin{proof}
  The softmax derivative satisfies
  $p'_{j,a}(\alpha)
    =p_{j,a}(\alpha)(v_{j,a}-p_j(\alpha)^\top v_j)$.
  Positivity allows differentiation of its logarithm by division by
  $p_{j,a}(\alpha)$.  At the reference label, this gives the first
  formula in Eq.~(\ref{eq:linear-readout-curvature}).
  Differentiate the resulting directional term:
  \begin{equation}
    \begin{aligned}
      \frac{d}{d\alpha}
        \left(v_{j,y_{i,j}}-\sum_a p_{j,a}(\alpha)v_{j,a}\right)
        &=-\sum_a p'_{j,a}(\alpha)v_{j,a}\\
        &=-\sum_a p_{j,a}(\alpha)v_{j,a}^2
           +\left(\sum_a p_{j,a}(\alpha)v_{j,a}\right)^2.
    \end{aligned}
  \end{equation}
  This is the negative variance in the second formula.
  For fixed $p,v$, put $\mu=\sum_a p_a v_a$.
  Expansion of the centred sum, using $\sum_a p_a=1$, gives
  $V(p,v)=\sum_a p_a(v_a-\mu)^2\geq0$.
  If the variance were zero, every nonnegative summand would vanish.
  With all $p_a>0$, this forces $v_a=\mu$ for every $a$.
  A nonconstant displacement therefore has strictly positive variance.
\end{proof}

\subsection{Endpoint Decomposition and Shape}
\label{app:endpoint-proof}

\begin{proof}[Proof of Proposition~\ref{prop:linear-endpoint}]
  For a logit vector $z$, write
  $\Lambda(z)=\log\sum_{a\in\mathcal V}\exp z_a$.
  The log-softmax identity is $\log p_a=z_a-\Lambda(z)$.
  For arbitrary endpoint logits $z,w$ and their distributions
  $p=\operatorname{softmax}(z)$ and
  $\widetilde p=\operatorname{softmax}(w)$, substitution into the
  finite KL sum and use of $\sum_a p_a=1$ give
  \begin{equation}
    \begin{aligned}
      D_{\mathrm{KL}}(p\|\widetilde p)
        &=\Lambda(w)-\Lambda(z)-\sum_a p_a(w_a-z_a),\\
      \log\widetilde p_a-\log p_a
        &=(e_a-p)^\top(w-z)-D_{\mathrm{KL}}(p\|\widetilde p).
    \end{aligned}
  \end{equation}
  To establish the sign, apply $\log x\leq x-1$ to
  $x=\widetilde p_a/p_a>0$, multiply by $p_a$, and rearrange.
  This gives $p_a-\widetilde p_a\leq
  p_a(\log p_a-\log\widetilde p_a)$.  Summing yields
  \begin{equation}
    0=\sum_a(p_a-\widetilde p_a)\leq
       D_{\mathrm{KL}}(p\|\widetilde p).
  \end{equation}
  Apply the endpoint identity to
  $z=z_{0,j}$, $w=z_{0,j}+v_j$, and reference label $y_{i,j}$,
  then average over $M_i$.  Finite-sum differentiation in
  Eq.~(\ref{eq:linear-readout-curvature}) identifies the average
  directional term with $A=\phi'(0)$.  The result is
  $\Delta U=A-\mathcal C$, with $\mathcal C\geq0$.
  The condition $A>0,\Delta U<0$ is consequently equivalent to
  $0<A<\mathcal C$.
  Finally, the same finite-sum derivative chain has second derivative
  $-|M_i|^{-1}\sum_j V(p_j(\alpha),v_j)$.
  Applying Eq.~(\ref{eq:pathwise-progress}) to this chain at one and
  comparing with the endpoint identity gives the integral expression
  for $\mathcal C$.
\end{proof}

\begin{corollary}[Shape of a Finite-Step Failure]
  \label{cor:linear-failure-shape}
  Under Eq.~(\ref{eq:affine-logit-utility}), suppose at least one
  $v_j$ is nonconstant across the vocabulary.  Then $\phi$ is strictly
  concave on $[0,1]$.  If additionally $A>0$ and $\phi(1)<0$, it has
  a unique maximiser on $[0,1]$, lying in $(0,1)$, and a unique
  positive zero in $(0,1)$.
\end{corollary}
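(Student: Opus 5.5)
The plan is to read strict concavity off the curvature formula in Eq.~(\ref{eq:linear-readout-curvature}), and then obtain both uniqueness claims from elementary properties of strictly concave functions, using the finite-step-failure signs $A>0$ and $\phi(1)<0$ to place the special points inside $(0,1)$.

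\textbf{Step 1: strict concavity.} Along the affine path, $\phi''(\alpha)=-|M_i|^{-1}\sum_{j\in M_i}V(p_j(\alpha),v_j)$, as computed in the proof of Proposition~\ref{prop:linear-endpoint}. Each variance is nonnegative. Fix an index $j_0$ whose $v_{j_0}$ is nonconstant. All softmax probabilities are strictly positive for every real $\alpha$, so the result in Appendix~\ref{app:softmax-curvature} makes $V(p_{j_0}(\alpha),v_{j_0})>0$ at every $\alpha$. Hence $\phi''(\alpha)<0$ on all of $[0,1]$. A function with strictly negative second derivative is strictly concave; one can see this because $\phi'$ is strictly decreasing, and the increment argument of Lemma~\ref{lem:increment-comparison} then yields the chord inequality.

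\textbf{Step 2: unique interior maximiser.} A maximiser exists by compactness, as in Appendix~\ref{app:quadratic-max}. Uniqueness follows from strict concavity: if $a\neq b$ were both maximisers, then at the midpoint $\phi((a+b)/2)>\tfrac12(\phi(a)+\phi(b))=\max\phi$, a contradiction. The maximiser is not $0$, because $A=\phi'(0)>0$ gives $\phi(\alpha)>0=\phi(0)$ for small $\alpha>0$, as in Appendix~\ref{app:boundary-proof}. It is not $1$, because $\phi(1)<0=\phi(0)$.

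\textbf{Step 3: unique positive zero.} Existence follows from the intermediate value theorem: $\phi$ is positive somewhere in $(0,\varepsilon)$ and $\phi(1)<0$, so $\phi$ has a zero in $(0,1)$. For uniqueness, suppose $0<z_1<z_2\le 1$ are both zeros. Write $z_1=\lambda\cdot 0+(1-\lambda)z_2$ with $\lambda\in(0,1)$. Strict concavity then gives $\phi(z_1)>\lambda\phi(0)+(1-\lambda)\phi(z_2)=0$, which contradicts $\phi(z_1)=0$. Therefore there is exactly one zero in $(0,1]$, and since $\phi(1)<0$, it lies in $(0,1)$.

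The only delicate point is Step 1, and it is mild. We must use that positivity of the softmax holds at every $\alpha$, not only at the endpoints. We must also note that a single nonconstant $v_{j_0}$ suffices, because the other terms in the sum only contribute additional nonpositive curvature. Everything else is a standard strict-concavity argument.
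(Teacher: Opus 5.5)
Your proposal is correct and follows the paper's own proof essentially step for step. The shared steps are: strict concavity from writing $\phi''$ as the negative average of variances, one of which is strictly positive at every scale; an interior maximiser from local positivity near $0$ together with $\phi(1)<0$; and uniqueness of the zero by writing the smaller zero as a convex combination of $0$ and the larger one. The differences are cosmetic: you spell out the midpoint argument for uniqueness of the maximiser and the passage from $\phi''<0$ to strict concavity, both of which the paper leaves implicit.
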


\begin{proof}
  Equation~(\ref{eq:linear-readout-curvature}) expresses $\phi''$ as
  the negative average of nonnegative variances.  The nonconstant
  displacement makes at least one variance strictly positive at every
  scale.  Thus $\phi''<0$ and $\phi$ is strictly concave.
  Since $A>0$, the local positivity argument gives
  $\phi(\alpha)>0$ for $0<\alpha<\varepsilon$.
  Choose $\alpha_0=\min\{\varepsilon/2,1/2\}$.
  The smallest maximiser exists by Appendix~\ref{app:quadratic-max},
  and its value is at least $\phi(\alpha_0)>0$.
  It is therefore different from both endpoints, whose values are zero
  and negative.  Strict concavity gives uniqueness of the maximiser.

  Continuity on $[\alpha_0,1]$ gives a zero $r$ in that interval.
  Since $\alpha_0>0$ and $\phi(1)<0$, it lies in $(0,1)$.
  If $0<r<s\leq1$ were two zeros, then
  $r=(1-r/s)0+(r/s)s$ lies strictly between zero and $s$.
  Strict concavity would give $\phi(r)>0$, contradicting its being a
  zero.  Interchanging $r,s$ excludes the reverse ordering, proving
  uniqueness.
\end{proof}

\section{Looped Language Model Instantiation}
\label{app:instantiation}

\subsection{States and Readout Boundaries}
\label{app:state-boundaries}

For a model with frozen parameters $\theta$ and example $i$, fix the input
representation $E_i$, tokenisation, initialisation, and execution policy $\pi$, including batch
composition and padding.  An observed recurrent transition is
\begin{equation}
  H_{i,t+1}=\widehat F_{\theta,\pi}(H_{i,t};E_i),
  \qquad D_{i,t}=H_{i,t+1}-H_{i,t}.
\end{equation}
Here $\widehat F_{\theta,\pi}$ is the implemented recurrent transition
under execution policy $\pi$, and $t$ counts recurrent iterations.
Fixing $i,t,\theta,\pi,E_i$ specifies the update $H^+=F(H)$ in
\S~\ref{sec:task-utility}.  Let $n_i$ be the sequence length and $d$ the
hidden dimension.  The single-sequence state space is
$\mathbb{R}^{n_i\times d}$ with its Frobenius inner product.  A joint
option state belongs to the product of the option-specific sequence
spaces, with inner product equal to the sum of the component Frobenius
inner products.

For Ouro, the state boundary lies after the shared stack and its output
normalisation, immediately before the linear language-model head.
For Huginn it lies after the recurrent core and before the fixed nonlinear
coda and output head.  The recurrent endpoints are produced in BF16.
The fixed readout is evaluated in FP32, and the final log-softmax over joint
option scores is evaluated in float64.  Derivatives hold the endpoints
and their displacement fixed and pass only through the readout.
The mathematical statements apply to the real-valued readout continuation
under their stated smoothness conditions.  The numerical calculations
evaluate this analysis at the computed recurrent states.

\subsection{Task Utilities}
\label{app:utilities}

\paragraph{Reference solutions.}
Let $\mathcal V$ be the model's finite nonempty vocabulary.  For a
mathematical problem $x_i$ and its nonempty reference solution $y_i$,
let $M_i\subseteq\{1,\ldots,|y_i|\}$ be the nonempty set of evaluated
solution-token indices.  Define
\begin{equation}
  \label{eq:reference-utility}
  U_i(H):=\frac{1}{|M_i|}\sum_{j\in M_i}
    \log p_\theta(y_{i,j}\mid x_i,y_{i,<j};H).
\end{equation}
Here $p_\theta$ is the model's conditional token distribution,
$y_{i,j}$ is reference token $j$, and $y_{i,<j}$ is its preceding
reference prefix.  The length $|y_i|$ counts reference tokens.
Each token is read at its preceding prediction position, and the complete
reference solution is retained.  With $i$ fixed, this is the utility $U$
used in the path analysis.  Ouro's linear head supplies the affine logits
in Appendix~\ref{sec:linear-readout}.

\begingroup
\interlinepenalty=10000
\postdisplaypenalty=10000
\paragraph{Joint options.}
For a problem $x$ with $K\geq1$ nonempty option sequences $y^{(k)}$,
write their teacher-forced state tuple as
$\mathbf H=(H^{(1)},\ldots,H^{(K)})$ and let
$k^\star\in\{1,\ldots,K\}$ be the correct option.  Suppressing the
example index, define
\begin{equation}
  \label{eq:option-utility}
  \begin{aligned}
    s_k(H^{(k)})
      &:=\frac{1}{|y^{(k)}|}\sum_{j=1}^{|y^{(k)}|}
        \log p_\theta(y^{(k)}_j\mid x,y^{(k)}_{<j};H^{(k)}),\\
    U(\mathbf H)
      &:=s_{k^\star}(H^{(k^\star)})
         -\log\sum_{k=1}^K\exp s_k(H^{(k)}).
  \end{aligned}
\end{equation}
\endgroup
The score $s_k$ is the mean log probability of option $k$, and
$y^{(k)}_{<j}$ is that option's prefix preceding token $j$.
These scores are evaluated under teacher forcing: the reference prefix is
provided when the next token is scored.  HellaSwag uses $K=4$.
Differentiation and intervention act on the whole
tuple, including the cross-option dependence introduced by log-softmax.
The affine-logit special case concerns the reference-solution utility.
Huginn's nonlinear readout and the joint option utility use the general
pathwise derivatives.

Both utilities measure support for the reference under teacher forcing.
Generation accuracy and utility are separate outcomes.  Comparisons hold
the checkpoint, conditioning, state boundary, and utility fixed, and report
magnitudes within each task setting.

\subsection{Scales and Comparisons}
\label{app:measurement-scales}

The step-scale prediction is evaluated on
$\mathcal A=\{0,0.05,\ldots,1\}$, using the original $A,Q$ without
refitting them to the measured curve.  The primary subset has $A>0,Q<0$.
The error to the grid optimum is distinct from the continuous error in
Theorem~\ref{thm:integrated-step}.  Appendix~\ref{app:grid} states the
conditions under which grid resolution controls that difference.
The measured prediction accuracy does not estimate $C(1)$ or the
third-derivative bound $M$.  Those bounds are evaluated separately
(Appendix~\ref{app:cloud-bounds}).

The intervention set is $\mathcal B=\{0,0.25,0.5,0.75,1\}$, with
prespecified step contraction $\alpha_c=0.25$.  Every option of a joint state
receives the same scale.  Recovery frequency and mean gain are evaluated
on the finite-step failures selected by Definition~\ref{def:finite-failure}.
Oracle comparisons use all analysed examples and the same fixed action
set $\mathcal B$.  They use reference utility after the original
displacement has been computed, with the effect measured before further
recurrence.

\section{Additional Experimental Details}
\label{app:exp-details}

\subsection{Coverage and Depth Selection}
\label{app:exp-coverage}

The complete utility sweep evaluates the seven labelled splits in
Table~\ref{tab:evaluation-coverage} for each of the three models.
There are 19362 questions per model, giving 58086 model--question
observations.  Questions are shared across models.  Both Ouro models
are evaluated at depths one through eight, then 12, 16, 24, and 32.
Huginn is evaluated at depths 4, 8, 16, 19, 20, 24, 32, 33, 34,
48, 64, 96, and 128.
Every specified depth is retained, including depths after a measured
loss.  At each depth, the running-best comparison uses the earlier
evaluated depth with the largest mean utility.  The reported interval
is the paired interval for the resulting per-question difference.
Depths and datasets have been inspected during the study, and these
pointwise intervals have no adjustment for selection or multiple
comparisons.

\begin{table}[t]
\centering
\caption{\textbf{Depth-sweep coverage and observed utility declines.} Peak is the evaluated depth with highest mean reference utility. Decline is the earliest running-best decrease with a pointwise 95\% paired interval below zero; a dash indicates none on the evaluated grid. Non-adjacent depth pairs represent cumulative changes. Val. denotes validation.}
\label{tab:evaluation-coverage}
\small
\setlength{\tabcolsep}{2pt}
\begin{tabular*}{\linewidth}{@{\extracolsep{\fill}}llrcccccc@{}}
\toprule
\multirow{2}{*}{Task} & \multirow{2}{*}{Split} & \multirow{2}{*}{$N$} & \multicolumn{2}{c}{Ouro-1.4B} & \multicolumn{2}{c}{Ouro-2.6B} & \multicolumn{2}{c}{Huginn} \\
\cmidrule(lr){4-5}\cmidrule(lr){6-7}\cmidrule(l){8-9}
 & & & Peak & Decline & Peak & Decline & Peak & Decline \\
\midrule
MATH-500 & Test & 500 & 4 & $4\to5$ & 4 & $4\to5$ & 24 & $24\to48$ \\
GSM8K & Test & 1319 & 4 & $4\to5$ & 4 & $4\to5$ & 16 & $16\to19$ \\
HellaSwag & Val. & 10042 & 2 & $2\to3$ & 6 & $6\to24$ & 32 & $32\to34$ \\
ARC-C & Test & 1172 & 4 & $4\to5$ & 4 & $4\to5$ & 19 & --- \\
PIQA & Val. & 1838 & 4 & $4\to12$ & 7 & $7\to24$ & 64 & --- \\
BoolQ & Val. & 3270 & 1 & $1\to3$ & 4 & $4\to5$ & 24 & $24\to32$ \\
CommonsenseQA & Val. & 1221 & 4 & $4\to5$ & 1 & $1\to2$ & 20 & $20\to24$ \\
\bottomrule
\end{tabular*}
\vspace{-4mm}
\end{table}

The magnitude and interpretation of a decline depend on the task.
Huginn's first confirmed running-best loss on HellaSwag is a cumulative
$32\to34$ change of approximately $-8.45\times10^{-5}$.
For Ouro-1.4B on BoolQ, the model predicts Yes on 3268 of 3270 questions
at depths three and four, and on all questions at depth twelve.
This response imbalance accompanies the utility curve and limits its
interpretation as a deterioration of reasoning.
The original Ouro-1.4B/HellaSwag validation set supplies a separate
comparison between utility and discrete correctness.  Across its 1000
questions, the $4\to5$ mean utility change is $-0.008006$, with interval
$[-0.010838,-0.004991]$, while option accuracy rises from 68.4\% to
69.2\%.  The smooth reference utility and the selected answer can
therefore move differently under the same recurrent update.
The main mechanism comparison instead holds the transition fixed across
tasks within each model and includes all nine conditions, with 35583
model--question observations in total.  It reuses the states and
execution settings of the complete utility sweep.

\subsection{Evaluation Sets and Numerical Protocol}
\label{app:exp-protocol}

\paragraph{Transition and sample provenance.}
The original Ouro/MATH-500 study used an independent 128-question
training pilot to identify the depth range around the $4\to5$
transition, then evaluated the full 500-question test set.
The original Huginn/GSM8K evaluation set consists of 256 training questions:
all 256 source identifiers match the training split and none match the
test split.  The $19\to20$ transition was the earliest consecutive
decrease with a negative pointwise mean-gain interval within the
examined $16$--$24$ range on those same questions.
The original Ouro-1.4B/HellaSwag study fixed $4\to5$ before measurement
and evaluated 1000 validation questions.
The CommonsenseQA extension evaluates 1024 training questions per
model.  Ouro retains $4\to5$, and Huginn uses $33\to34$, selected by the
earliest negative consecutive-change interval in a dense sweep of the
same evaluation set.  Earlier nested subsets of these questions do not provide
independent replications.

The complete mechanism matrix applies $4\to5$ to both Ouro models
and $19\to20$ to Huginn on each full split.  A separate Huginn/GSM8K
test evaluation also held $19\to20$ fixed and found 107 finite-step
failures among 1319 questions.  Its logged execution and initialisation
settings differ from those of the complete matrix, which gives 102
such failures.  Each result belongs to its own observed transitions, and
the matrix uses only the latter run.  The split distinction concerns
evaluation provenance and does not establish absence from model
pretraining.

\paragraph{Execution and differentiation.}
The complete matrix reconstructs each original batch with its recorded
tokenisation, masks, candidate grouping, and natural padding width.
For Huginn, the complete utility sweep initialises each question with a
CPU BF16 truncated-normal state using seed $20260904$ plus its source
index, and candidate sequences share the question's seed and initialisation
rule.  The original 256-question GSM8K study instead continues cached
depth-eight states with batch size 32.  Those settings are preserved
when analysing its step contractions and scale predictions.

All nine mechanism conditions compute $A$ and $Q$ by automatic
differentiation and a Hessian--vector product through the actual
readout utility, without constructing a full Hessian.  TF32 is disabled.
For HellaSwag, the product includes the cross-candidate terms of the
joint option utility.  Replayed endpoint utilities differ from the
source values by at most $8.89\times10^{-16}$.
On the first 16 questions of each condition, an additional scalar
second-derivative calculation along the normalised full valid-token
chord agrees with the Hessian--vector calculation to absolute error
at most $10^{-6}$ or relative error at most $10^{-4}$, and all 144 checks
pass this disjunctive criterion.  These checks compare two derivative
calculations through the same numerical readout.

\paragraph{Intervals and effect margins.}
For the complete matrix, bootstrap resampling uses whole question
records with seed 20260904.  Each replicate recomputes the number of
finite-step failures divided by the number of harmful updates, so the
denominator varies with the resample.
Table~\ref{tab:mechanism-intervals} reports these intervals together
with the count satisfying both $A>10^{-4}$ and
$-\Delta U>10^{-4}$.  All nine conditions have zero neutral updates.
Contraction intervals resample the selected finite-step failures, and
primary scale-comparison intervals resample the $A>0,Q<0$ subset.
These conditional intervals describe the selected population and do
not adjust for the earlier choice of transition.

\begin{table}[t]
\centering
\caption{\textbf{Finite-step failure shares and margin counts.}
Shares are relative to harmful updates. Margin counts require both
$A>10^{-4}$ and $-\Delta U>10^{-4}$.
Settings follow Table~\ref{tab:mechanism-matrix}.}
\label{tab:mechanism-intervals}
\small
\setlength{\tabcolsep}{5pt}
\begin{tabular*}{0.8\linewidth}{@{\extracolsep{\fill}}lccc@{}}
\toprule
Model & Task & \makecell{Failure share (\%)\\with 95\% CI}
& \makecell{Failures beyond\\the margin} \\
\midrule
\multirow{3}{*}{Ouro-1.4B}
& MATH-500 & $49.18\;[44.29,54.10]$ & 177 \\
& GSM8K & $40.24\;[37.35,43.14]$ & 421 \\
& HellaSwag & $4.52\;[3.97,5.04]$ & 253 \\
\addlinespace[3pt]
\multirow{3}{*}{Ouro-2.6B}
& MATH-500 & $28.88\;[24.10,33.64]$ & 90 \\
& GSM8K & $37.19\;[33.79,40.32]$ & 321 \\
& HellaSwag & $1.86\;[1.49,2.27]$ & 78 \\
\addlinespace[3pt]
\multirow{3}{*}{Huginn}
& MATH-500 & $22.17\;[16.35,27.89]$ & 32 \\
& GSM8K & $14.53\;[12.02,17.09]$ & 84 \\
& HellaSwag & $1.38\;[1.05,1.73]$ & 24 \\
\bottomrule
\end{tabular*}
\vspace{-4mm}
\end{table}

Figure~\ref{fig:contraction} gives the quarter-step estimates
and intervals for the original evaluation sets.

\paragraph{Task extensions and effect margins.}
Recovery at a shared scale varies across the examined paths.  Nine of
Huginn/GSM8K's 13 original recoveries gain at most $10^{-4}$, while 19
of 20 HellaSwag recoveries exceed that margin.  On the CommonsenseQA
training subsets, the quarter step recovers 13 of 22 finite-step failures
for Ouro-1.4B, 18 of 28 for Ouro-2.6B, and three of four for Huginn.
These observations extend the recovery phenomenon to the additional
model--task settings.  Huginn's $33\to34$ transition was selected on
its training subset; evaluation on the complete validation split leaves
the sign of the mean gain unresolved.
Figure~\ref{fig:contraction} displays the original quarter-step estimates
and their uncertainty, including all three CommonsenseQA evaluation sets.

\begin{figure}[t]
\centering
\includegraphics[width=\linewidth]{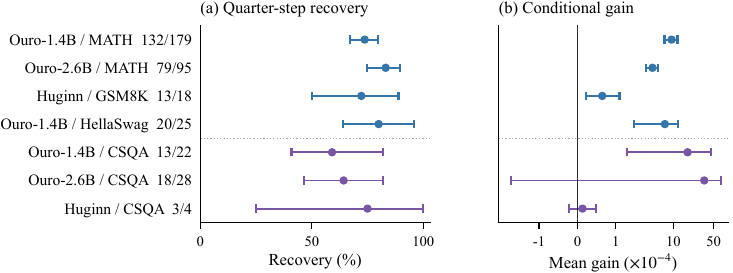}
\caption{\textbf{Quarter-step recovery.}
Each update uses one quarter of its proposed displacement along the same
direction. (a) Fraction of finite-step failures with positive gains in
reference utility, with recovered/total counts. (b) Mean gain over
the same failure set, including unrecovered cases, on a symmetric-log
scale. Purple rows denote CommonsenseQA.}
\label{fig:contraction}
\end{figure}

\paragraph{Exact KL correction for the affine readout.}
\label{app:exp-kl}
For the original Ouro/MATH-500 evaluation sets, we evaluate the mean KL term
$\mathcal C$ in Eq.~(\ref{eq:linear-endpoint}) using the endpoint
logits and a log-sum-exp calculation.  This computes the correction
independently of the measured difference $A-\Delta U$.
Table~\ref{tab:exact-kl} reports the maximum absolute residual of
$\Delta U-(A-\mathcal C)$ and the number of finite-step failures
satisfying the exact decomposition's inequality numerically.
The single exception on Ouro-2.6B has
$\Delta U=-3.576\times10^{-7}$ and signed residual
$-6.044\times10^{-7}$.  Its full-step classification is retained,
but the strict inequality is unresolved at this numerical precision.
This calculation applies to the affine reference-token readout.
The nine-condition comparison uses automatic differentiation of each
actual utility, including the nonlinear coda and joint option readout.

\begin{table}[t]
\centering
\caption{\textbf{Numerical check of the exact endpoint correction.} Both rows use the original MATH-500 test evaluation set ($N=500$, $4\to5$). The residual is $|\Delta U-(A-\mathcal C)|$, with $\mathcal C$ computed independently from endpoint logits. The last column counts finite-step failures satisfying $0<A<\mathcal C$ numerically.}
\label{tab:exact-kl}
\small
\begin{tabular}{@{}lrr@{}}
\toprule
Model & Maximum residual & Inequality satisfied \\
\midrule
Ouro-1.4B & $2.661\times10^{-6}$ & 179/179 \\
Ouro-2.6B & $3.348\times10^{-6}$ & 94/95 \\
\bottomrule
\end{tabular}
\vspace{-4mm}
\end{table}

\subsection{State and Output Diagnostics}
\label{app:exp-ranking}

The diagnostic comparison uses all valid token positions in the
original four evaluation sets.  State norms use the Frobenius norm, with all
candidate states concatenated for multiple-choice questions.
The signals are $\|D\|$, relative change
$\|D\|/(\|H\|+10^{-12})$, and cosine change
$1-\langle H,H^+\rangle/(\|H\|\|H^+\|)$.  The norms in the cosine
denominator are positive in these records.
Output KL divergence uses the current token distribution as its first
argument and the next-state distribution as its second, over the full
vocabulary at prediction positions.  Entropy change is next-state
entropy minus current-state entropy at those positions.
For HellaSwag, output diagnostics first average over prediction tokens
within each candidate, then equally over the four candidates.

Table~\ref{tab:diagnostic-ranking} reports the raw signal's Spearman
correlation with $\Delta U$ and the harmful-update AUROC after a fixed
risk orientation.  Larger risk corresponds to the negative of each
signal except entropy change, whose orientation is positive.
Orientations are retained when AUROC is below 0.5, and no affine
mapping is fitted to the evaluation outcomes.  The derivative scores
use the reference utility, and the state and output diagnostics omit that
task-specific derivative information while sharing the same
teacher-forced states.

\begin{table}[t]
\centering
\caption{\textbf{State, output, and task-geometry diagnostics.} Columns use the original MATH-500 test evaluation sets ($N=500$ each), GSM8K training subset ($N=256$), and HellaSwag validation set ($N=1000$). Raw correlations retain signal orientation, and AUROC uses the fixed risk signs of Appendix~\ref{app:exp-ranking}.}
\label{tab:diagnostic-ranking}
\small
\setlength{\tabcolsep}{5pt}
\begin{tabular}{@{}lrrrr@{}}
\toprule
Signal & \makecell{Ouro-1.4B\\MATH-500} & \makecell{Ouro-2.6B\\MATH-500} & \makecell{Huginn\\GSM8K} & \makecell{Ouro-1.4B\\HellaSwag} \\
\midrule
\multicolumn{5}{l}{\textit{Raw Spearman correlation with $\Delta U$}} \\
Update norm & -0.2843 & 0.0471 & 0.1006 & -0.0090 \\
Relative change & -0.2725 & -0.0620 & 0.0010 & -0.0457 \\
Cosine change & -0.2724 & -0.0640 & 0.0010 & -0.0457 \\
Output KL & -0.2357 & -0.0985 & -0.1490 & -0.0057 \\
Entropy change & -0.1573 & -0.0344 & -0.0110 & 0.0350 \\
$A$ & 0.8276 & 0.9640 & 0.9951 & 0.9622 \\
$A+Q$ & \textbf{0.9929} & \textbf{0.9973} & \textbf{0.9999} & \textbf{0.9982} \\
\midrule
\multicolumn{5}{l}{\textit{Harmful-update AUROC}} \\
Update norm & 0.2893 & 0.4852 & 0.5122 & 0.4983 \\
Relative change & 0.3438 & 0.5241 & 0.4701 & 0.4960 \\
Cosine change & 0.3440 & 0.5229 & 0.4701 & 0.4961 \\
Output KL & 0.4390 & 0.5290 & 0.4279 & 0.5119 \\
Entropy change & 0.5826 & 0.5357 & 0.5105 & 0.4848 \\
$A$ & 0.9411 & 0.9764 & 0.9974 & 0.9811 \\
$A+Q$ & \textbf{0.9974} & \textbf{0.9990} & \textbf{0.9998} & \textbf{0.9988} \\
\bottomrule
\end{tabular}
\vspace{-4mm}
\end{table}

\subsection{Scale Prediction and Curvature Measurements}
\label{app:exp-scale-details}

\begin{table}[t]
\centering
\caption{\textbf{Local quadratic scale prediction.} Primary subset $A>0,Q<0$. Correlation and scale MAE compare the continuous prediction with the 21-point grid optimum, and utility regret uses the nearest grid action. CommonsenseQA rows use exploratory training subsets.}
\label{tab:step-prediction}
\small
\setlength{\tabcolsep}{3pt}
\begin{tabular}{@{}lrrrrr@{}}
\toprule
Model / task &  $n$ & Spearman & Scale MAE & \makecell{First-order\\regret} & \makecell{Quadratic\\regret} \\
\midrule
\makecell[l]{Ouro-1.4B / MATH-500\\test, $N=500$, $4\to5$} & 315 & 0.9951 & 0.0211 & $6.714\times10^{-3}$ & $\boldsymbol{1.747\times10^{-5}}$ \\
\makecell[l]{Huginn / GSM8K\\train, $N=256$, $19\to20$} & 129 & 0.9998 & 0.0072 & $9.681\times10^{-5}$ & $\boldsymbol{3.482\times10^{-7}}$ \\
\makecell[l]{Ouro-1.4B / CommonsenseQA\\train, $N=1024$, $4\to5$} & 302 & 0.8726 & 0.0146 & $2.044\times10^{-3}$ & $\boldsymbol{1.919\times10^{-4}}$ \\
\makecell[l]{Ouro-2.6B / CommonsenseQA\\train, $N=1024$, $4\to5$} & 380 & 0.8369 & 0.0155 & $5.231\times10^{-3}$ & $\boldsymbol{7.314\times10^{-4}}$ \\
\makecell[l]{Huginn / CommonsenseQA\\train, $N=1024$, $33\to34$} & 275 & 0.9275 & 0.0011 & $5.742\times10^{-6}$ & $\boldsymbol{1.292\times10^{-7}}$ \\
\bottomrule
\end{tabular}
\vspace{-4mm}
\end{table}

The CommonsenseQA extensions have scale correlations of 0.873 and 0.837
for the two Ouro models and 0.928 for Huginn, with 268 of 275 Huginn
grid optima at one.  These estimates use exploratory training subsets,
as specified in Appendix~\ref{app:exp-protocol}.

The quadratic rule maximises $q$ on $[0,1]$ as specified in
Eq.~(\ref{eq:predicted-step}).  It compares the endpoints and, when
applicable, the interior concave maximum.  Ties choose the smaller
scale.  Continuous scale errors are computed before mapping to the
21-point grid, and utility comparisons use the nearest grid point, with
equal distances resolved toward the smaller point.
Fixed-scale baselines use $0,0.25,0.5,0.75,1$.
Figure~\ref{fig:scale-prediction}(e) compares all five choices
with the quadratic rule on the original primary
evaluation sets.  The paired quadratic-minus-first-order utility gain is
$0.006697$ with interval $[0.005622,0.007794]$ for Ouro/MATH-500,
and $9.646\times10^{-5}$ with interval
$[5.763\times10^{-5},1.427\times10^{-4}]$ for Huginn/GSM8K.
Selecting the best fixed action on these same evaluation subsets gives
a descriptive hindsight comparator.

The all-question scale correlations and MAEs are 0.9932 and 0.0133 on
Ouro/MATH-500, and 0.9983 and 0.0036 on Huginn/GSM8K.
On the original Huginn primary subset, 93 of 129 questions have both
predicted and empirical grid optima at one.  Restricting descriptively
to its 35 interior grid optima gives correlation 0.9924 and MAE 0.0262.
Twelve Ouro questions have $A>0$ but a grid optimum at zero: a grid
with spacing 0.05 need not resolve the beneficial interval near the
origin.

\paragraph{Predicting the progress boundary.}
\label{app:exp-boundary}
Within each primary subset, we locate the first grid point with negative
gain.  Its immediately preceding grid point and this first negative
point define the observed crossing interval.  The numerical hit rule
includes the lower endpoint and excludes the upper endpoint.
Of 179 Ouro and 18 Huginn questions with an observed crossing, the
predicted root $r^{(2)}$ lies in this interval for 121 and 13,
respectively (Table~\ref{tab:progress-boundary}).
The remaining 136 Ouro and 111 Huginn primary questions have no
observed negative gain through scale one, and their roots receive no
numerical error by assigning the endpoint one as a target.

Continuity gives a zero in an observed crossing interval.  In the
affine reference-token case, concavity connects this crossing to the
end of the nonnegative prefix.  For a general nonlinear readout,
negative excursions between earlier grid points are not excluded, so
the interval need not locate the first continuous boundary $r_1$.
Within $A>0,Q<0$, the condition $r^{(2)}<1$ is algebraically
equivalent to $A+Q<0$.  Its classification accuracy therefore reuses
the endpoint prediction, while interval hits assess the additional
prediction of a location along the segment.

\begin{table}[t]
\centering
\caption{\textbf{Predicting the observed end of progress.} Within $A>0,Q<0$, the quadratic root against the first observed nonnegative-to-negative crossing interval on the 21-point grid. Hits require the root to lie in that width-0.05 interval. The last column counts primary questions with no observed negative gain through scale one.}
\label{tab:progress-boundary}
\small
\setlength{\tabcolsep}{5pt}
\begin{tabular}{@{}lrrrrr@{}}
\toprule
Model / task & Primary $n$ & Crossings & Hits & Hit rate (\%) & \makecell{No negative\\gain observed} \\
\midrule
\makecell[l]{Ouro-1.4B / MATH-500\\test, $N=500$, $4\to5$} & 315 & 179 & 121 & 67.6 & 136 \\
\makecell[l]{Huginn / GSM8K\\train, $N=256$, $19\to20$} & 129 & 18 & 13 & 72.2 & 111 \\
\bottomrule
\end{tabular}
\vspace{-4mm}
\end{table}

\paragraph{Directional curvature along the path.}
The five-point derivative measurements retain the original unnormalised
$D$ at every scale.  The interquartile ranges of
$\phi''(0.5)/\phi''(0)$ are $[1.003,1.133]$ for Ouro and
$[0.954,1.039]$ for Huginn on the primary subsets.
The median maximum adjacent curvature difference, divided by
$|\phi''(0)|+\varepsilon$, is 0.0951 and 0.0294, respectively.
Here $\varepsilon$ is $10^{-6}$ times the median of the original
$|2Q|$ over the corresponding complete evaluation set, fixed before restricting
to the primary subset.  The adjacent difference is not divided by
the scale spacing.  Curvature ratios omit origins with
$|\phi''(0)|\leq\varepsilon$.
Negative curvature is observed at every sampled point up to the
empirical optimum in all 315 Ouro and 129 Huginn primary cases.
Of these cases, 84 and seven, respectively, include only the current state
in that check.  This sampled-prefix observation does not establish
concavity throughout the segment.

\subsection{FP64 Path Analysis and Curvature-Based Scale Selection}
\label{app:exp-fp64}

\paragraph{Fixed states and numerical readouts.}
The refined analysis uses all 500 original Ouro-1.4B/MATH-500 questions
and all 256 original Huginn/GSM8K training questions at their respective
$4\to5$ and $19\to20$ transitions.  The recurrent computation and
chosen endpoints are retained.  Readout parameters and states are
evaluated in FP64, with $A,Q$ recomputed for that numerical path.
For Huginn, the original recurrent batch size remains 32.  The analysis
readout processes two questions at a time at the original padding width,
using explicit causal attention to support third-order differentiation.
The maximum FP32 replay differences induced by the smaller readout
batch are $1.789\times10^{-6}$ at the current state and
$1.550\times10^{-6}$ at the endpoint, and the explicit-coda bridge differs
by at most $1.014\times10^{-6}$.
Primary membership and harmful-update labels are unchanged in both
models.  The maximum absolute FP64--FP32 differences in $A,Q,\Delta U$
are, respectively, $2.331\times10^{-6}$, $7.098\times10^{-7}$,
and $2.724\times10^{-6}$ for Ouro, and $2.327\times10^{-7}$,
$4.918\times10^{-8}$, and $5.403\times10^{-6}$ for Huginn.
The maximum state-entry gaps between $H+D$ and the stored next state
are $5.961\times10^{-8}$ and $4.657\times10^{-10}$, respectively.

\paragraph{Refined numerical optima.}
Ouro uses concavity to select an endpoint or bracket a zero of $\phi'$.
The 269 interior brackets have width at most $10^{-4}$, with positive
left derivative and nonpositive right derivative numerically.
Propagating these brackets gives the scale-error and regret intervals
in Table~\ref{tab:fp64-path-prediction}.  The remaining optima are
185 left endpoints and 46 right endpoints.
Huginn uses 257 adaptive derivative locations per question, with
additional local root refinement and evaluation of $\widehat\alpha$.
Its numerical optimum estimate is the smallest maximiser among the evaluated candidates:
126 estimates lie at zero, 94 at one, and 36 in the interior.
The primary subset contains 93 right endpoints and all 36 interior
estimates.  Including $\widehat\alpha$ among the candidates ensures
nonnegative numerical regret by construction.  That regret is the
best evaluated utility minus $\phi(\widehat\alpha)$, and a missed maximum
could increase the true continuous regret.
Equal numerical maxima select the smaller scale, so flat paths can
make the reported location sensitive to finite-precision ties.
Both models evaluate the continuous prediction without grid snapping.
Ouro's intervals propagate numerical root brackets, whereas Huginn's
entries compare with the searched candidate.  Neither incorporates
outward-rounded arithmetic or provides a floating-point certificate.

\begin{table}[t]
\centering
\caption{\textbf{Quadratic prediction against refined FP64 optimum estimates.} Original evaluation sets of 500 Ouro-1.4B/MATH-500 test and 256 Huginn/GSM8K training questions, with primary-subset membership retained from the original analysis. Ouro entries are numerical intervals propagated from derivative-root brackets, and Huginn entries use the best evaluated candidate. These intervals are numerical, not confidence intervals. Interior and failure rows are descriptive subsets.}
\label{tab:fp64-path-prediction}
\small
\setlength{\tabcolsep}{5pt}
\begin{tabular}{@{}llrrr@{}}
\toprule
Model & Population & $n$ & Mean scale error & Mean numerical regret \\
\midrule
Ouro-1.4B & All & 500 & $[0.010221,0.010254]$ & $[1.150383,1.150385]\times10^{-5}$ \\
Ouro-1.4B & Primary & 315 & $[0.016224,0.016276]$ & $[1.826005,1.826008]\times10^{-5}$ \\
Ouro-1.4B & Interior & 269 & $[0.018928,0.018989]$ & $[2.134553,2.134556]\times10^{-5}$ \\
\addlinespace[3pt]
Huginn & All & 256 & 0.002447 & $1.0837\times10^{-7}$ \\
Huginn & Primary & 129 & 0.004855 & $2.1506\times10^{-7}$ \\
Huginn & Interior & 36 & 0.017399 & $7.7063\times10^{-7}$ \\
Huginn & Finite-step failures & 18 & 0.006354 & $1.5575\times10^{-7}$ \\
\bottomrule
\end{tabular}
\vspace{-4mm}
\end{table}

\paragraph{Figure construction.}
\label{app:exp-figures}
Figure~\ref{fig:direction-progress} displays every record in the
matched mechanism matrix, with identical symmetric-log transforms on
the two axes.  Figure~\ref{fig:scale-prediction}(a--d) uses the separately
recomputed FP64 paths.  For each mathematical evaluation set, finite-step
failures are ordered by their numerical optimum estimate, with record ID
breaking ties, and the lower-median record supplies the example curve
in panels (a,b).  This selection does not use the quadratic
approximation error.  Lines connect the stored path evaluations.  The
quadratic is evaluated from the unchanged FP64 coefficients.  Every
primary prediction enters panels (c,d).  Identical coordinates are
combined with marker area increasing as the square root of their count.
No jitter is applied.  Ouro's numerical root intervals are drawn
vertically, and their midpoints provide display coordinates.
Panel (e) uses the original 21-point-grid regrets and conditional 95\%
bootstrap intervals on the original primary subsets.
All displayed intervals are strictly positive, allowing a logarithmic
axis; the zero-regret grid oracle is omitted.  The quadratic action is
displayed separately from the fixed-scale sequence.  Its plotted
regret is evaluated at the nearest grid action, matching
Table~\ref{tab:scale-prediction-main}.
Figure~\ref{fig:contraction} retains the original reported bootstrap
intervals.  The paired FP64 quarter-step and bound-selected gains are
given in Table~\ref{tab:curvature-selected-scale}.  The Python
generator records the selected IDs and hashes of all input files.
\begin{figure}[t]
\centering
\includegraphics[width=\linewidth]{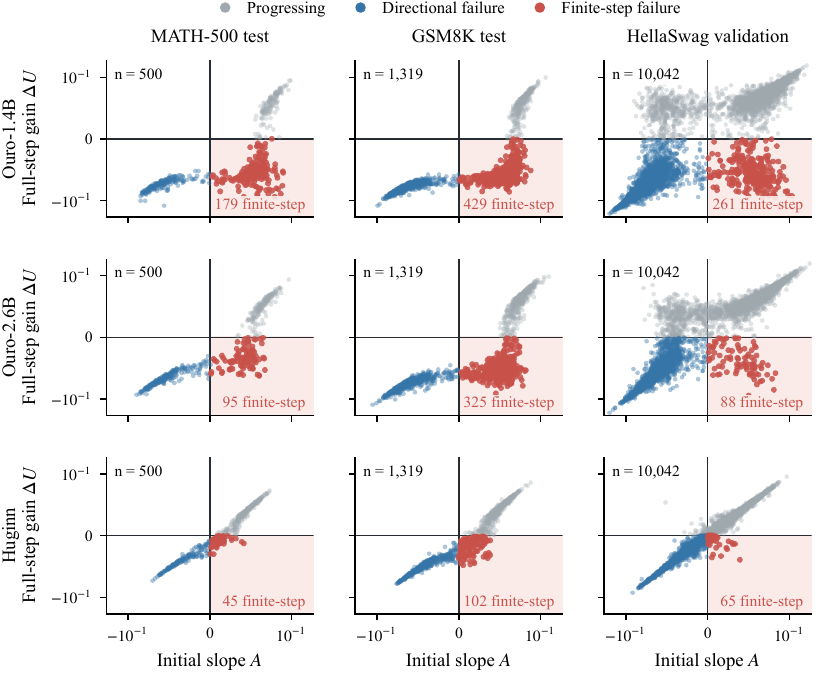}
\caption{\textbf{Local improvement and full-step harm.}
Each point compares the initial slope $A$ with the full-step gain
$\Delta U$ for a model--question pair in Table~\ref{tab:mechanism-matrix}.
Red points mark updates whose directions locally increase reference
utility but whose full steps decrease it. Blue points mark harmful
updates with nonpositive initial slopes, and grey points mark full
updates that increase reference utility.}
\label{fig:direction-progress}
\end{figure}

\paragraph{Derivative envelopes and refinement.}
Ouro evaluates directional derivatives through softmax moments.
The numerical envelope construction combines global moment bounds
with a fourth-derivative bound on each of 256 contiguous intervals
covering $[0,1]$.  It returns upper bounds for $|\phi'''|$ and
$-\phi''$ by propagating endpoint derivative values across each
interval and taking the maximum of the interval bounds.
The resulting constants are evaluated using ordinary FP64 arithmetic.
On the primary subset, the median ratio of the computed $L_D$ to the
largest sampled $-\phi''$ is 1.0146.  This ratio compares the envelope
with sampled extrema, not with the exact continuous supremum.
The same cellwise construction supplies the constants $L_\ell$ of
Proposition~\ref{prop:curvature-partition}.  The resulting integrated
and signed-residual scale bounds, and the regret bounds of
Theorem~\ref{thm:integrated-step}, are evaluated on the complete
mathematical benchmarks in Appendix~\ref{app:cloud-bounds}.

Huginn computes third derivatives by automatic differentiation.
The median relative discrepancies with central differences of $\phi''$
are $7.137\times10^{-6}$, $1.784\times10^{-6}$, and
$4.460\times10^{-7}$ at spacings 0.01, 0.005, and 0.0025.
At the smallest spacing, the maximum absolute discrepancy is
$1.069\times10^{-8}$.  Increasing the adaptive point count from 129
to 257 changes the best sampled utility by at most
$3.718\times10^{-10}$ and its location by at most 0.000489.
The sampled maxima of $|\phi'''|$ and $[-\phi'']_+$ change by at most
$5.319\times10^{-8}$ and $2.331\times10^{-8}$, respectively.
All measured curvatures are negative on the 129 primary questions, and
this holds for 254 of 256 questions overall.
The primary median sampled derivative extrema are
$1.789\times10^{-4}$ for $|\phi'''|$ and 0.001847 for
$[-\phi'']_+$.  No continuous derivative upper bounds, global
optimality certificate, or bound-selected scale are assigned to Huginn.

\paragraph{Scale selected from the sufficient range.}
For Ouro questions with $A>0$, we use the envelope value as $L_D$ in
Corollary~\ref{cor:beneficial-step} and fix
$\alpha_{\mathrm{safe}}=\min\{1,0.9(2A/L_D)\}$ before evaluating
the intervention.  The factor 0.9 is unchanged across questions.
The reported gains and the quarter-step comparator both use the same
FP64 readout.  All 315 primary gains exceed the computed sufficient
lower bound $\alpha_{\mathrm{safe}}A-L_D\alpha_{\mathrm{safe}}^2/2$,
with minimum numerical margin $4.680\times10^{-8}$.
Table~\ref{tab:curvature-selected-scale} reports the primary and
finite-step-failure populations separately, with 2000 paired
question-bootstrap resamples and seed 20260904.
On the 179 failures, the median selected scale is 0.32759, with
interquartile range $[0.15404,0.50776]$ and range
$[0.005902,0.821919]$.  Of the 179 positive gains, 142 exceed
$10^{-4}$ and 37 do not.
This construction uses reference-dependent derivatives along the
complete path and is evaluated before further recurrence.

\begin{table}[t]
\centering
\caption{\textbf{Selecting a scale from the negative-curvature bound.} Ouro-1.4B/MATH-500 with $\alpha_{\mathrm{safe}}=\min\{1,0.9(2A/L_D)\}$. Both actions are evaluated in FP64 on the same records, and counts require positive utility gain. Gain columns are in units of $10^{-3}$ with paired 95\% question-bootstrap intervals.}
\label{tab:curvature-selected-scale}
\small
\setlength{\tabcolsep}{5pt}
\begin{tabular}{@{}lrrrrr@{}}
\toprule
Population & $n$ & \makecell{Quarter-step\\positive gains} & \makecell{Bound-selected\\positive gains} & \makecell{Mean gain\\{}[95\% CI]} & \makecell{Gain over quarter\\{}[95\% CI]} \\
\midrule
Primary & 315 & 268 & \textbf{315} & \makecell[r]{$5.1985$\\$[4.2777,6.1934]$} & \makecell[r]{$\boldsymbol{2.6278}$\\$[1.9928,3.3228]$} \\
Finite-step failures & 179 & 132 & \textbf{179} & \makecell[r]{$1.0738$\\$[0.8555,1.3354]$} & \makecell[r]{$\boldsymbol{0.1442}$\\$[0.0610,0.2317]$} \\
\bottomrule
\end{tabular}
\vspace{-4mm}
\end{table}

\subsection{Immediate Quadratic Scale Selection on the Complete GSM8K Test Set}
\label{app:quadratic-intervention}

This additional analysis evaluates the quadratic rule of
\S~\ref{sec:optimal-step} on all 1319 GSM8K test questions with
Ouro-1.4B at $4\to5$.  Both actions share the original $H_4,H_5$
pair.  The quarter step uses scale 0.25; the quadratic rule maximises
$A\alpha+Q\alpha^2$ on $[0,1]$, with ties choosing the smaller scale.
The selected scale depends only on the original $A,Q$ and is fixed
before the intervention utility is evaluated.

Interpolation uses FP32 over all valid token positions, with padding
retaining the original proposed state.  The resulting state is cast to
BF16 before evaluation by the FP32 affine readout, with ordered FP64
utility reductions.  The utility uses the reference-token prediction
positions defined in Appendix~\ref{app:utilities}.
Table~\ref{tab:quadratic-intervention} reports the measured iteration-5
gain relative to the common $H_4$, rather than the ideal interpolant's
gain.  Mean utility gains use all 1319 questions; recovery counts use
the 431 finite-step failures of the original update.  Intervals use
2000 paired question-bootstrap resamples.

\begin{table}[t]
\centering
\caption{\textbf{Immediate utility under quadratic scale selection.} Ouro-1.4B/GSM8K test, $4\to5$. Mean gains relative to $H_4$ use all 1319 questions and are in units of $10^{-3}$, with paired 95\% question-bootstrap intervals. Recovery counts use the 431 original finite-step failures. The last row reports the paired utility advantage and net recovery differences. Outcomes use the computed BF16 intervention states.}
\label{tab:quadratic-intervention}
\small
\setlength{\tabcolsep}{8pt}
\begin{tabular}{@{}lrrr@{}}
\toprule
Action & \makecell{Mean gain\\{}[95\% CI]} & Recovered / 431 & Recovery (\%) \\
\midrule
Quarter step & \makecell[r]{$-0.839$\\$[-1.087,-0.580]$} & 306 / 431 & 71.00 \\
Quadratic rule & \makecell[r]{$\boldsymbol{+2.770}$\\$[+2.438,+3.146]$} & \textbf{406} / 431 & \textbf{94.20} \\
\midrule
Quadratic $-$ quarter & \makecell[r]{$\boldsymbol{+3.610}$\\$[+3.366,+3.881]$} & $\boldsymbol{+100}$ & $+23.20$ pp \\
\bottomrule
\end{tabular}
\vspace{-4mm}
\end{table}

\subsection{Question-Level Interventions Across HellaSwag Transitions}
\label{app:hellaswag-transitions}

This evaluation applies the quadratic rule to Ouro-2.6B on all 10042
HellaSwag validation questions at $1\to2$, $2\to3$, $3\to4$, and
$4\to5$.  Each question uses the joint option utility in
Eq.~(\ref{eq:option-utility}), with one common scale across all four
candidates.  The derivatives include their coupling through the
option-score log-softmax.  Each transition uses its own pair of
unmodified recurrent states and is evaluated immediately after scaling,
before further recurrence.  The first three transitions lie within the
four-step inference configuration, and the last extends it by one step.

The recurrent endpoints use BF16.  Candidate scores use the FP32
readout, followed by FP64 normalisation across options.  The quadratic
scale is computed from the original $A,Q$, maximising $q$ on $[0,1]$
with ties choosing the smaller scale.  All questions enter every
transition, giving 40168 question--transition records.  The joint scalar
second derivative and Hessian--vector calculation agree in all 64
numerical checks under the stated tolerance.  The original depth-selection
readout is reproduced bitwise in all 5021 batches, and the four-step
prefix is unchanged when computing the fifth state.

\begin{table}[t]
\centering
\caption{\textbf{Quadratic scale selection across HellaSwag transitions.} Ouro-2.6B, all 10042 validation questions, with one scale shared by the four candidates. (a) Recovery counts use the original finite-step failures. Mean gains use all questions, in units of $10^{-3}$. (b) Full-step sign accuracy and paired quadratic-minus-full gain with 95\% question-bootstrap intervals. Each transition is an independent intervention before further recurrence.}
\label{tab:hellaswag-transitions}
\small
\setlength{\tabcolsep}{4pt}
\begin{tabular*}{\linewidth}{@{\extracolsep{\fill}}lrrrrr@{}}
\toprule
\multicolumn{6}{@{}l}{\textit{(a) Recovery and mean immediate gain}} \\
Transition & Failures & \makecell{Recovered\\quarter $\to$ quadratic} & Full & Quarter & Quadratic \\
\midrule
$1\to2$ & 322 & $231\to\boldsymbol{297}$ & 59.317 & 17.370 & \textbf{99.763} \\
$2\to3$ & 209 & $162\to\boldsymbol{184}$ & 2.337 & -5.243 & \textbf{37.436} \\
$3\to4$ & 248 & $197\to\boldsymbol{242}$ & 4.799 & 0.731 & \textbf{18.956} \\
$4\to5$ & 88 & $67\to\boldsymbol{88}$ & 1.492 & 0.167 & \textbf{9.296} \\
\bottomrule
\end{tabular*}
\par\vspace{3pt}
\begin{tabular*}{\linewidth}{@{\extracolsep{\fill}}lrrr@{}}
\multicolumn{4}{@{}l}{\textit{(b) Endpoint prediction and paired gain}} \\
Transition & Sign $A$ (\%) & Sign $A+Q$ (\%) & Quadratic $-$ full [95\% CI] \\
\midrule
$1\to2$ & \textbf{85.17} & 82.44 & $\boldsymbol{40.446}\;[38.531,42.579]$ \\
$2\to3$ & 84.43 & \textbf{91.85} & $\boldsymbol{35.099}\;[33.948,36.268]$ \\
$3\to4$ & 87.04 & \textbf{95.94} & $\boldsymbol{14.157}\;[13.655,14.702]$ \\
$4\to5$ & 94.24 & \textbf{99.49} & $\boldsymbol{7.804}\;[7.508,8.113]$ \\
\bottomrule
\end{tabular*}
\end{table}

Quadratic selection recovers more finite-step failures than the quarter
step at each transition.  It also gives higher observed mean utility
gain over all questions than either the full or quarter step
(Table~\ref{tab:hellaswag-transitions}). At $4\to5$, the rule recovers
all 88 recorded finite-step failures, including 62 with gain
above $10^{-4}$.  The first three transitions likewise contain recoveries
above this margin, with 278, 170, and 211 cases, respectively.
These observations extend step-contraction recovery to a joint
multiple-choice utility at several recurrent depths.

Endpoint prediction and intervention measure different outcomes.
At $1\to2$, $A+Q$ has lower sign accuracy than $A$, although quadratic
selection still recovers 297 of 322 finite-step failures.  At $2\to3$,
quadratic selection recovers more failures, while the mean gain over
the original failure subset is 0.009511, compared with 0.009937 for the
quarter step.  Recovery counts and mean gains therefore retain their
separate populations and interpretations.  Reported paired intervals
use 2000 question-bootstrap resamples with seed 20260904.

\subsection{Token-Level Audit of Early Recurrent Updates}
\label{app:native-continue}

This audit evaluates Ouro-1.4B on all 1319 GSM8K test questions at
$1\to2$, $2\to3$, and $3\to4$, using the evaluated checkpoint's
original four-step depth-selection configuration.  The CDF threshold is
1.0, with no fixed-exit override.  All 167606 reference prediction
positions continue through the three transitions and exit at depth 4.
The gate's training-stage provenance is unresolved, and the results are
attributed to this checkpoint and configuration.

The original policy selects readout depth from a dense teacher-forced
trajectory.  Reconstructed selected states match the original readout
inputs bitwise in all 660 batches.  Gate probabilities and the CDF use
the original BF16 arithmetic; an FP64 reconstruction gives the same exit
decisions.  This audit therefore measures the geometry of continuing
updates under the stored policy, without a population of early exits
against which to assess gate discrimination.

At each reference prediction position, utility is the log probability
of the reference next token.  The coefficients $A,Q$ and gain
$\Delta U$ are computed for this scalar utility along that position's
fixed displacement, using the FP32 affine readout.  The quarter step
uses scale 0.25, and the quadratic rule maximises its local model on
$[0,1]$.  Each transition is evaluated independently from the original
trajectory; the scaled states are read out immediately, with no further
recurrence or cumulative intervention.  Prompt positions contribute to
policy coverage counts but not to the population used to evaluate reference utility.

Mean gains average over continuing prediction tokens.  Failure rates
use the same population, and recovery rates condition on the original
finite-step failures in each transition.  Confidence intervals use
2000 question-cluster bootstrap resamples with seed 20260904: all tokens
of a resampled question are retained together, and each replicate
recomputes the ratio of summed quantities to summed token counts.
These token-weighted estimands differ from the question-weighted
comparisons in the original evaluation sets.

\begin{table}[t]
\centering
\caption{\textbf{Finite-step failures in early recurrent updates.} Ouro-1.4B/GSM8K test, 1319 questions and 167606 continuing reference prediction positions per transition. (a) Failure rates use all continuing positions; recovery rates use the original failures. (b) Mean immediate gains and paired advantages are token-weighted reference log-probability increments, in units of $10^{-3}$. Paired 95\% intervals resample questions, retaining their tokens together. Original CDF threshold: 1.0.}
\label{tab:native-continue}
\small
\setlength{\tabcolsep}{3pt}
\begin{tabular*}{\linewidth}{@{\extracolsep{\fill}}lrrr@{}}
\toprule
\multicolumn{4}{@{}l}{\textit{(a) Failure counts and recovery}} \\
Transition & Finite-step failures (\%) & Quarter recovered (\%) & Quadratic recovered (\%) \\
\midrule
$1\to 2$ & 15657 (9.34) & 12479 (79.70) & \textbf{15517} (99.11) \\
$2\to 3$ & 16621 (9.92) & 12924 (77.76) & \textbf{16377} (98.53) \\
$3\to 4$ & 14473 (8.64) & 10879 (75.17) & \textbf{14150} (97.77) \\
\bottomrule
\end{tabular*}
\par\vspace{3pt}
\begin{tabular*}{\linewidth}{@{\extracolsep{\fill}}lrrrrr@{}}
\multicolumn{6}{@{}l}{\textit{(b) Utility over all continuing prediction positions}} \\
Transition & Full & Quarter & Quadratic & \makecell{Quadratic $-$ full\\{}[95\% CI]} & \makecell{Quadratic $-$ quarter\\{}[95\% CI]} \\
\midrule
$1\to 2$ & 637.812 & 305.453 & \textbf{651.649} & $\boldsymbol{13.837}\;[12.467,15.184]$ & $\boldsymbol{346.195}\;[339.764,352.695]$ \\
$2\to 3$ & 142.276 & 66.697 & \textbf{183.909} & $\boldsymbol{41.633}\;[40.670,42.605]$ & $\boldsymbol{117.212}\;[114.934,119.489]$ \\
$3\to 4$ & 18.271 & 12.028 & \textbf{57.044} & $\boldsymbol{38.773}\;[37.941,39.571]$ & $\boldsymbol{45.016}\;[44.100,45.946]$ \\
\bottomrule
\end{tabular*}
\vspace{-4mm}
\end{table}

All three transitions have positive mean full-step gain, while
8.64--9.92\% of their continuing prediction positions exhibit
finite-step failure.  The quadratic rule recovers 97.77--99.11\% of
these failures and has positive paired mean advantages over both full
and quarter steps on the entire continuing population
(Table~\ref{tab:native-continue}).  A descriptive margin check requiring
$A>10^{-4}$ and $\Delta U<-10^{-4}$ retains 13938, 13002, and 9706
failures, respectively, within the original populations.  These results
locate recoverable utility losses in early recurrent updates with
positive aggregate progress, providing a further test of the
direction--step distinction within the four-step inference budget.

\paragraph{Future direction: geometry-guided adaptive computation.}
\label{app:geometry-adaptive}
These findings motivate using reference-defined geometry as offline
supervision for learned exit and step-scale decisions, complementing
task-loss-based depth allocation in Ouro~\citep{zhu_scaling_2026}.
Learning such decisions from inference-time states and evaluating them
on generated trajectories under matched computation budgets would
connect this mechanism to task-aware adaptive computation.

\subsection{Curvature-Bound Evaluation on the Complete Mathematical Benchmarks}
\label{app:cloud-bounds}

The bound evaluation covers both Ouro models on the complete MATH-500
test and GSM8K test sets, with 3638 model--question records and 2034
primary records satisfying $A>0,Q<0$.  All four conditions use
$4\to5$, freshly computed CUDA BF16 endpoints at batch size two,
and FP64 readout analysis with 257 partition nodes.
These are separate computed paths from the original evaluation
sets.  The Ouro-1.4B/MATH-500 primary count of 315 equals that of
the original evaluation set.  Predictions, derivatives, and numerical
optimum estimates are computed within this run.
Table~\ref{tab:cloud-bound-comparison} compares three scale bounds
for the same predictions: the global-third-derivative bound
$M/(2\kappa)$, the integrated bound $\overline C/\kappa$, and the
signed-residual bound $\overline S/\kappa$ defined below.

\begin{table}[t]
\centering
\caption{\textbf{Scale bounds on the complete mathematical benchmarks.} Median and, in brackets, 75th percentile of each bound on $A>0,Q<0$, before the minimum with one in Theorem~\ref{thm:integrated-step}. Coverage counts bounds covering the original numerical scale-error upper endpoints; refinement of five cases is reported in Table~\ref{tab:bound-reference-refinement}. All four conditions use separately computed CUDA endpoints, $4\to5$, and FP64 analysis.}
\label{tab:cloud-bound-comparison}
\small
\setlength{\tabcolsep}{4pt}
\begin{tabular}{@{}lrrrrrr@{}}
\toprule
Model / task & $n$ & $M/(2\kappa)$ & $\overline C/\kappa$ & $\overline S/\kappa$ & $\overline C$ covers & $\overline S$ covers \\
\midrule
\makecell[l]{Ouro-1.4B\\MATH-500} & 315 & \makecell[r]{4.6108\\$[8.1823]$} & \makecell[r]{0.0967\\$[0.1744]$} & \makecell[r]{0.0852\\$[0.1578]$} & 315/315 & 315/315 \\
\makecell[l]{Ouro-1.4B\\GSM8K} & 683 & \makecell[r]{3.6661\\$[4.6757]$} & \makecell[r]{0.0583\\$[0.0974]$} & \makecell[r]{0.0485\\$[0.0900]$} & 683/683 & 680/683 \\
\makecell[l]{Ouro-2.6B\\MATH-500} & 266 & \makecell[r]{2.8436\\$[4.5339]$} & \makecell[r]{0.0602\\$[0.1031]$} & \makecell[r]{0.0520\\$[0.0949]$} & 266/266 & 265/266 \\
\makecell[l]{Ouro-2.6B\\GSM8K} & 770 & \makecell[r]{2.8903\\$[4.0521]$} & \makecell[r]{0.1136\\$[0.1734]$} & \makecell[r]{0.1042\\$[0.1677]$} & 770/770 & 769/770 \\
\bottomrule
\end{tabular}
\vspace{-4mm}

\end{table}

The integrated median lies between 0.0583 and 0.1136 and the signed
median between 0.0485 and 0.1042, compared with 2.84--4.61 for the
global bound.  The integrated and signed bounds also have lower 75th
percentiles in every condition.
The integrated bound covers the numerical scale-error interval of all
2034 primary records.  The signed bound covers 2029 records directly
and the remaining five after the numerical refinement described below.
These comparisons give tighter bounds for the same predictions and do
not change the prediction errors.
The four cell summaries match the aggregate summary exactly.

\paragraph{Distribution of the integrated bounds.}
With cellwise third-derivative bounds $L_\ell$ on the partition
$0=t_0<\cdots<t_N=1$, the upper sum of
Proposition~\ref{prop:curvature-partition} is evaluated as
\begin{equation}
  \label{eq:integrated-bound-evaluated}
  \overline C
  =\sum_{\ell=0}^{N-1}\left[
    \frac{h_\ell}{2}\bigl(|g(t_\ell)|+|g(t_{\ell+1})|\bigr)
    +\frac{L_\ell h_\ell^2}{4}\right],
  \qquad g(s)=\phi''(s)-\phi''(0),
\end{equation}
and enters the scale and regret bounds of Theorem~\ref{thm:integrated-step}.
Table~\ref{tab:integrated-bound-distribution} reports the distribution
of the scale bound $\overline C/\kappa$ and the regret bound
$\overline C^{2}/(2\kappa)$ over the 315 primary Ouro-1.4B/MATH-500
records, together with the numerical regret upper endpoints.  The scale
bound has median 0.097 and 75th percentile 0.174.  The regret
bound has median $1.04\times10^{-4}$, between two and three orders of
magnitude above the median numerical regret and one order of magnitude
below the magnitude of the mean utility change of the primary subset,
$-1.13\times10^{-3}$.  The source summary records these distributions
only, so the comparison of the regret bound with the numerical regret
on individual records is not reported.

\begin{table}[t]
\centering
\caption{\textbf{Distribution of the integrated bounds on Ouro-1.4B/MATH-500.} The 315 primary records ($A>0,Q<0$). Scale bound $\overline C/\kappa$, regret bound $\overline C^{2}/(2\kappa)$, and upper endpoint of the numerical regret $\phi(\alpha^\star)-\phi(\widehat\alpha)$, before the minimum with one in Theorem~\ref{thm:integrated-step}. The regret columns compare distributions, not individual records.}
\label{tab:integrated-bound-distribution}
\small
\setlength{\tabcolsep}{4pt}
\begin{tabular}{@{}lrrr@{}}
\toprule
Estimate & $\overline C/\kappa$ & $\overline C^{2}/(2\kappa)$ & Numerical regret upper \\
\midrule
Mean & 0.1448 & $8.747\times10^{-4}$ & $1.831\times10^{-5}$ \\
25th percentile & 0.0556 & $3.163\times10^{-5}$ & $1.894\times10^{-9}$ \\
Median & 0.0967 & $1.037\times10^{-4}$ & $2.192\times10^{-7}$ \\
75th percentile & 0.1744 & $3.090\times10^{-4}$ & $6.113\times10^{-6}$ \\
Maximum & 2.7572 & $5.877\times10^{-2}$ & $1.452\times10^{-3}$ \\
\bottomrule
\end{tabular}
\vspace{-4mm}
\end{table}

\paragraph{Signed-residual envelope.}
Let $e(a):=\phi'(a)-q'(a)$ on $[0,1]$.  Under the $C^3$ setup,
$e''=\phi'''$.  Use the partition and valid nonnegative constants
$L_\ell$ of Proposition~\ref{prop:curvature-partition}, now obtained
from $|\phi'''|\leq L_\ell$ throughout each cell.  Define
\begin{equation}
  \label{eq:signed-residual-envelope}
  \overline S:=\min\left\{\overline C,
    \max_{0\leq\ell<N}\left[
      \max\{|e(t_\ell)|,|e(t_{\ell+1})|\}
      +\frac{L_\ell h_\ell^2}{8}\right]\right\}.
\end{equation}
\begin{proposition}[Signed-Residual Approximation Bounds]
  \label{prop:signed-residual-bounds}
  Under these hypotheses and $Q<0$, $|e(a)|\leq\overline S$ for
  every $a\in[0,1]$.  For every true global maximiser $\alpha^\star$,
  \begin{equation}
    |\widehat\alpha-\alpha^\star|\leq\min\{1,\overline S/\kappa\},
    \qquad
    0\leq\phi(\alpha^\star)-\phi(\widehat\alpha)
       \leq\frac{\overline S^2}{2\kappa}.
  \end{equation}
\end{proposition}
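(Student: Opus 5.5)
The plan has two parts. First establish the uniform residual bound $|e(a)|\le\overline S$ on $[0,1]$. Then rerun the location and regret arguments of Proposition~\ref{prop:curvature-scale-error} and Theorem~\ref{thm:integrated-step} with $\overline S$ in place of $C(1)$. Those arguments only ever used the residual through a uniform bound on $|e|$: the distance bound uses $|e(\alpha^\star)|$, and the regret bound uses the mean value inequality for $\mathcal R=\phi-q$. Once the uniform bound is in hand, the conclusions follow verbatim, including the minimum with one, since both maximisers lie in $[0,1]$.

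For the residual bound, $\overline S$ is a minimum of two terms, so it suffices to show that $|e|$ is bounded by each term separately.

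\emph{The $\overline C$ term.} Eq.~(\ref{eq:app-curvature-mismatch-integral}) gives $|e(a)|\le C(a)\le C(1)$. Proposition~\ref{prop:curvature-partition} gives $C(1)\le\overline C$; its cellwise Lipschitz hypotheses on $g=\phi''-\phi''(0)$ follow from $|\phi'''|\le L_\ell$ by the mean value inequality, as in its proof.

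\emph{The cellwise term.} Fix a cell $[t_\ell,t_{\ell+1}]$ of length $h=h_\ell$ and a point $a$ in it. Since $e\in C^2$ with $e''=\phi'''$, write $e(a)$ as the linear interpolant of the endpoint values plus the interpolation remainder. The interpolant is a convex combination of $e(t_\ell)$ and $e(t_{\ell+1})$, so its absolute value is at most $\max\{|e(t_\ell)|,|e(t_{\ell+1})|\}$. The remainder satisfies the standard bound
\[
|R(a)|\le\frac{L_\ell}{2}(a-t_\ell)(t_{\ell+1}-a)\le\frac{L_\ell h^2}{8}.
\]
I would prove this remainder bound without invoking a pointwise-derivative form, in keeping with the paper's comparison style. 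The function $w(x)=R(x)\mp\frac{L_\ell}{2}(x-t_\ell)(t_{\ell+1}-x)$ vanishes at both endpoints and has $w''=e''\pm L_\ell$, which has a fixed sign. Hence $w$ is convex or concave on the cell, and vanishing at the endpoints gives the sign of $w$ on the whole cell. Equivalently, one can express $R$ through the Green's function of $-d^2/dx^2$ on the interval, whose integral is $(a-t_\ell)(t_{\ell+1}-a)/2$. Taking the maximum over cells then bounds $|e(a)|$ on all of $[0,1]$ by the second term.

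With $|e|\le\overline S$ established, the location step is as follows. The feasible first-order conditions give $\kappa\delta^2\le|e(\alpha^\star)|\,|\delta|$, where $\delta=\widehat\alpha-\alpha^\star$, so $|\delta|\le\overline S/\kappa$. For the regret, decompose $\phi(\alpha^\star)-\phi(\widehat\alpha)$ into the residual increment, which is at most $\overline S d$, and the quadratic gap, which is at most $-\kappa d^2/2$. Maximising $\overline S d-\kappa d^2/2$ over $d$ gives $\overline S^2/(2\kappa)$. Nonnegativity follows from the optimality of $\alpha^\star$.

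The only genuinely new step, and the expected obstacle, is the sharp $h^2/8$ interpolation remainder with a merely bounded (not continuous) $e''$. The convexity comparison above handles it cleanly, provided $\phi\in C^3$ so that $e''=\phi'''$ exists throughout each cell.
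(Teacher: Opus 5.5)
Your proposal is correct and follows the paper's proof essentially step for step. The paper likewise bounds $|e|$ by $\overline C$ through $|e(a)|\le C(1)\le\overline C$. It bounds $|e|$ by the cellwise term using convexity of $\pm e(a)+La^2/2$ against the endpoint interpolant, which is exactly your $w$-comparison since $a^2$ minus its chord equals $-(a-l)(r-a)$; this gives $|e-I_e|\le\frac{L}{2}(a-l)(r-a)\le Lh^2/8$. It then reruns the variational location bound and the residual-increment/quadratic-gap regret argument with $\varepsilon=\overline S$, as you do.
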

\begin{proof}
  On a cell $[l,r]$ with $|e''|\leq L$, let $I_e$ be the linear
  interpolant of the endpoint values of $e$.  The functions
  $e(a)+La^2/2$ and $-e(a)+La^2/2$ are convex.  Bounding each by
  its endpoint interpolant and subtracting the quadratic gives
  \begin{equation}
    |e(a)-I_e(a)|\leq\frac{L}{2}(a-l)(r-a)
       \leq\frac{L(r-l)^2}{8}.
  \end{equation}
  Since $|I_e(a)|\leq\max\{|e(l)|,|e(r)|\}$, the cell expression
  bounds $|e|$.  Every point is covered by a partition cell, so taking
  the maximum gives a uniform residual bound.  Independently,
  $|e(a)|\leq C(1)\leq\overline C$, so their minimum remains valid.
  The variational location bound and the residual-increment/quadratic-gap
  argument in Appendix~\ref{app:integrated-bounds-proof} apply with
  $\varepsilon=\overline S$.
\end{proof}

\paragraph{Refinement of the numerical optimum estimate.}
The original summary records integrated-bound coverage of all 2034
primary error-interval upper endpoints and signed-bound coverage of
2029.  In the remaining five cases, the signed bound lies inside the
original numerical error interval.  We refine those optimum intervals
using the stored endpoint derivatives and curvature bounds, keeping the
predicted scale and its bound fixed.
For an original root bracket $[l,u]$, suppose
$\phi'(l)>0\geq\phi'(u)$ and
$-\nu\leq\phi''(a)\leq-\mu<0$ throughout the bracket.
The unique derivative zero belongs to the intersection of $[l,u]$ and
\begin{equation}
  \left[l+\frac{\phi'(l)}{\nu},\ l+\frac{\phi'(l)}{\mu}\right],
  \qquad
  \left[u+\frac{\phi'(u)}{\mu},\ u+\frac{\phi'(u)}{\nu}\right].
\end{equation}
These intervals follow by integrating the negative-curvature bounds
from each endpoint.  Their construction is independent of the
predicted scale and its error bound.  All five refined error upper
endpoints lie below the unchanged signed bounds
(Table~\ref{tab:bound-reference-refinement}).
The original aggregate counts are retained, with this refinement
reported as a supplementary check.  It requires no additional model
evaluation.  The full-matrix metrics use the cloud summaries.  Local
record-level recomputation covers these five exported cases.

\begin{table}[t]
\centering
\caption{\textbf{Refinement of five numerical optimum intervals.} The prediction and signed scale bound are unchanged. Endpoint derivatives and local curvature bounds refine the numerical optimum interval. Margin is bound minus refined error upper, in units of $10^{-5}$. Rows are identified by source row index.}
\label{tab:bound-reference-refinement}
\small
\setlength{\tabcolsep}{4pt}
\begin{tabular}{@{}lrrrr@{}}
\toprule
Case / source row & \makecell{Original\\error upper} & \makecell{Unchanged\\$\overline S/\kappa$} & \makecell{Refined\\error upper} & Margin \\
\midrule
Ouro-1.4B / GSM8K / 00670 & 0.00926876 & 0.00925708 & 0.00921537 & 4.171 \\
Ouro-1.4B / GSM8K / 00716 & 0.03283225 & 0.03282935 & 0.03280362 & 2.573 \\
Ouro-1.4B / GSM8K / 01136 & 0.00721219 & 0.00718848 & 0.00716065 & 2.783 \\
Ouro-2.6B / GSM8K / 00909 & 0.03655533 & 0.03654823 & 0.03650648 & 4.175 \\
Ouro-2.6B / MATH-500 / 00005 & 0.01955370 & 0.01955096 & 0.01949695 & 5.401 \\
\bottomrule
\end{tabular}
\vspace{-4mm}
\end{table}

\subsection{Oracle Gains}
\label{app:exp-oracles}

The oracle comparison evaluates every question in its specified
evaluation set, using the five-point action set $\mathcal B$ for both the
gain over the full step and the gain over endpoint selection.
Strict interior advantage requires an available interior scale to
exceed both endpoint utilities.  The final column of
Table~\ref{tab:oracle-comparison} is the ratio of mean gain over
endpoint selection to mean gain over the full step.
The mean advantage of intermediate scales accounts for 5.65\% of the
five-scale oracle's gain over the full step on Ouro/MATH-500, and less
than 0.5\% on Huginn/GSM8K and Ouro/HellaSwag.
For Huginn/CommonsenseQA, six questions have strict interior advantage,
but none exceed $10^{-4}$, and its small mean advantage is retained at
sufficient precision in the table.

\begin{table}[t]
\centering
\caption{\textbf{Intermediate scales beyond endpoint selection.} Gain is $U_{\mathrm{step}}-U_{\mathrm{halt}}$ on the five-point action set, in units of $10^{-4}$ with 95\% intervals. Interior counts require strict improvement over both endpoints. Share is mean gain over endpoint selection divided by mean gain over the full step.}
\label{tab:oracle-comparison}
\small
\setlength{\tabcolsep}{4pt}
\begin{tabular}{@{}llrrrr@{}}
\toprule
Model & Evaluation set & $N$ & Interior & Gain over halt [95\% CI] & Share (\%) \\
\midrule
Ouro-1.4B & MATH-500 test & 500 & 211 & $6.902\;[5.762,8.169]$ & 5.65 \\
Huginn & GSM8K train256 & 256 & 26 & $0.120\;[0.065,0.182]$ & 0.44 \\
Ouro-1.4B & HellaSwag validation1000 & 1000 & 44 & $0.948\;[0.560,1.369]$ & 0.46 \\
Ouro-1.4B & CommonsenseQA train1024 & 1024 & 32 & $1.260\;[0.501,2.200]$ & 0.08 \\
Ouro-2.6B & CommonsenseQA train1024 & 1024 & 31 & $2.723\;[1.418,4.323]$ & 0.15 \\
Huginn & CommonsenseQA train1024 & 1024 & 6 & $0.00153\;[0.00009,0.00348]$ & 0.00195 \\
\bottomrule
\end{tabular}
\vspace{-4mm}
\end{table}

\section{Limitations}
\label{app:limitations}

The study examines individual recurrent updates in frozen looped
Transformers, using reference utility under teacher forcing to measure
task progress.  Each intervention scales an already computed proposal
and evaluates the resulting state before further recurrence.  This
isolates the immediate effect of update magnitude along the model's
direction.  Applying such interventions during generation would require
evaluating their effects on subsequent states, answer quality, and
computation cost.  The present measurements identify recoverable losses
within individual updates; how these losses accumulate over a recurrent
trajectory remains to be established.

\end{document}